\definecolor{linkcolor}{HTML}{449B52}
\definecolor{citecolor}{HTML}{67001F}
\definecolor{urlcolor}{HTML}{008080}
\newcommand{\condensite}{\textit{condensité}\xspace}
\newcommand{\condensiteNN}{\textit{condensité~(NN)}\xspace}
\newcommand{\condensiteCNN}{\textit{condensité~(CNN)}\xspace}
\newcommand{\condensiteTree}{\textit{condensité~(tree)}\xspace}
\newcommand{\flexcode}{\textit{FlexCode}\xspace}
\newcommand{\flexcodeTree}{\textit{FlexCode~(tree)}\xspace}
\newcommand{\flexcodeKNN}{\textit{FlexCode~(knn)}\xspace}
\newcommand{\flexcodeCNN}{\textit{FlexCode~(CNN)}\xspace}
\newcommand{\drf}{\textit{DRF}\xspace}
\newcommand{\lincde}{\textit{LinCDE}\xspace}
\newcommand{\condensier}{\textit{condensier}\xspace}
\newcommand{\dd}{\mathrm{d}}
\newcommand{\F}{\mathcal{F}_{h}}
\newcommand{\G}{\mathcal{G}}
\newcommand{\N}{\mathcal{N}}
\newcommand{\E}{\mathbb{E}}
\newcommand{\Unif}{\operatorname{Unif}}
\newcommand{\Rad}{\operatorname{Rad}}
\newcommand{\argmin}{\mathop{\arg\min}}
\DeclareMathOperator{\Supp}{Supp}
\newcommand{\X}{\mathcal{X}}
\newcommand{\fstar}{f^\star}
\newcommand{\transformation}{f^\star_h}
\newcommand{\projection}{f^\star_{\F}}
\newcommand{\estimator}{\widehat{f}}
\renewcommand{\P}{P^\star}
\newcommand{\PX}{P_X^\star}
\newcommand{\Q}{Q^\star}
\newcommand{\QnM}{Q_{Mn}}
\newcommand{\Ph}{{\bar{P}_h^\star}}
\newcommand{\Phn}{\bar{P}_{h,n}}
\newcommand{\Radsum}{R_{h,n}}
\newcommand{\bfeps}{\boldsymbol{\epsilon}}
\newcommand{\ER}{\mathcal{E}}
\newcommand{\ERh}{\mathcal{E}_h}
\newcommand{\NFQn}{\|F_h\|_{L^2(\QnM)}}
\newcommand{\phideltahat}{\phi_{h,n}(\hat\delta)}
\newcommand{\phiflat}{\phi^\flat_{h,n}}
\newcommand{\phisharp}{\phi^\sharp_{h,n}}
\newtheorem{theorem}{Theorem}
\newtheorem{proposition}{Proposition}
\newtheorem{assumption}{Assumption}
\newtheorem{lemma}{Lemma}
\title{Transforming Conditional Density Estimation Into a Single Nonparametric Regression Task\\[-1em]}
\author{}
\date{}
\begin{document}

\maketitle
\begin{center}
\begin{minipage}{0.9\linewidth}
\begin{center}
\begin{multicols}{2}
    \textbf{Alexander G. Reisach}\footnotemark\\
    CNRS, MAP5\\
    Université Paris Cité\\
    F-75006 Paris, France
    
    \columnbreak

    \textbf{Olivier Collier}\\
    MODAL’X, UPL\\
    Université Paris Nanterre\\
    F-92000 Nanterre, France

\end{multicols}
\begin{multicols}{2}

    \textbf{Alex Luedtke}\\
    Department of Health Care Policy\\
    Harvard University\\
    Boston, MA 02115, USA

    \columnbreak

    \textbf{Antoine Chambaz}\\   
    CNRS, MAP5\\    
    Université Paris Cité\\ 
    F-75006 Paris, France
\end{multicols}
\end{center}
\end{minipage}
\end{center}
\footnotetext{Correspondence to \url{alexander.reisach@math.cnrs.fr}}

\begin{abstract}
We propose a way of transforming the problem of conditional density estimation into a single nonparametric regression task via the introduction of auxiliary samples.
This allows leveraging regression methods that work well in high dimensions, such as neural networks and decision trees.
Our main theoretical result characterizes and establishes the convergence of our estimator to the true conditional density in the data limit.
We develop \href{https://github.com/Scriddie/condensite}{\condensite}, a method that implements this approach. 
We demonstrate the benefit of the auxiliary samples on synthetic data and showcase that \condensite can achieve good out-of-the-box results.
We evaluate our method on a large population survey dataset and on a satellite imaging dataset. In both cases, we find that \condensite matches or outperforms the state of the art and yields conditional densities in line with established findings in the literature on each dataset.
Our contribution opens up new possibilities for regression-based conditional density estimation and the empirical results indicate strong promise for applied research.
\end{abstract}
\section{Introduction}
Conditional density estimation is concerned with estimating the density of a random variable given a set of other covariate random variables.
It is one of the fundamental problems of statistics and of interest in applications relying on predictive models where summary statistics like the conditional mean or quantiles are insufficient.

Applications of conditional density estimation include many scenarios in economics, causal inference, and machine learning.
In the study of economic inequality, analyses often aim to decompose distributional differences in order to explain them. They can benefit from access to the conditional density for specific covariates to do so on a fine-grained level \parencite{Kneib2023Rage}. In finance, conditional densities are useful for credit risk assessment, asset return estimation, and options pricing.
In causal inference, the predominant statistical approaches \parencite{pearl2009causality,imbens2015causal} follow an interventionist account of causation \parencite[see][]{woodward2005making}, in which causal effects manifest as changes in distributions.
Hence, that describing conditional distributions is of central interest in causality and motivates several works on conditional density estimation, notably \cite{munoz2011super} and \cite{cevid2020distributional}. 
In machine learning, conditional density estimation can aid a range of methodologies, including anomaly detection, probabilistic forecasting, and risk-sensitive policy learning \parencite[see][respectively]{nachman2020anomaly,gneiting2014probabilistic,dabney2018distributional}.

On a rudimentary level, conditional density estimation targets the conditional density $y\mapsto f_{Y|X}(y|x)=f_{X,Y}(x,y)/f_X(x)$ of target $Y$ and covariates $X$ with joint density $(x,y)\mapsto f_{X,Y}(x,y)$ and marginal density $x\mapsto f_X(x)$ of the covariates.
We assume $Y$ to be univariate, and $X$ to be multivariate. 
The challenge is to estimate $f_{Y|X}$ from independently and identically distributed observations $(X_i, Y_i)_{1\leq i\leq n}$ following $f_{X,Y}$.
General ideas for approaching this problem center around either identifying values of $X$ with similar conditional densities and then performing estimation of the corresponding marginal, or on estimating $f_{X,Y}$ and $f_X$ separately and then combining them.
Both approaches face difficulties if $X$ is high-dimensional and data points therefore sparse, a phenomenon known as the ``curse of dimensionality'' \parencite{Bellman1961adaptive}. 
The curse of dimensionality makes density estimation a difficult problem in high dimensions \parencite[see e.g.][Section 4.5]{wasserman2006all}.
Conditional density estimation is usually thought to inherit this challenge, which would render it infeasible in many settings of interest, without prior dimensionality reduction. 
This contrasts with the spectacular success of high-dimensional regression for estimating summary statistics, in particular the mean, using methods based on neural networks and decision trees.
Transforming conditional density estimation into a regression problem that allows for leveraging such methods would open the possibility of transferring their success in high dimensions.
This goal motivates our contribution.

\paragraph{Contribution.}
We propose a way of estimating the conditional density directly and simultaneously for all data points via a single nonparametric regression using auxiliary samples derived from the observations.
This allows for the use of any flexible function approximator such as neural networks or boosted decision trees, while mitigating the overfitting problem common to other flexible conditional density estimators.
In \cref{sec:related_literature}, we outline connections of our contribution to the existing literature.
We formally introduce our approach in \cref{sec:method} and state our theoretical result regarding the convergence of our estimator.
The proof can be found in \cref{app:proof_details}.
We propose \condensite\footnote{Pronounced \textit{kon-dahn-see-tay}, like the French word for ``density''.}, a method that implements our approach, and describe the algorithm and implementation in \cref{sec:implementation}.
We provide a proof of concept and analyze the influence of hyperparameters on synthetic data in \cref{sec:synthetic}.
In \cref{sec:real_world} we provide a detailed analysis of the performance of our method on a large population survey dataset, and an evaluation and comparison on a satellite imaging dataset.
We discuss our findings in \cref{sec:discussion} and provide our conclusion in \cref{sec:conclusion}.

\section{Related Literature}\label{sec:related_literature}

One fundamental approach to conditional density estimation is the parametrization of distributional families using either specific functions \parencite[e.g.\ linear as in][Chapter 14]{gelman2013bayesian}, or flexible function approximators \parencite[e.g.][]{bishop1994mixture}.
Parametric approaches can suffer from model misspecification and have limited flexibility to adapt to different local structures, especially in high-dimensional settings.
In our review of the literature, we therefore focus on nonparametric approaches.
Among these, we draw the following distinction with the goal of connecting our own contribution.
We first cover what we refer to as \textit{density-based} approaches, which estimate the conditional density as a combination of other densities.
Then, we give an overview of \textit{regression-based} approaches, which leverage regression for conditional density estimation.
Other than through conditional densities, conditional laws can also be characterized through conditional quantiles and conditional cumulative distribution functions, giving rise to quantile regression and distribution regression, which are closely related to conditional density estimation. For details on these tasks we refer to \cite{koenker2017handbook} and \cite{Kneib2023Rage}, respectively.

\paragraph{Density-based approaches.}
A conditional density $f_{Y|X}$ can be understood as the ratio of $f_{X,Y}(\cdot,\cdot)$, the joint density of target and covariates, and $f_X(\cdot)$, the density of the covariates.
A simple way of estimating it is by estimating each component separately, e.g.\ using kernel density estimators as in \cite[]{rosenblatt1969conditional}, and then performing division.
In high dimensions, the resulting sparsity of data points means that the density estimation may become unstable, for instance if the bandwidth is not selected suitably, which necessitates the use of complex bandwidth selection procedures \parencite[e.g.][]{hyndman1996estimating,hall2004cross}.
An alternative is proposed by \cite{Efromovich2007}, who use an orthogonal series estimator.
Similarly to the kernel-based methods, this method also struggles in high dimensions due to the curse of dimensionality. This prompts \cite{Efromovich2010dimension} to add adaptive dimension reduction to the orthogonal series approach, which helps in reducing the impact of irrelevant covariates, but the problem remains if many covariates are relevant.
\cite{izbicki2016nonparametric} develop and study an orthogonal series estimator that achieves good performance for many relevant covariates with low intrinsic dimensionality, but struggles with irrelevant covariates.
Another way of approaching conditional density estimation is to apply unconditional density estimation to observations with similar conditional densities.
\cite{cevid2020distributional} utilize random forests as nearest neighbors method \parencite[see][]{lin2006random} in order to construct weighted empirical distributions that can be smoothed into densities.
\cite{gao2022lincde} partition the covariate space using boosted decision trees and perform unconditional density estimation using Lindsey's method \parencite{lindsey1974comparison} for each partition.
Both of \cite{cevid2020distributional} and \cite{gao2022lincde} benefit from leveraging machine learning methods that perform well in high-dimensional settings.

\paragraph{Regression-based approaches.}
Viewing each observation $(X_i, Y_i)$ as a sample from the conditional law $f_{Y|X}(\cdot|X_i)$, finding the conditional density at a given point $X_i$ can be seen as a prediction problem.
Since there are usually few or no samples at the given point, utilizing regression requires some form of smoothing or localization.
The ``double kernel'' \parencite{hall1999methods} approach by \cite{fan1996estimation} lays the foundation for this idea.
It can be understood as performing a localized regression for inference at a given point $(x,y)$.
The targets are evaluations of a first kernel $K_1(Y_i-y)$ for every data point $1\leq i\leq n$, and the covariates are the corresponding $X_i$.
In the regression, the samples are weighted by another kernel evaluation $K_2(X_i-x)$.
The resulting prediction at $x$ gives the conditional density at $(x,y)$, and by scanning over the range of $Y$ one can construct the entire conditional density for $x$.
The semiparametric estimator of \cite{hjort1996locally} also weighs in $X$, but does not perform the same kind of target smoothing.
Instead of smoothing targets, \cite{munoz2011super} localize the dependent variable through binning, and estimate sequential conditional bin probabilities with a super learner.
\cite{sugiyama2010conditional} estimate the conditional density directly via a least-squares density ratio objective developed in \cite{kanamori2009least}.
This method struggles in high dimensions, so \cite{shiga2015direct} add a penalty on irrelevant covariates, but the problem remains for settings with many relevant covariates.
In order to address settings with many relevant and many irrelevant covariates, \cite{izbicki2017converting} propose an orthogonal series method that leans heavily on regression.
Whereas previous orthogonal series methods expand $f_{Y|X}$ in $X$ and $Y$, they expand only in $Y$ and estimate the coefficients as a function of $X$ through regression.
This enables the use of regression methods that perform well in high-dimensions.

\paragraph{Placing our contribution.}
Our approach builds upon ideas from the regression-based conditional density estimation literature. 
At its core is a single regression, akin to \cite{sugiyama2010conditional}. 
However, they target the conditional density directly, which limits the approach to low dimensions, since in high dimensions the data points are too sparse for the regression to perform well.
By contrast, we target an auxiliary approximation of the conditional density, which is defined using a probability kernel.
\cite{fan1996estimation} introduce the same kind of target, but perform a covariate-specific regression that is localized by a kernel similarity of the covariates, which does not work well in high dimensions for the same reason as the other kernel-based methods.
Thus, our approach -- implemented as the method condensité -- can be seen as a combination of the targets from \cite{fan1996estimation} with the regression framing in \cite{sugiyama2010conditional}, allowing it to overcome the limitations of either method alone.
Moreover, unlike these earlier works, \condensite uses powerful nonparametric regression methods that have since become widely available.
\cite{munoz2011super} and \cite{izbicki2017converting} also use such regression methods, but in an indirect fashion that offloads much of the complexity to binning and basis expansion, respectively.
By utilizing regression directly, \condensite opens up the possibility of transferring the success of high-dimensional regression methods such as those in \cite[][]{ke2017lightgbm,paszke2019pytorch} to high-dimensional conditional density estimation.

\section{Method}\label{sec:method}

This section contains two parts. First, we provide an overview of the approach underlying our method, including a formal description of the setting and key idea, and a numerical illustration of how the auxiliary targets for the regression problem are derived.
Second, we outline in what sense our estimator converges, illustrate the intermediate steps of the transformation into a regression problem, and state our theoretical main result.

\subsection{Overview}

\paragraph{Setting and objective.} Suppose there are independent observations
$(X_1, Y_1), \ldots, (X_n,Y_n)$ following the law $P^\star$ on $\X\times[0,1]$ of a generic couple of random variables $(X, Y)$.
We assume $P^\star$ has a joint density $f_{X,Y}^\star$ with respect to a dominating product measure $\mu\otimes\Unif[0,1]$ for some measure $\mu$ on a measurable space $(\X,\mathcal{B})$.
Our goal is to learn the collection of conditional densities of $Y$ given $X$, $\{y\mapsto \fstar(y|x)\colon x\in\Supp(\PX)\}$, where $\PX$ is the marginal law of $X$ under $P^\star$ (note that here, and moving forward, we write conditional laws without the subscript $Y|X$ for notational ease).

\paragraph{Key idea.}
In order to transform conditional density estimation into a regression task, we start by choosing an approximate identity $\{K_h\colon h>0\}$.
We will rely on the properties of the approximate identity, as stated in \cref{app:ApproximateIdentity}, to show that the transformation into a regression task recovers $\fstar$ as $h$ goes to zero.
For instance, $K_h$ could be the density of the centered Gaussian law with variance $h^2$.

For our estimator, we choose a closed and convex class $\F$ of functions $\X\times[0,1]\to\mathbb{R}_+, (x,y) \mapsto f(y|x)$. We ensure that there exists a constant $c>0$ such that, for all $h>0$, $h\|K_h\|_\infty\leq c$ and $h\|f\|_\infty\leq c$ for every $f\in\F$ (hence the $h$ in the index of $\F$).

We set up our regression as follows. For each $1\leq i\leq n$, independently of the observations, we draw $M$ auxiliary samples $Y_{i1}^\prime,\ldots, Y_{iM}^\prime$ independently from $\Unif[0,1]$ and compute every $K_h(Y_i - Y'_{im})$ -- the closer $Y_{im}^\prime$ is to $Y_i$, the larger the result.
We then find the best $f\in\F$ to approximate the $K_h(Y_i-Y_{im}^\prime)$ by $f(Y_{im}^\prime|X_i)$ across all data points with $1\leq i\leq n$ and $1\leq m\leq M$ with respect to the least-squares criterion
\begin{align}
    \estimator \coloneq \underset{f\in\F}{\arg\min} \sum_{i=1}^n\sum_{m=1}^M\left[K_h(Y_{i}-Y'_{im}) - f(Y'_{im}| X_i)\right]^2.
    \label{eq:key_idea_empirical}
\end{align}
We analyze in what sense $\estimator$ approximates the true conditional density $\fstar$.
For this we rely on the theoretical counterpart to \cref{eq:key_idea_empirical} given by
\begin{align}
    \projection \coloneq \argmin_{f\in\F}\int_{x\in\X}\int_{y\in[0,1]}\int_{y'\in[0,1]^M}\sum_{m=1}^M\left[K_h(y-y'_m)-f(y'_m| x)\right]^2f_{X,Y}^\star(x,y)\prod_{m=1}^M \dd y_m^\prime\;\dd y\;\mu(\dd x).
    \label{eq:key_idea_theoretical}
\end{align}
Note that $M$ affects the estimation of $\estimator$ in \cref{eq:key_idea_empirical}, but the inner integral in \cref{eq:key_idea_theoretical} is the same regardless of the value of the hyperparameter $M$, so $\projection$ is invariant to the choice of $M$.

\paragraph{Illustration.}
For illustration, we provide a simple example using the two-dimensional data-generating mechanism with $X\sim\N(0,1)$ and $Y|X \sim \N(X, 0.25+X^2)$.
\cref{fig:illustration} shows how the joint law of $(X,Y)$ gives rise to our regression targets (note that we rescale $Y$ to $[0,1]$ by min-max scaling).
We sample $n=5000$ observations independently, use $M=100$ auxiliary samples per observation, and set a sharpness of $h=0.03$. 
The joint density shown in \cref{fig:joint_density} is concentrated around the point $(0,0)$, and one can recognize the linear slope and heteroskedasticity of the conditional density.
In \cref{fig:auxiliary_targets} one can see how the transformations $K_h(Y_i-Y_{im}')$ map samples from the joint law into our regression targets mimicking the conditional law. The concentration around $(0, 0)$ is gone, and instead each vertical slice along the x-axis of the right-hand plot approximates a renormalized version of the corresponding slice of the left-hand plot.
\begin{figure}[H]
    \centering
    \begin{subfigure}{.49\linewidth}
        \centering
        \includegraphics[width=\linewidth]{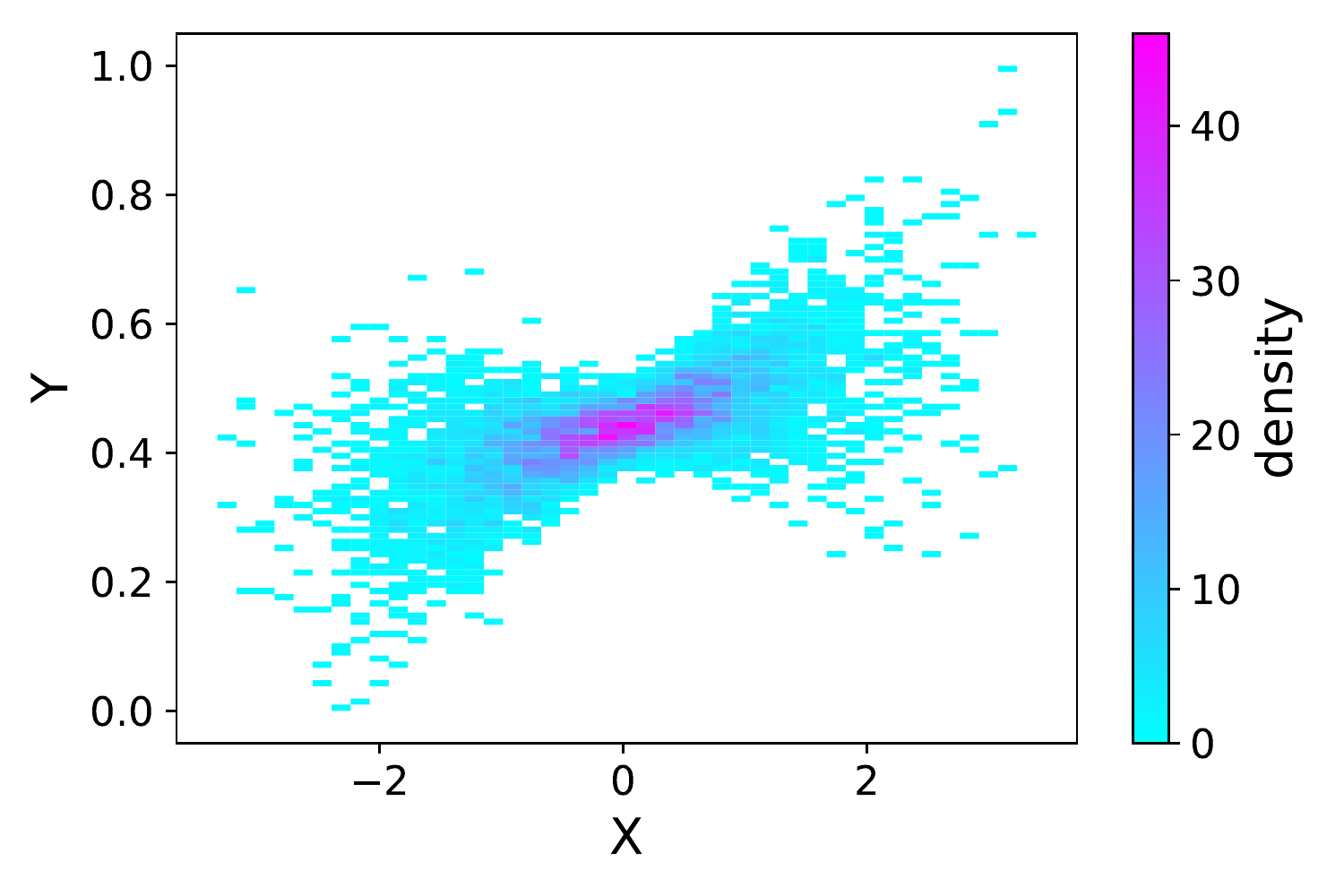}
        \caption{Empirical joint density of $(X,Y)$.}
        \label{fig:joint_density}
    \end{subfigure}
    \begin{subfigure}{.49\linewidth}
        \centering
        \includegraphics[width=\linewidth]{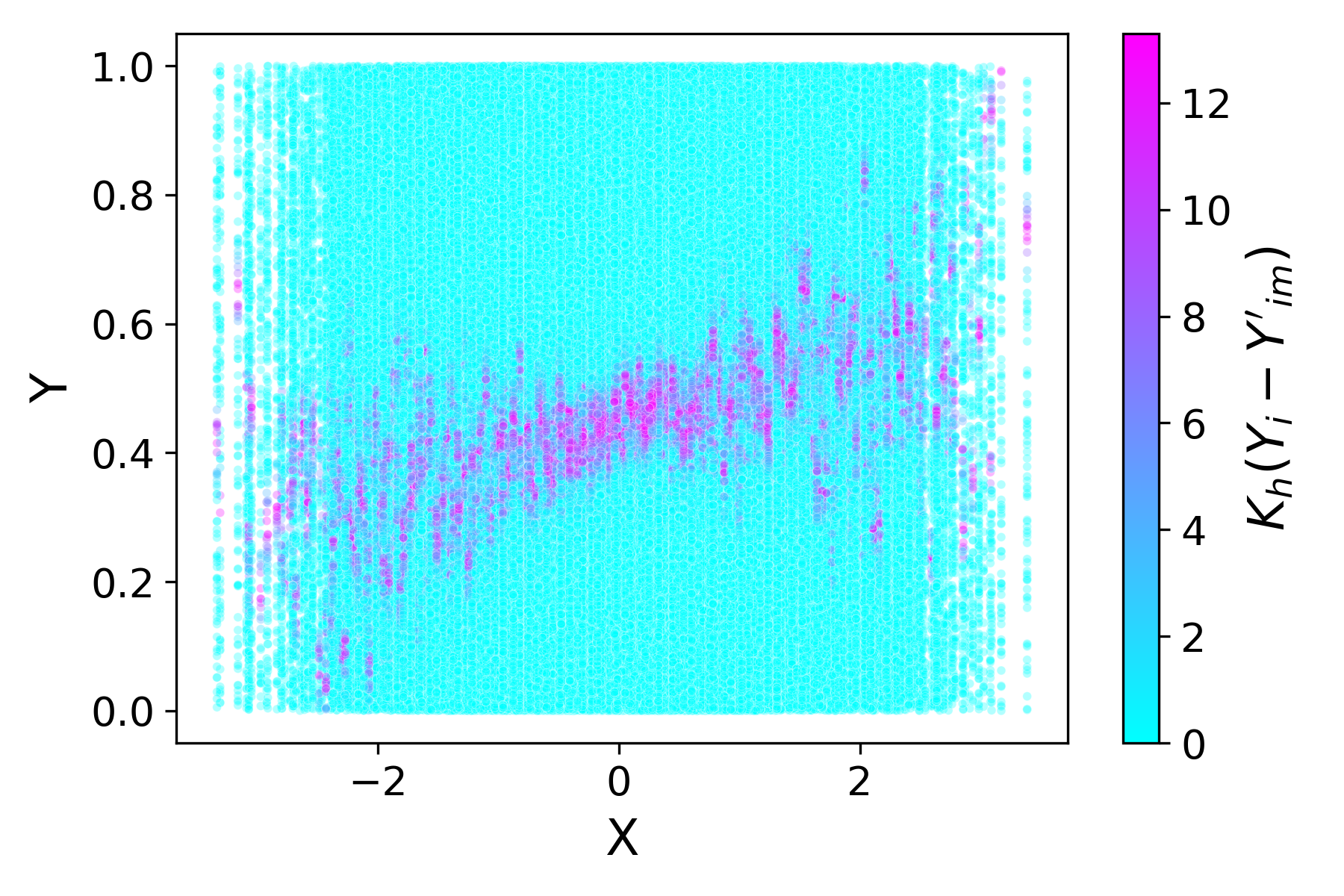}
        \caption{Scatterplot of the regression targets.}
        \label{fig:auxiliary_targets}
    \end{subfigure}
    \caption{Illustration of the key idea.}
    \label{fig:illustration}
\end{figure}

\subsection{Theoretical Main Result}
What follows is a formal definition of the steps involved in defining our estimator, and the result of our theoretical analysis of its convergence.
For reference, we provide glossaries of central objects in \cref{app:glossary}. 
Definitions of random variables are given in \cref{tab:var_glossary},  functions in \cref{tab:f_glossary}, and laws in \cref{tab:p_glossary}.
The full proofs can be found in \cref{app:proof_details,app:ApproximateIdentity}.
We use the following notational convention: for $P$ a measure on some measurable space $(S, \mathcal{B})$ and $\varphi\colon S\to\mathbb{R}$ a measurable function, $P\varphi$ denotes the integral $\int \varphi dP$; in particular, $P\varphi^2 = \|\varphi\|^2_{L^2(P)}$.

Let $\ell$ be the loss function mapping any $f\in\F$ to the function $\ell[f]$ characterized by
\begin{align*}
    \ell[f](x,\bar{y}',\bar{z}) = \frac{1}{M}\sum_{m=1}^M\left(f(y_{m}'| x)-z_{m}\right)^2,
\end{align*} 
with $x\in\X$, $\bar{y}'=(y'_1,\dots,y'_M)\in[0,1]^M$, and $\bar{z}=(z_1,\dots,z_M)\in\mathbb{R}_+^M$.
This corresponds to $1/M$ times the inner summand of \cref{eq:key_idea_empirical} and the integrand of \cref{eq:key_idea_theoretical}.

Let $\Ph$ be the joint law of $(X,\bar{Y}',\bar{Z}_h)$, where 
$(X,Y)$ is drawn from $\P$, 
$\bar{Y}'=(Y'_1,\ldots,Y'_M)$ is drawn from $\left(\Unif[0,1]\right)^{\otimes M}$ independently of $(X,Y)$, and 
$\bar{Z}_{h}=(Z_{h,1},\ldots,Z_{h,M})$ with $Z_{h,m}=K_h(Y-\bar{Y}'_m)$ for each $1\leq m\leq M$.
We denote the minimizer of the risk function induced by $\ell$ under the law $\Q=\PX\otimes\Unif[0,1]$ over all square integrable functions as
\begin{align*}
    \transformation \coloneq \argmin_{f\in L^2(\Q)} \left\{\Ph \ell[f]\right\}.
\end{align*}
Since $\{K_h\colon h>0\}$ is an approximate identity, for $h$ approaching zero the convolution $K_h*\varphi$ converges to $\varphi$ in $L^p(\mathbb{R})$ for any function $\varphi\in L^p(\mathbb{R})$ with $p\geq1$, and converges pointwise to $\varphi$ for any bounded and uniformly continuous $\varphi$ (see \cref{prop:approx_iden} in \cref{app:ApproximateIdentity}).
By \cref{eq:transformation_h_limit} it follows that $\transformation$ converges to $\fstar$ as $h$ goes to zero \parencite[cf.][Equation 2.1]{fan1996estimation}.

The function $f^\star_{h,\F}$ defined by \cref{eq:key_idea_theoretical} is the projection of $f_h^\star$ onto $\F$ and can be written as
\begin{align*}
    \projection \coloneq \argmin_{g\in\F}\left\{Q^\star(g-f_h^\star)^2\right\}.
\end{align*}
This is the function approximated by the empirical risk minimizer defined in \cref{eq:key_idea_empirical} and below as
\begin{align*}
    \estimator \coloneq \argmin_{f\in\F}\left\{\Phn \ell[f]\right\},
\end{align*}
where $\Phn$ is the empirical law that puts mass $1/n$ on every $(X_i,\bar{Y}_i^\prime,\bar{Z}_{hi})$.
A schematic of the relationship between $\fstar,\transformation,\projection,\estimator$ can be seen in \cref{fig:targets}.
\vskip1em

We assess the proximity of $f\in\F$ to $\transformation$ through the excess risk induced by loss $\ell$ with respect to $\projection$ which we denote as
\begin{align}
    \ER_h(f) &\coloneq \Q(f-\transformation)^2- \inf_{g\in\F} \Q(g-\transformation)^2\nonumber\\
    &=\Q(f-\transformation)^2-\Q(\projection-\transformation)^2\nonumber\\
    &=\Ph(\ell[f]-\ell[\projection]).
    \label{eq:ER_loss}
\end{align}
\noindent
\begin{figure}[H]
    \centering
    \begin{tikzpicture}
        \node (f) at (-0.5,2.3) {$\fstar$};
        \node (fh) at (1,2) {$\transformation$};
        \draw[->,dashed] (f) to[bend left] (fh);
        \node[align=center] (annot) at (1.4,3) {transformation into\\a regression task};
        \node[align=center] (annot2) at (1.9,1.2) {projection};
        \node (proj) at (1,0) {$\projection$};
        \draw (0,-.5) ellipse (3cm and 1.3cm);
        \node (F) at (3.5,-.5) {$\F$};
        \draw[->] (fh) -- (proj);
        \node (est) at (-1.5, -.8) {$\estimator$};
        \node[align=center] (annot3) at (.6,-.75) {approximation};
        \draw[->,decorate,decoration={snake,amplitude=1mm,segment length=5mm}] (est) -- (proj);
        \draw[gray] (-4,-2.5) rectangle (4.8,4); %
    \end{tikzpicture}
    \caption{Illustration of the transformation of the conditional density estimation problem into a regression task. We use an approximate identity with bandwidth $h$ to define the regression task, choose a function class $\F$ for the regression, and optimize the fit on a set of observations.}
    \label{fig:targets}
\end{figure}
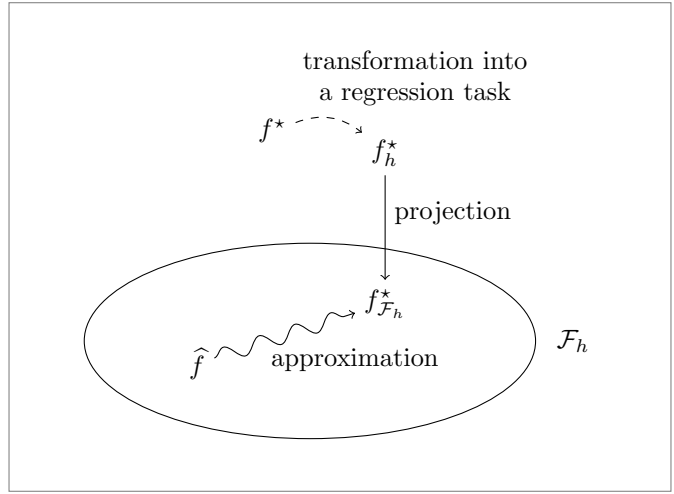

\begin{assumption}
Suppose that $\F$ is chosen such that its complexity in terms of covering numbers is controlled by the following condition \parencite[][Section 2.5.1]{Vaart1996}: there exist a measurable envelope $F_h$, constants $A>0$ and $\rho \in [0,1)$ such that for every probability measure $Q$ on $\X\times[0,1]$ with finite support, for all $\varepsilon>0$, 
    \begin{align}
        \log N(\varepsilon,\F,L^2(Q)) \leq \left(\frac{A\|F_h\|_{L^2(Q)}}{\varepsilon}\right)^{2\rho}.
        \label{eq:covering_number_condition}
    \end{align}
    \label{ass:cov_num}
\end{assumption}
\noindent
This implies the finiteness of the uniform entropy integral $\sup_Q\int_0^1 \sqrt{1 + \log N(\varepsilon,\F,L^2(Q))}\dd\epsilon$.
The excess risk of our estimator, as defined in \cref{eq:ER_loss}, is bounded by the following concentration inequality.
\begin{theorem}
    [Excess risk bound]
    Under \cref{ass:cov_num}, there exists a constant $C>0$ such that, for all tuning parameters $M \geq 1$ and $h > 0$, for all $t>0$,
    \begin{align}
        \Ph\left(\ERh(\estimator) \geq C\left[
            \frac{c^2 2^{1-\rho}}{h^2\sqrt{n}(1-\rho)} 
            + \frac{t}{n}\left(\sqrt{\left(\frac{2c}{h}\right)^2+\frac{1}{8}}-\frac{2c}{h}\right)^{-2}\right]\right)\leq e^{1-t}.
        \label{eq:theorem}
    \end{align}
    \label{thm:1}
\end{theorem}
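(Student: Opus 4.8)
The plan is to prove \cref{thm:1} as an application of a standard one-step localization / peeling argument for empirical risk minimizers over classes with polynomial uniform entropy (in the style of \cite{Vaart1996}, Section~3.2.5, or Koltchinskii's concentration bounds). The first step is to rewrite the excess risk in its variational form: since $\estimator$ minimizes $\Phn \ell[\cdot]$ over $\F$ and $\projection$ minimizes $\Ph\ell[\cdot]$ over $\F$, for every $f \in \F$ we have $(\Phn - \Ph)(\ell[f] - \ell[\projection]) \geq -\,\text{(empirical fluctuation)}$, and in particular $\ERh(\estimator) = \Ph(\ell[\estimator] - \ell[\projection]) \leq (\Ph - \Phn)(\ell[\estimator] - \ell[\projection])$. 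So the whole argument reduces to controlling the supremum of the empirical process $f \mapsto (\Ph - \Phn)(\ell[f] - \ell[\projection])$ over $\F$, localized to the sublevel set $\{f : \ERh(f) \leq \delta\}$.

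Second, I would set up the modulus of continuity $\phidelta := \E \sup_{f \in \F,\, \ERh(f) \leq \delta} |(\Ph - \Phn)(\ell[f] - \ell[\projection])|$ and bound it via symmetrization and the uniform-entropy chaining bound (Dudley's integral). The key structural facts to feed in are: (i) the loss $\ell[f]$ is a bounded, Lipschitz contraction of $f - z$ on the relevant range — since $\|f\|_\infty, \|K_h\|_\infty \leq c/h$, the values $f(y_m'|x)$ and $z_m = K_h(y - y_m')$ all lie in $[0, c/h]$, so $|\ell[f](\cdot) - \ell[\projection](\cdot)| \leq (2c/h)\,\frac{1}{M}\sum_m |f(y_m'|x) - \projection(y_m'|x)|$ pointwise, giving a Lipschitz constant of order $c/h$; (ii) the convexity of $\F$ and the fact that $\projection$ is the $L^2(\Q)$-projection of $\transformation$ onto $\F$ give the Bernstein-type variance bound $\Q(f - \projection)^2 \leq \ERh(f)$ (the projection inequality $\langle f - \projection, \projection - \transformation\rangle_{L^2(\Q)} \geq 0$ yields $\Q(f-\transformation)^2 \geq \Q(f - \projection)^2 + \Q(\projection - \transformation)^2$, i.e. exactly $\ERh(f) \geq \Q(f-\projection)^2$); (iii) \cref{ass:cov_num} controls $N(\varepsilon, \F, L^2(Q))$ by $\exp\{(A\|F_h\|_{L^2(Q)}/\varepsilon)^{2\rho}\}$ with $\rho < 1$, so the entropy integral $\int_0^\delta \sqrt{\log N}\,\dd\varepsilon$ converges and evaluates to something like $\frac{\delta^{1-\rho}}{1-\rho}(A\|F_h\|)^\rho$ — this is where the $2^{1-\rho}/(1-\rho)$ factor and the $h^{-2}$ (from squaring the $c/h$ scale of the envelope of $\ell[f] - \ell[\projection]$) in \cref{eq:theorem} come from. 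Combining (i)–(iii) with a contraction-principle step I expect $\phidelta \lesssim \frac{c^2}{h^2}\cdot\frac{2^{1-\rho}}{(1-\rho)\sqrt n}$ up to a $\delta$-dependent refinement, matching the first term of the bound.

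Third, the sub-root / fixed-point and concentration step: having a bound $\phidelta \leq \psi(\delta)$ with $\psi$ sub-root, Talagrand's inequality (or the bounded-differences inequality tailored to the localized empirical process, since $\ell$ is bounded by $(2c/h)^2$-type constants) upgrades the expectation bound to a high-probability bound, producing the $t/n$ deviation term. The precise form of the second bracket in \cref{eq:theorem}, namely $\big(\sqrt{(2c/h)^2 + 1/8} - 2c/h\big)^{-2}$, looks like it arises from solving a quadratic inequality of the shape $a x^2 \leq b x + d$ coming from a Bernstein bound where the variance proxy is bounded by $(2c/h)^2 \cdot (\text{excess risk})$ and the boundedness constant is also of order $c/h$; tracking the exact constants there (the $1/8$ in particular) is bookkeeping. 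Then I would take $\hat\delta$ to be the fixed point $\psi(\hat\delta) = \hat\delta$ and invoke the standard "if $\ERh(\estimator) > \hat\delta$ then a peeling argument on shells $\{2^{j}\hat\delta < \ERh \leq 2^{j+1}\hat\delta\}$ yields a contradiction with probability $\geq 1 - e^{1-t}$" machinery; the geometric sum over shells is what contributes the factor $2^{1-\rho}$ cleanly.

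The main obstacle I anticipate is not any single inequality but getting the constants to line up in the two explicit forms displayed — particularly propagating the envelope $\|F_h\|_{L^2(Q)}$ of $\F$ through the loss composition (where it picks up the $c/h$ Lipschitz factor and becomes the envelope of $\ell[\F] - \ell[\projection]$, hence the $h^{-2}$) while keeping the $\rho$-dependence in the sharp $2^{1-\rho}/(1-\rho)$ shape, and simultaneously arranging the Bernstein step so the deviation term collapses to exactly $(\sqrt{(2c/h)^2+1/8}-2c/h)^{-2}$. I would handle this by doing the entropy-integral and peeling computation symbolically once with generic constants, then specializing. A secondary subtlety is that the loss depends on the auxiliary randomness $\bar Y'$, so the relevant empirical process is indexed by the enlarged observations $(X_i, \bar Y_i', \bar Z_{hi})$ and one must check that \cref{ass:cov_num} — stated for measures on $\X \times [0,1]$ — transfers to covering numbers of $\{\ell[f] : f \in \F\}$ on $\X \times [0,1]^M \times \mathbb{R}_+^M$; this follows because $\ell[f]$ depends on $f$ only through its evaluations, and an $L^2(Q)$-net for $\F$ on the marginal of $(X, Y_m')$ gives an $L^2$-net for $\ell[\F]$ after averaging over $m$, but it needs to be stated carefully so the $M$-invariance claimed for $\projection$ is respected.
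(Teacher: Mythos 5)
Your proposal follows essentially the same route as the paper's proof: the variational bound $\ERh(\estimator)\leq(\Ph-\Phn)(\ell[\estimator]-\ell[\projection])$ localized to sublevel sets, the projection inequality from convexity of $\F$ giving $\Q(f-\projection)^2\leq\ERh(f)$ (the paper's Lemma~1), symmetrization plus a contraction step with Lipschitz constant of order $c/h$, Dudley's entropy integral yielding the $2^{1-\rho}/(1-\rho)$ factor, and Talagrand/fixed-point machinery (the paper invokes Koltchinskii's Theorem~4.3) producing the quadratic whose root is exactly $(\sqrt{(2c/h)^2+1/8}-2c/h)^{-2}$. The only detail you leave slightly open — how the $M$-fold averaging inside $\ell$ interacts with the symmetrized process — is resolved in the paper by Maurer's vector-valued contraction inequality, which is why the dependence on $M$ cancels in the final bound.
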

\noindent
The proof of the theorem is an adaptation of that of \textcite[Theorem 4.3]{koltchinskii2011oracle} using \textcite[Corollary 1]{maurer2016vector}, and is deferred to \cref{app:proof_details}.
For simplicity, we use the constant envelope function $c/h$ which, without loss of generality, upper bounds the envelope $F_h$ used in \cref{ass:cov_num}. The result is improved when using $F_h$ directly.
A function with a larger (constant) envelope may lead to $\projection$ being closer to $\transformation$, but would also lead to a looser bound.
Observe further that, as $h$ goes to zero, the second summand is of the form $\frac{tc^2}{nh^2}(1024+o(1))$ and goes to infinity.
Yet, $h$ must go to zero for $\transformation$ to converge to $\fstar$ by \cref{eq:transformation_h_limit}, so there is a bias-variance trade-off.
Finally, note that \cref{thm:1} does not capture an effect of the choice of the hyperparameter $M$. The forthcoming simulation study in \cref{sec:h_M_interplay} investigates its impact empirically.

\section{Implementation}\label{sec:implementation}
\paragraph{Evaluation.}
We evaluate the discrepancy between our estimator $\estimator$ and the true conditional density $\fstar$ using the integrated squared error (ISE).
The ISE is given by
\begin{align}
    \int\!\!\int \left(\estimator(y|x)-\fstar(y|x)\right)^2 \PX(\dd x) \dd y 
    = &\int\!\!\int \estimator^2(y|x) \PX(\dd x)\dd y 
    - 2\int\!\!\int \estimator(y|x)\fstar(y|x) \PX(\dd x)\dd y+ C,
    \label{eq:ISE}
\end{align}
where $C$ is a constant that does not depend on $\estimator$.
We estimate the ISE up to $C$ (hence the negative values in our experiments).
In our experiments we approximate the ISE over a grid using trapezoidal integration.

\paragraph{Algorithm.}
We provide a modular implementation that allows for using a variety of predictors for conditional density estimation.
It consists of the following two parts.
\begin{enumerate}
    \item \texttt{Condensite}
    \item \texttt{CondensitePredictor}
\end{enumerate}
The \texttt{Condensite} class implements the functionality for transforming samples from the joint law into targets for conditional density estimation, and allows for fitting a \texttt{CondensitePredictor}. In doing so, it takes care of appropriate variable scaling and evaluates the fit on a validation set.
\texttt{CondensitePredictor} is an abstract class that acts as an interface that predictors passed to \texttt{Condensite} have to implement.
In principle, any regression model can serve as the basis for a \texttt{CondensitePredictor}.
This modularity allows our approach to take advantage of a wide range of different implementations. 
Alongside our Python implementation we provide examples using predictors from the \texttt{PyTorch} \parencite{paszke2019pytorch}, \texttt{LightGBM} \parencite{ke2017lightgbm}, and \texttt{scikit-learn} \parencite{pedregosa2011scikit} libraries.
The following pseudocode algorithm captures the essential part of the data fitting functionality of \texttt{Condensite}.

\SetAlCapSkip{.5em}
\SetAlgoSkip{bigskip}
\SetKwFunction{KwReturn}{return}      %
\SetFuncSty{textbf}                   %
\SetNlSty{textbf}{\large}{.}          %
\setlength{\algomargin}{1.8em}        %
\SetKwComment{comment}{\hspace{1em}// }{}
\SetCommentSty{mycommfont}
\newcommand{\mycommfont}[1]{\textcolor{blue}{#1}}
\LinesNumbered
\DontPrintSemicolon
\SetNlSty{textbf}{\small}{.}
\begin{algorithm}[H]
    \setlength{\baselineskip}{1.2\baselineskip}  %
    \KwIn{data $(X_i, Y_i)_{1\leq i\leq n}$,
          number of auxiliary samples $M$,
          sharpness $h$.}
    \KwOut{estimator $\estimator$, validation ISE.}
    Split data into training and validation sets.\\
    Repeat training samples of $X$ a total of $M$ times.\\
    Min-max-scale the training samples of $Y$ to $[0,1]$.\\
    Generate auxiliary samples $\bar{Y}'$ by sampling $M$ points uniformly in $[0,1]$ per training data point.\\
    Compute the targets as $K_h(Y_{i}-Y_{im}')_{}$ per training and auxiliary sample point.\\
    Column-stack the repeated training samples of $X$ and auxiliary samples $\bar{Y}'$ into a feature matrix.\\
    Standardize each feature and the targets.\comment{for training stability}
    Find $\estimator$ by minimizing \cref{eq:key_idea_empirical}.\\
    Compute ISE of $\estimator$ on validation data.\comment{includes inverse scaling of predicted targets}
    \textbf{Return:} $\estimator$, ISE.
    \caption{Fitting \condensite.}
\end{algorithm}

\paragraph{Post-processing.}
To make sure we obtain a density, we 
set $\estimator(\cdot|\cdot) := \max\{0, \estimator(\cdot|\cdot)\}$, and then redefine $\hat{f}(\cdot|\cdot) := \estimator(\cdot|\cdot) / \int_a^b\hat{f}(y|\cdot)\dd y$.
\section{Proof of Concept and Hyperparameter Analysis}\label{sec:synthetic}
We test \condensite on synthetic data from the following three synthetic data-generating mechanisms inspired by \textcite[Section 4.1]{izbicki2016nonparametric}.
The primary goal of our synthetic data experiments is to provide a proof of concept of \condensite in a simple and controlled setting, and to examine the impact of the hyperparameters $h$ and $M$.
For each mechanism, we use $20$ independently standard Gaussian distributed covariates $X$.
\begin{enumerate}
    \item \textbf{Single relevant covariate:} the single covariate $X^{(1)}$  determines the conditional density via
    $Y|X \sim \N(X^{(1)}, 0.25+(X^{(1)})^2)$.
    \item \textbf{Data on manifold:} the conditional density depends on the angle $\theta$ mapped to $[0,2\pi]$ between $X^{(1)}$ and $X^{(2)}$ via
    $Y|X \sim \N(\theta, 0.5)$.
    \item \textbf{Non-sparse data:} the conditional density depends on all covariates via $Y|X \sim \N\left(\operatorname{mean}(X), 0.5\right)$.
    \label{enum:settings}
\end{enumerate}
Note that the first setting is heteroskedastic to make it more challenging; the relevant covariate affects $Y$ as shown in our illustration in \cref{fig:illustration}.
We evaluate two different versions of \condensite:
\begin{itemize}
    \item \condensiteNN: a version using a neural network as predictor.
    \item \condensiteTree: a version using a gradient boosted decision tree as predictor.
\end{itemize}

\subsection{Proof of Concept}\label{sec:PoC}

We train the methods on $10000$ samples and evaluate on another $1000$ samples for each data-generating mechanism.
We choose $M=100$ and $h=10^{-2}$ for the \condensite methods; for details on the hyperparameters, see \cref{app:hyper_condensite}.
For comparison, we run the following methods from the literature:
\begin{itemize}
    \item\lincde \parencite{gao2022lincde},
    \item\drf \parencite{cevid2020distributional},
    \item\flexcode \parencite{izbicki2016nonparametric}, 
    \item\condensier \parencite[based on][]{munoz2011super}.
\end{itemize}
\lincde and \drf employ decision tree variants that have proven successful in various regression and classification contexts.
\flexcode and \condensier utilize flexible regression models. 
We choose the decision tree predictor provided in the \flexcode implementation and call this version \flexcodeTree, for \condensier we keep the inbuilt default predictor.
All four methods promise good performance in high dimensions.
We provide details on the implementation and hyperparameters in \cref{app:hyper_literature}.

\paragraph{Proof of concept results.} The results in terms of ISE (lower is better) are shown in the table below.
The estimates by \condensite are on par with those of the other methods from the literature. The landmark (specific covariates) analysis in \cref{app:landmarks} shows that all methods successfully learn a density close to the ground truth, with \condensiteNN and \lincde providing the smoothest fits.
Note that the same hyperparameter configuration is used across datasets for each method, including the \condensite variants. This suggests that \condensite can achieve good out-of-the-box results using generic choices for the hyperparameters $M$ and $h$. The following analysis explores their influence in greater detail.
\begin{table}[H]
    \small
    \centering
    \begin{tabular}{lccc}
\toprule
 & \textbf{Single relevant covariate} & \textbf{Data on manifold} & \textbf{Non-sparse data} \\
\midrule
\textit{condensier} & $-0.2865$ & $-0.3561$ & $-0.2651$ \\
\textit{condensité (NN)} & $-0.3311$ & $-0.3853$ & $-0.2800$ \\
\textit{FlexCode (knn)} & $-0.2561$ & $-0.2637$ & $-0.2694$ \\
\textit{FlexCode (tree)} & $-0.3068$ & $-0.3701$ & $-0.2584$ \\
\textit{DRF} & $-0.3453$ & $-0.3973$ & $-0.2815$ \\
\textit{LinCDE} & $-0.3364$ & $-0.3964$ & $-0.2832$ \\
\bottomrule
\end{tabular}

\end{table}

\subsection{Hyperparameter Analysis: The Effect of \texorpdfstring{$M$}{\textit{M}} and Its Interaction With \texorpdfstring{$h$}{\textit{h}}}\label{sec:h_M_interplay}
The hyperparameters $M$ and $h$ are at the core of our method. We investigate their interplay on our synthetic data.
We use the same hyperparameters as in the proof of concept experiment (see \cref{app:hyper_condensite}), which yield good results for $h=10^{-2}$ and $M=100$ on all of our data-generating mechanisms.
To investigate the role of $M$ and its interplay with $h$, we run a grid search over $h\in\{10^{-4},10^{-3},10^{-2},10^{-1}\}$ and $M\in\{1, 10, 20, 40, 80, 160, 320\}$ for each mechanism.
\cref{fig:h_M_interplay_NN} shows the results in terms of ISE for \condensiteNN on each of the data-generating mechanisms.
Note that we limit the training to $20$ epochs, 
so it is possible that some of the results would improve with further training and our findings here may reflect the speed of convergence as well as the best attainable performance.
\begin{figure}[H]
    \centering
    \begin{subfigure}{.32\linewidth}
        \centering
        \includegraphics[width=\linewidth]{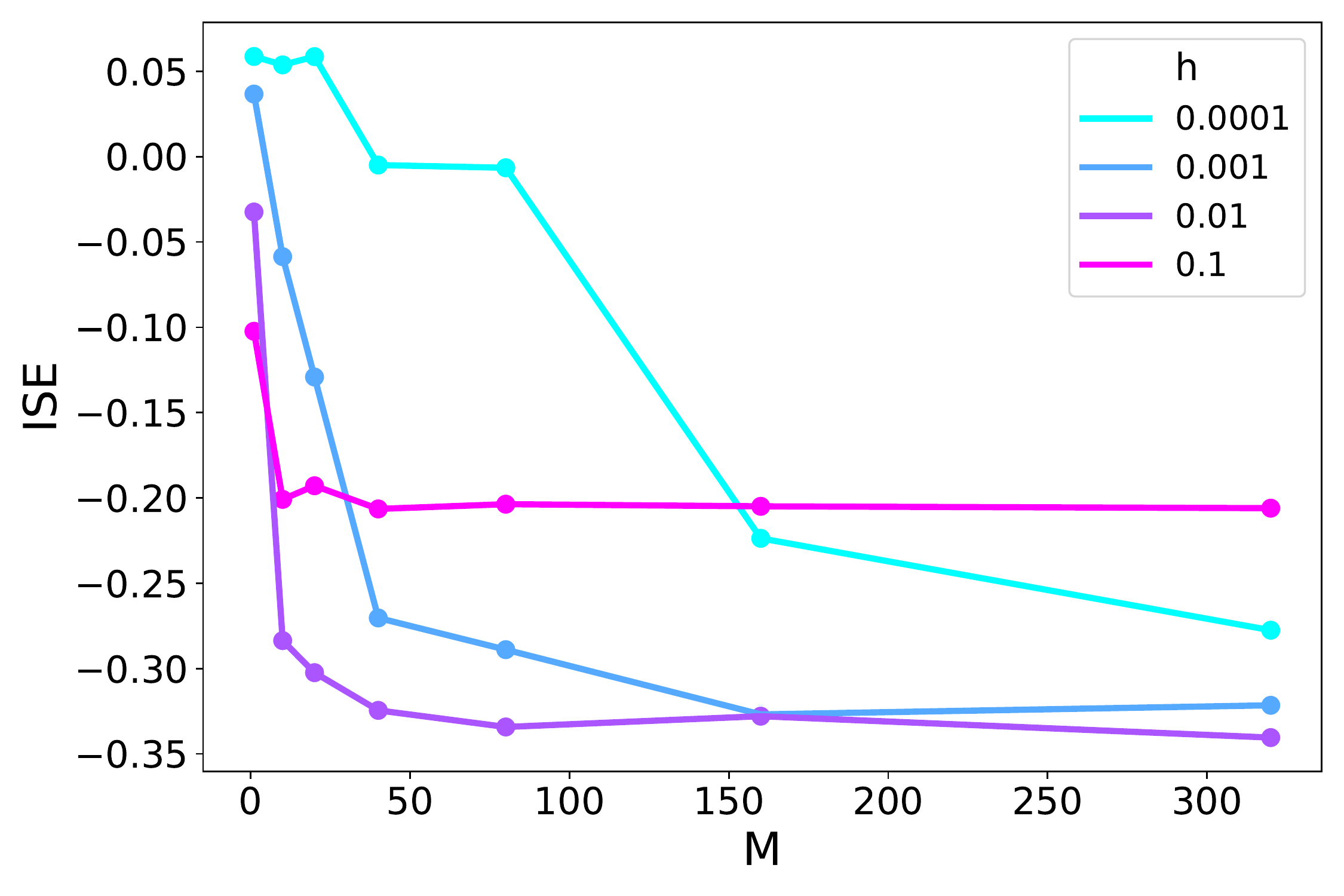}
        \caption{Single relevant covariate.}
    \end{subfigure}
    \begin{subfigure}{.32\linewidth}
        \centering
        \includegraphics[width=\linewidth]{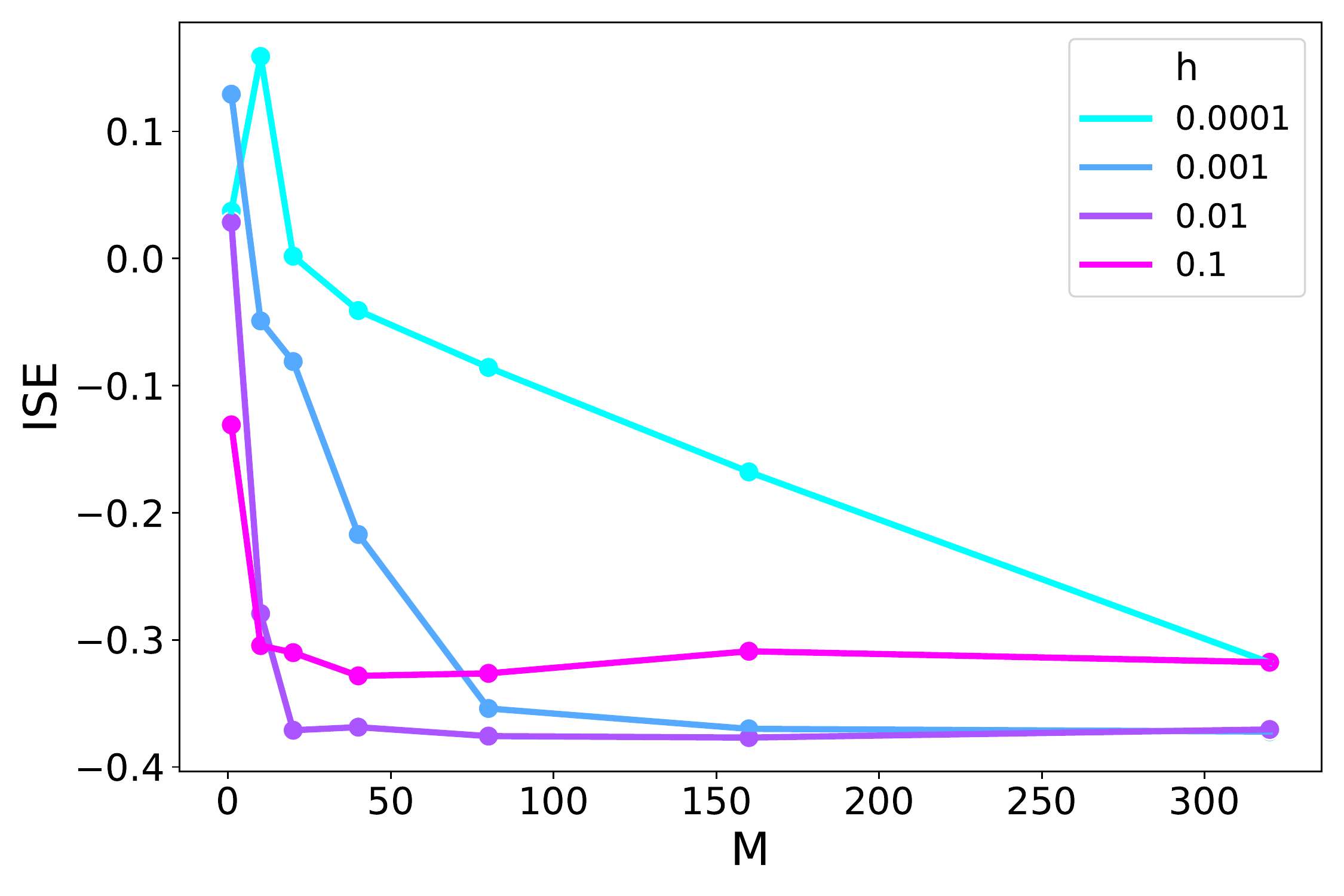}
        \caption{Data on manifold.}
    \end{subfigure}
    \begin{subfigure}{.32\linewidth}
        \centering
        \includegraphics[width=\linewidth]{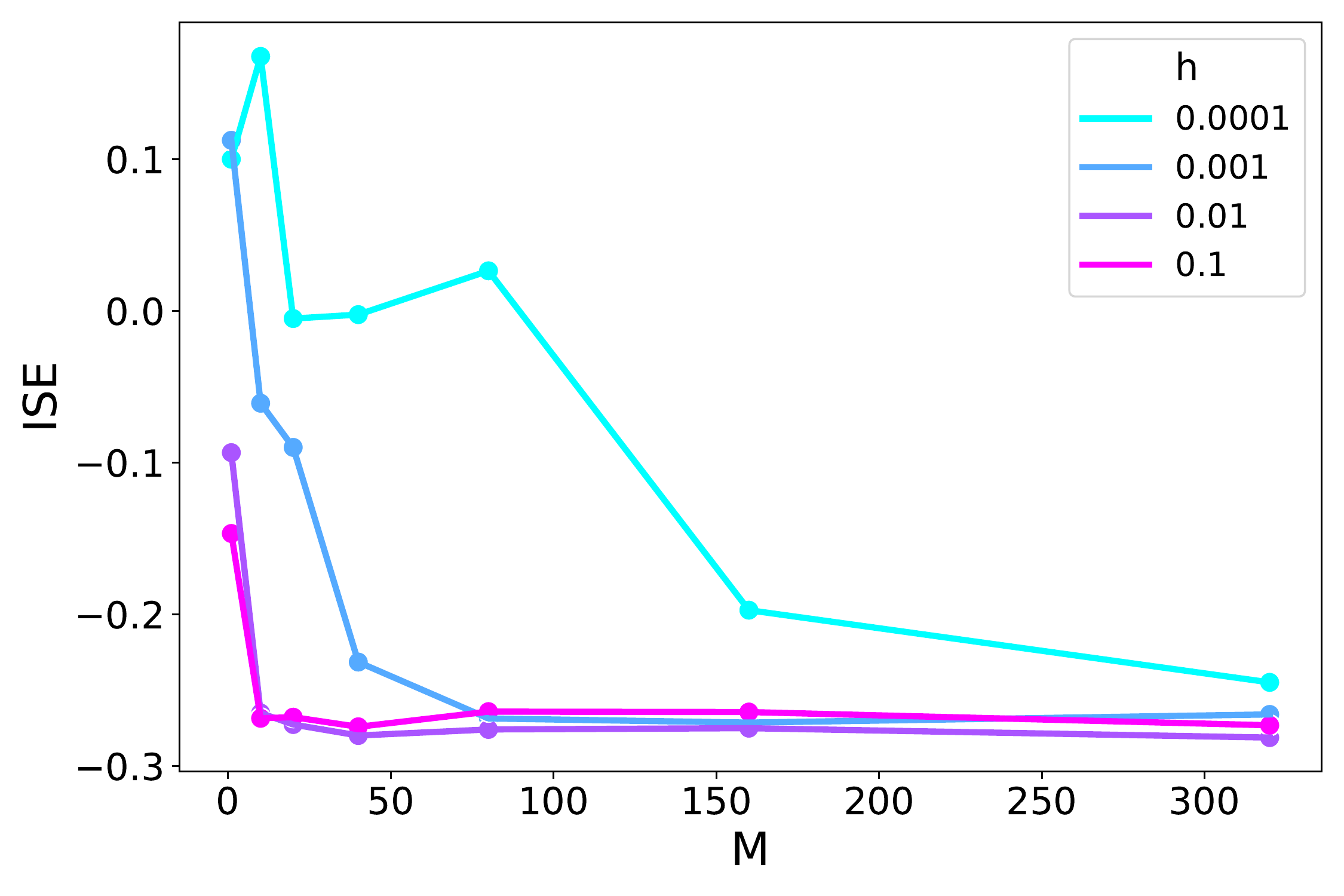}
        \caption{Non-sparse data.}
    \end{subfigure}
    \caption{ISE for \condensiteNN with different $M$ and $h$ on the synthetic data-generating mechanisms.}
    \label{fig:h_M_interplay_NN}
\end{figure}
\noindent
Across the three settings we find that larger values of $M$ tend to give better results, and that lower $h$ require larger $M$ to work well. 
For well-chosen $h$, here $0.01$ and $0.001$, even a moderate number of $M$ seems to improve the performance substantially.
If $h$ is chosen too high, the value of $M$ seems to make little difference, and the method does not perform as well.
Since $h$ must go to zero for the transformation into a regression to recover the true conditional density, this is not surprising.
For $M$ large enough, performance stabilizes at different levels depending on $h$.
The results for \condensiteTree are shown in \cref{fig:h_M_interplay_Tree} in \cref{app:h_M_interplay}.
They exhibit the same qualitative trends, although the performance benefit of larger $M$ materializes and also levels out faster than for \condensiteNN.
In our following experiments we use $h=10^{-2}$ and $M=100$ throughout for both of our estimators, since this combination works well on our synthetic data and appears to provide sufficient flexibility in $\F$ without requiring an excessively high number of auxiliary samples.

\section{Evaluation on Real-World Data}\label{sec:real_world}

To assess the promise of our approach under realistic conditions, we evaluate the performance of the \condensite methods and compare them to other methods from the literature on two challenging real-world datasets.

\subsection{IPUMS Current Population Survey Data}\label{sec:IPUMS}

We evaluate \condensiteNN, \condensiteTree, and the other methods from the literature on a dataset compiled from the IPUMS Current Population Survey (CPS) database\footnote{https://cps.ipums.org/cps/} \parencite{ipumsCPS2024}.
The IPUMS-CPS is a monthly household survey of more than 65000 households from the United States of America. 
It collects data for social science research and has been conducted since the year 1962.
It is conducted jointly by the US Census Bureau and the US Bureau of Labor Statistics.
The IPUMS database has been used particularly widely in economics research. 
For a recent example, see \cite{blanchet2022real}, who propose a real-time measure of inequality by estimating income distributions conditional on different economic and demographic variables.
We use a selection of variables measuring geographic, demographic, work-related, and education-related characteristics of individuals.
Given these covariates, we estimate the conditional density of yearly total personal income.
The hyperparameters used for the different methods are listed in \cref{app:hyper}, and we describe our data preprocessing steps in \cref{app:ipums_pp}.

Our evaluation consists of a quantitative and a qualitative part.
First, we compare the ISE of our methods and the other methods from the literature introduced previously in \cref{sec:synthetic}.
Second, we visually inspect the estimates of the different methods at two landmarks that correspond to typical questions in the economic literature on income inequality.
Since the true conditional density is unknown, we manually construct a local empirical density as a reference point for the plausibility of the estimates and discuss the consistency of the estimates with established findings in the economics literature.

\subsubsection{Empirical Comparison}

Our CPS dataset comprises $113{,}104$ observations of $26$ covariates, of which $6$ are multi-valued, with the remaining ones being binary.
We swap \flexcodeKNN for \flexcodeTree, a tree-based version better suited to tabular data. For the other methods we keep the same hyperparameters as before (see \cref{app:hyper} for details on the hyperparameters). 
We train on $80\%$ of the data and evaluate the ISE on the test set given by the remaining $20\%$.
For \drf, we limit the number of training samples to 50000 due to the large memory requirements of the implementation.
The results in terms of ISE (lower is better) can be seen in \cref{tab:IPUMS_ISE}.
\begin{table}[H]
    \centering
    \begin{tabular}{lc}
\toprule
 & \textbf{CPS} \\
\midrule
\textit{condensité (NN)} & $-0.0414$ \\
\textit{condensité (tree)} & $-0.0241$ \\
\textit{FlexCode (tree)} & $-0.0389$ \\
\textit{LinCDE} & $-0.0297$ \\
\textit{DRF} & $-0.0317$ \\
\textit{condensier} & $-0.0249$ \\
\bottomrule
\end{tabular}

    \caption{ISE results on the IPUMS-CPS dataset.}
    \label{tab:IPUMS_ISE}
\end{table}
\noindent
We find that \condensiteNN achieves the best result, with \flexcodeTree a close second.
\drf also performs well, especially given that it has been trained only on about half of the dataset.
\lincde is still close to \drf, but the other methods from the literature and \condensiteTree perform substantially worse. 
The difference between \condensiteNN  and \condensiteTree points to the choice of regressor as a decisive factor in our approach.
The \condensite methods are the only ones in this group that use regression directly for conditional density estimation and thus depend strongly on the inductive biases of different regression methods.
As can be seen, a well-chosen regressor allows for highly competitive performance.
Although \condensiteNN and \flexcodeTree appear to perform best, it is important to note that all methods offer a high degree of flexibility and different architecture choices, hyperparameters, or preprocessing steps may affect the results.
Moreover, we caution that good summary performance may not suffice for an estimator to be useful in practice.
In real-world applications, high-variance fits and artifacts may distort downstream analyses.
The following landmark analyses therefore explore qualitative aspects of the results.

\subsubsection{Landmark Analysis for Realistic Use-Cases}\label{sec:ipums_landmarks}

We choose two landmarks corresponding to realistic use-cases in the study of economic income inequality, and visualize the estimates of the different methods as well as a local empirical density for comparison.
In this section we focus on the results of the \condensite methods, and of \flexcodeTree as the second-best performing method (see \cref{tab:IPUMS_ISE}).
A visualization and analysis of the remaining methods can be found in \cref{app:ipums_landmarks}.

\paragraph{Skill premium.}
We inspect the conditional income densities estimated by our methods for two landmarks that differ only in years of education.
The income gap that arises between people with similar characteristics but different education is referred to as \textit{skill premium} and is traditionally analyzed by comparing mean wages between broadly defined groups of similar education \parencite[see e.g.][]{autor2008trends,Acemoglu2011skills}. 
However, analyzing wage distributions for specific subgroups and beyond simple summary statistics may be necessary to gain a more complete picture \parencite{firpo2018decomposing}. 
Conditional density estimation allows for doing so on a fine-grained level.
We choose the characteristics in the table below and vary the education level (see boldfaced row).
For reference we construct a local empirical density by including observations within $10$ years of age and within $10$ weekly work hours.
This yields $115$ observations for the $12$ year education landmark, and $175$ observations for the $16$ year education landmark.
\begin{table}[H]
    \centering
    \begin{tabular}{ll}
        \textit{characteristic} & \textit{landmark}\\
        \hline
        personal & $40$ years, male, no children, white (race)\\
        geography & metropolitan ($5+$ million)\\
        nativity & US-born (self and parents)\\
        work hours & $40$ (weekly, constant), private sector wage or salary\\
        \textbf{education} & $\mathbf{12}$ \textbf{years} $\boldsymbol{|}$ $\mathbf{16}$ \textbf{years}
    \end{tabular}
\end{table}
\noindent
\cref{fig:skill_prem} shows the methods' estimates and the local empirical density.
Overall, the methods' estimates are closely aligned with the local empirical density and show a strong and heterogeneous skill premium. 
For more years of education total incomes are not only higher, they are also substantially more dispersed.
This trend is in line with the long-term developments of increasing skill premiums and job polarization described in \textcite[Sections 2.4, 2.5]{Acemoglu2011skills}.
Both \condensite methods provide smooth fits, with the shape of the \condensiteNN estimate being slightly closer to the local empirical density for the 16 years of education.
Although the \flexcodeTree estimate exhibits the correct trend, it is notably more bumpy, which is presumably an artifact resulting from the cosine basis expansion. 
The other methods from the literature (shown in \cref{fig:skill_prem_lit}) also capture the correct trend, yet either yield undesirably high-variance estimates akin to \flexcodeTree, or over-smooth the more concentrated density for $12$ years of education.
\begin{figure}[H]
    \centering
    \includegraphics[width=\linewidth]{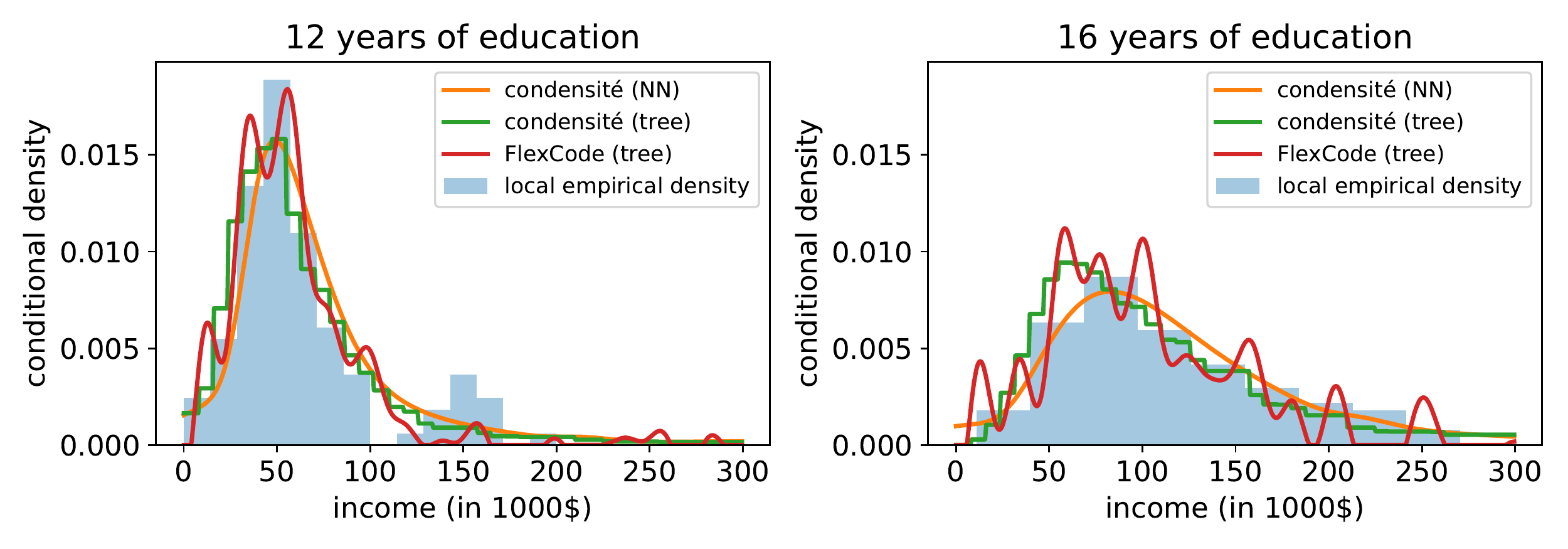}
    \vskip-1em
    \caption{Conditional income density for 12 and 16 years of education with otherwise identical covariates.}
    \label{fig:skill_prem}
\end{figure}

\paragraph{Geographic income dispersion.}
Another frequent subject of analysis in inequality research is geographic income dispersion.
In this context, different groups, e.g.\ with different levels of education are usually analyzed separately \parencite[see e.g.][]{baum2012understanding}.
Conditional density estimation allows for a fine-grained distinction of groups on the level of specific covariate sets. 
For illustration we choose a landmark with the characteristics specified in the table below.
For reference, we estimate a local empirical density for observations within $10$ years of age and within $10$ weekly work hours.
This yields $84$ observations for the non-metropolitan landmark, and $118$ observations for the metropolitan landmark
\begin{table}[H]
    \centering
    \begin{tabular}{ll}
        \textit{characteristic} & \textit{landmark}\\
        \hline
        personal & $40$ years, female, two children, white (race)\\
        \textbf{geography} & \textbf{non-metropolitan} $\boldsymbol{|}$ \textbf{metropolitan ($\boldsymbol{5+}$ million)}\\
        nativity & US-born (self and parents)\\
        work hours & $40$ (weekly, constant), private sector wage or salary\\
        education & 16 years
    \end{tabular}
    \label{tab:lm_geographic}
\end{table}
\noindent
The results are shown in \cref{fig:lm_metro}.
In the methods' estimates, as well as the local empirical density, we observe higher wages and a greater wage dispersion for the metropolitan covariate set compared to the non-metropolitan one. 
This is in line with \cite{baum2012understanding}, who report strong urban wage premiums for highly skilled workers such as those specified by our covariates.
The \flexcodeTree estimate is closer to the local empirical non-metropolitan density than the \condensite methods, yet again suffers from pronounced spurious bumps.
The other methods from the literature (see \cref{fig:lm_metro_lit}) capture the same trend as the ones presented here but either yield high-variance estimates akin to \flexcodeTree, or over-smooth the more concentrated non-metropolitan density.
\begin{figure}[H]
    \centering
    \includegraphics[width=\linewidth]{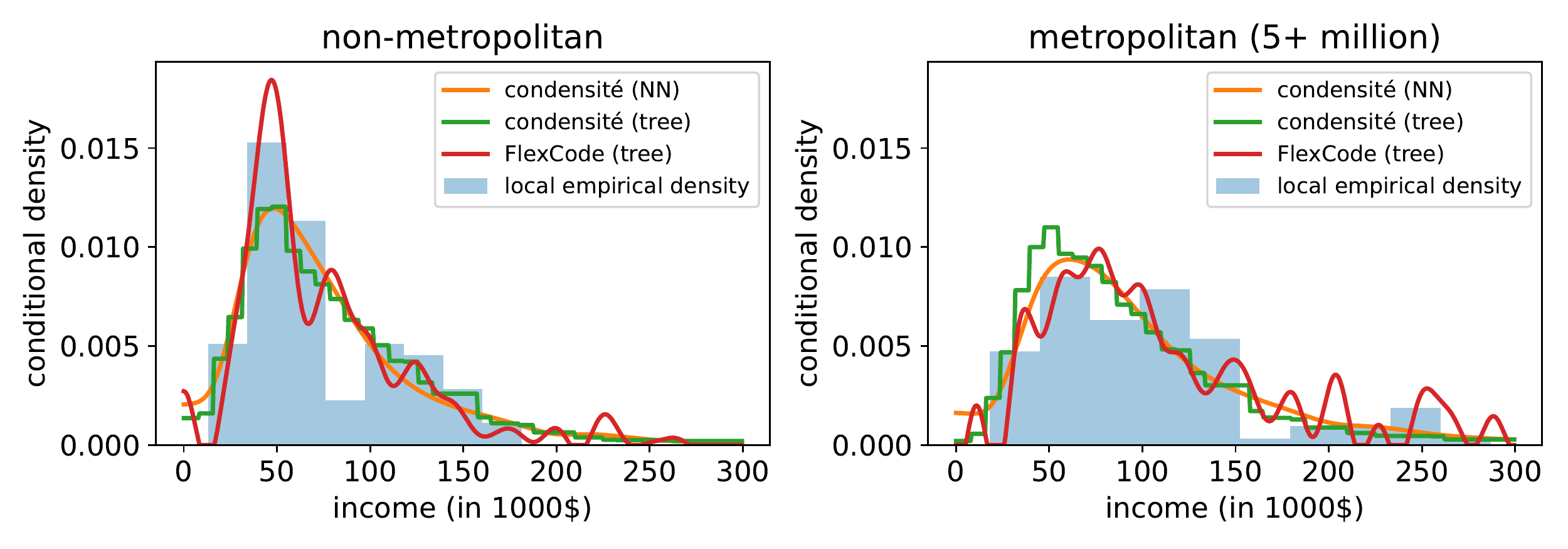}
    \vskip-1em
    \caption{Conditional income density for (non-)metropolitan inhabitants with otherwise identical covariates.}
    \label{fig:lm_metro}
\end{figure}

\subsection{ESA ICC Satellite Imaging Data}\label{sec:AGB}

For an evaluation and comparison of our methods on unstructured data in a realistic setting, we compile a dataset on above-ground biomass (AGB) estimation.
We use AGB labels for the year 2020 provided by the European space agency's (ESA) climate change initiative (CCI)\footnote{https://archive.ceda.ac.uk/} as described in \cite{esacci2025AGB}, and corresponding Sentinel-1A satellite images\footnote{https://geodes-portal.cnes.fr} as features.
AGB is an ``essential climate variable'' \parencite{penman2003good}, and as such of crucial interest to climate modelling.
Uncertainty quantification in AGB modelling is listed as a desirable target in \textcite[][Table 2-1]{esacci2025AGB}, making this an application of conditional density estimation with high potential impact \parencite[see also][]{araza2022comprehensive}.
We construct a dataset by taking the average AGB in tons per square kilometer, and corresponding $100\times100$ two-channel satellite image patches.
We choose the geographic region, weather, and season to ensure the sensor data used for imaging is informative about AGB.
Our training and test dataset are constructed from two disjoint regions of tropical savannah in the Northern Territory of Australia, and contain 14653 and 14683 observations respectively. 
We train and evaluate all algorithms on the training region (chosen to be the one with the greater range of AGB values) using 80\% of the data for training, and the remaining 20\% as a hold-out set for final performance evaluation. Then, we additionally evaluate the methods on the test region.
We refer to \cref{app:ESA_pp} for more details on the data, a description of the preprocessing steps, and a visualization of the satellite images from which the features patches are extracted.

\subsubsection{Empirical Comparison.}
For this application on AGB satellite image data, we use \condensiteCNN, a version using a convolutional neural network (CNN) and a fully connected neural network head with skip connections between them.
To take full advantage of the end-to-end training enabled by \condensite, we provide the auxiliary target coordinates $Y'_{im}$ as a third image channel to the convolutional layers, and again as a separate feature to the head.
We use the same architecture, save for the auxiliary sample coordinates, as basis coefficient estimator in \flexcodeCNN.
To conduct as fair a comparison as possible, we use a neural network of the same architecture trained to predict the labels as a feature extractor for the other methods. The features are the CNN activations of the network.
For further details and hyperparameters, see \cref{app:hyper}.

\cref{tab:res_agb_train} shows the methods' performance on the held-out 20\% of the training region dataset, and \cref{tab:res_agb_test} shows the performance on the complete test region dataset.
Performance is measured in terms of ISE (lower is better).
We find that the \condensite methods are among the best-performing methods in both evaluations, with the two CNN-based methods \condensiteCNN and \flexcodeCNN performing best of all.
We observe further that the ISE results are consistently worse on the held-out 20\% of the training region data than on the test region data. This can be explained by the training region being closer to the coast and to the equator and thus having greater biomass. As a result, it presents a more complex estimation task with fewer of the near-zero AGB areas that characterize the test data region (compare the label panel of \cref{fig:panel_test} and \cref{fig:panel_train}).
Although this complicates the quantitative interpretation of the test region performance, we note that the \condensite methods and \flexcodeCNN are the top-performing methods on both datasets. The performance difference between the training and test data is particularly pronounced for \condensier; we treat this point in further detail in the following qualitative assessment.

\noindent
{
\hfill
\begin{minipage}{0.45\textwidth}
    \begin{table}[H]
        \centering
        \begin{tabular}{lc}
\toprule
 & \textbf{ESA ICC AGB} \\
\midrule
\textit{condensité (CNN)} & $-0.9224$ \\
\textit{condensité (tree)} & $-0.8785$ \\
\textit{FlexCode (CNN)} & $-0.8929$ \\
\textit{LinCDE} & $-0.8329$ \\
\textit{DRF} & $-0.7775$ \\
\textit{condensier} & $-0.5565$ \\
\bottomrule
\end{tabular}

        \vskip-1em
        \caption{\textbf{Training region.} ISE on held-out 20\% of data.}
        \label{tab:res_agb_train}
    \end{table}
\end{minipage}%
\hskip2.2em
\begin{minipage}{0.45\textwidth}
    \begin{table}[H]
        \centering
        \begin{tabular}{lc}
\toprule
 & \textbf{ESA ICC AGB} \\
\midrule
\textit{condensité (CNN)} & $-2.1245$ \\
\textit{condensité (tree)} & $-1.9646$ \\
\textit{FlexCode (CNN)} & $-2.2726$ \\
\textit{LinCDE} & $-1.5222$ \\
\textit{DRF} & $-1.3861$ \\
\textit{condensier} & $-1.8709$ \\
\bottomrule
\end{tabular}

        \vskip-1em
        \caption{\textbf{Test region.} ISE on all data points. $\phantom{lalala}$}
        \label{tab:res_agb_test}
    \end{table}
\end{minipage}
\hfill
}

\subsubsection{Visualization and Qualitative Assessment}
The ISE results in \cref{tab:res_agb_train} and \cref{tab:res_agb_test} give a rough idea about comparative model performance, but other factors such as smoothness or variance of the estimates may be relevant for downstream tasks.
To obtain a qualitative understanding of the methods' performances, we visualize the tail width, skew, and mode of the per-pixel estimate for each method.
We measure tail width as the difference between the ninth decile $d_9$ and first decile $d_1$.
To measure skewness, we compute the Bowely-type \parencite{bowley1920} statistic 
\begin{align}
    \frac{(d_9-m)-(m-d_1)}{d_9-d_1},
\end{align}
where $m$ is the median; we upscale the value to the AGB range for greater visual contrast.
For comparison, we also show the AGB labels.
In \cref{fig:panel_test} below we show visualizations of the test region data estimates for all methods.
A visualization of all methods on the complete training region dataset can be found in \cref{app:viz_AGB}.

\begin{figure}[H]
    \centering
    \begin{subfigure}{\linewidth}
        \centering
        \includegraphics[width=\linewidth,trim={0 1.4cm 0 .95cm},clip]{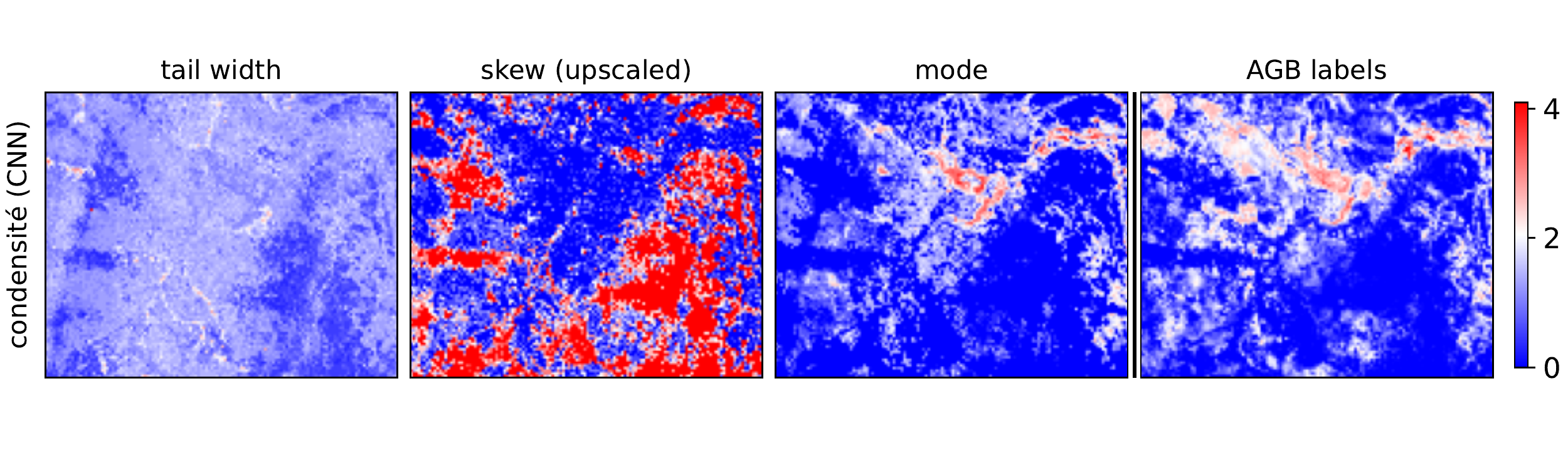}
    \end{subfigure}
        \begin{subfigure}{\linewidth}
        \centering
        \includegraphics[width=\linewidth,trim={0 1.4cm 0 1.4cm},clip]{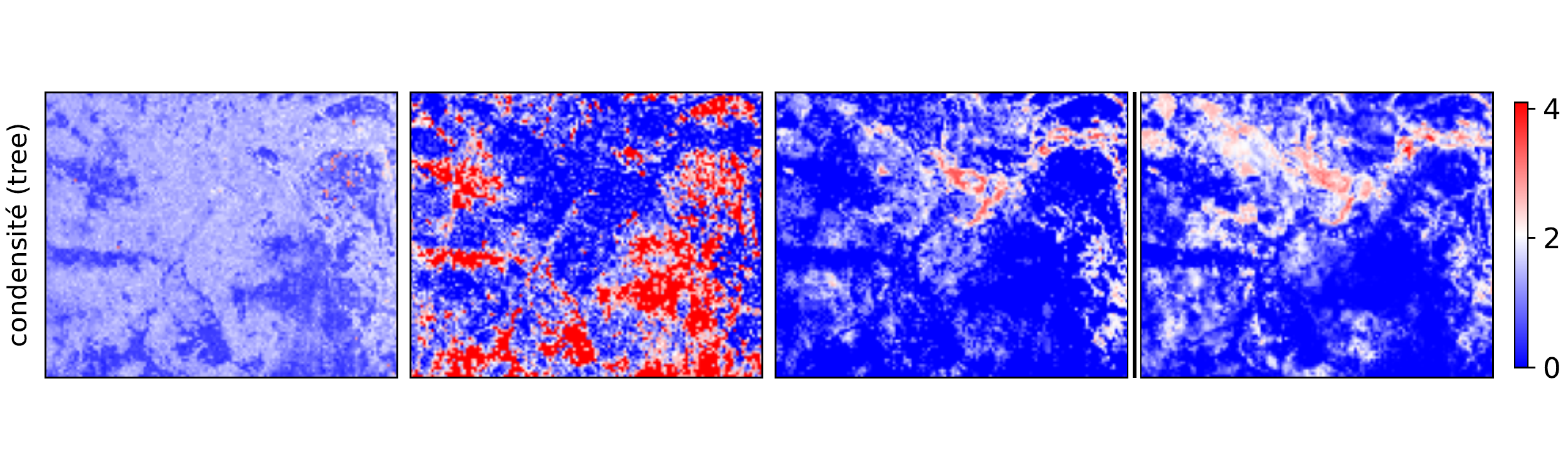}
    \end{subfigure}
    \begin{subfigure}{\linewidth}
        \centering
        \includegraphics[width=\linewidth,trim={0 1.4cm 0 1.4cm},clip]{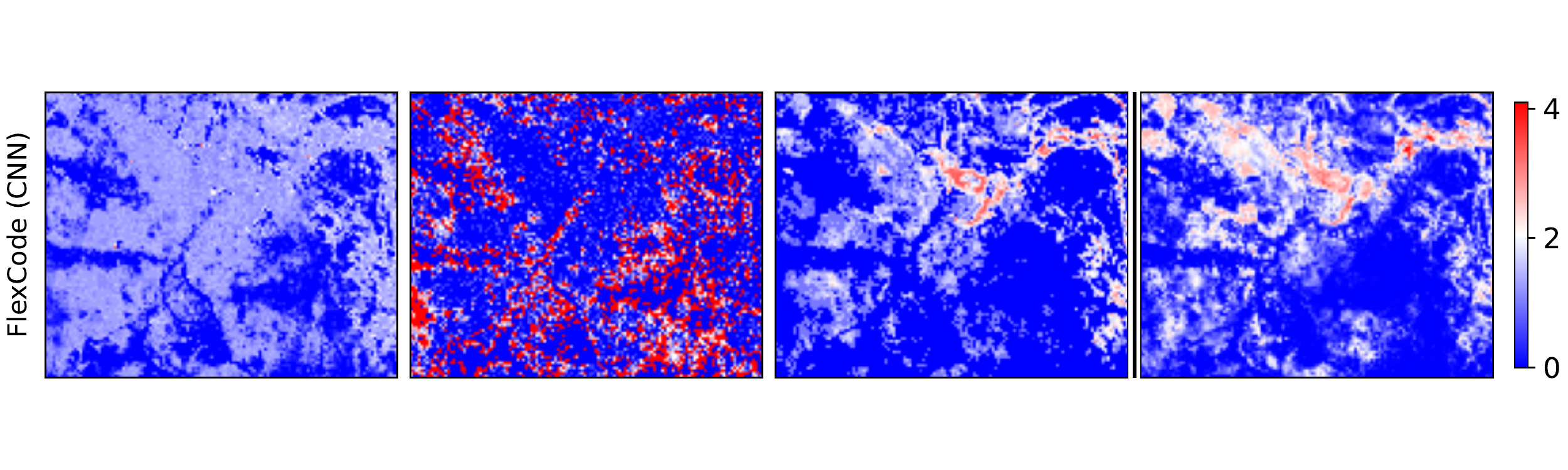}
    \end{subfigure}
    \begin{subfigure}{\linewidth}
        \centering
        \includegraphics[width=\linewidth,trim={0 1.4cm 0 1.4cm},clip]{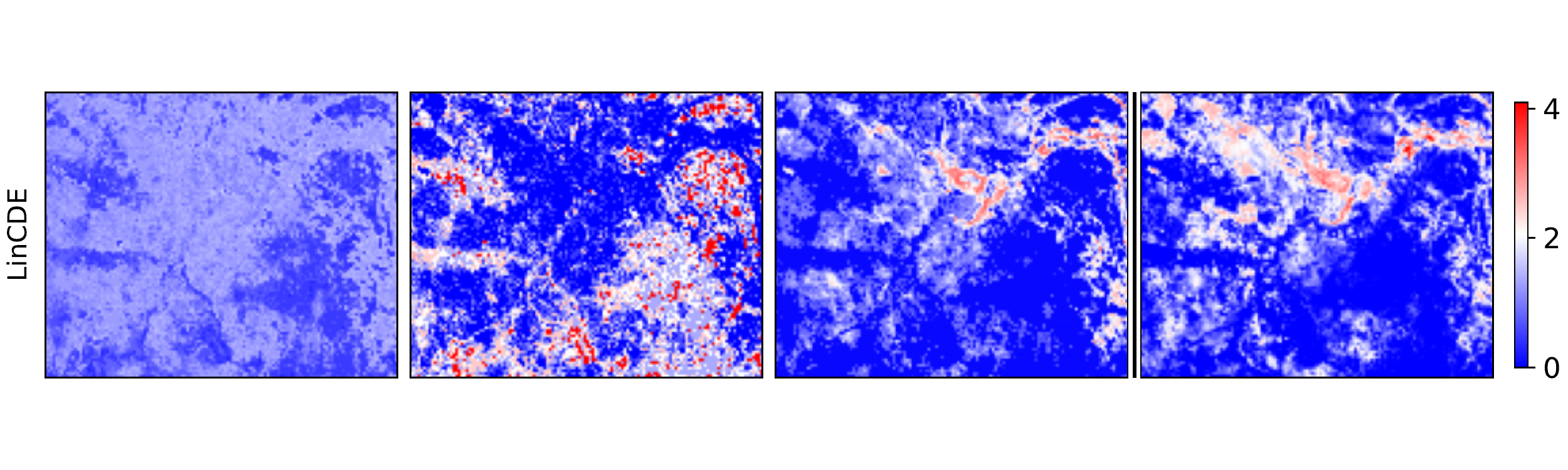}
    \end{subfigure}
    \begin{subfigure}{\linewidth}
        \centering
        \includegraphics[width=\linewidth,trim={0 1.4cm 0 1.4cm},clip]{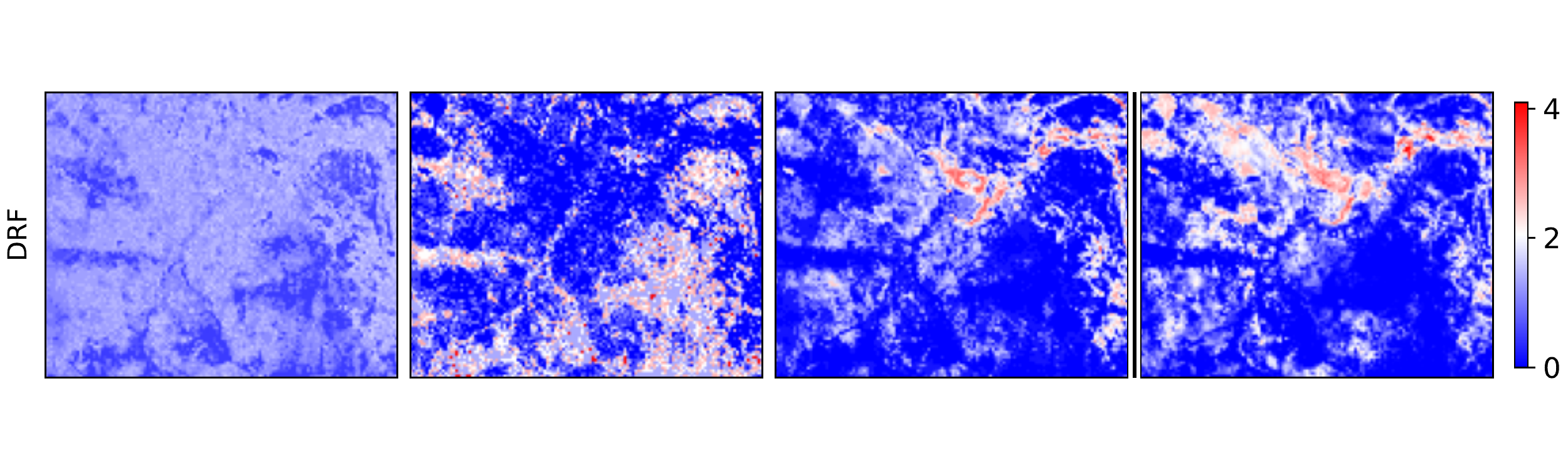}
    \end{subfigure}
    \begin{subfigure}{\linewidth}
        \centering
        \includegraphics[width=\linewidth,trim={0 1.4cm 0 1.4cm},clip]{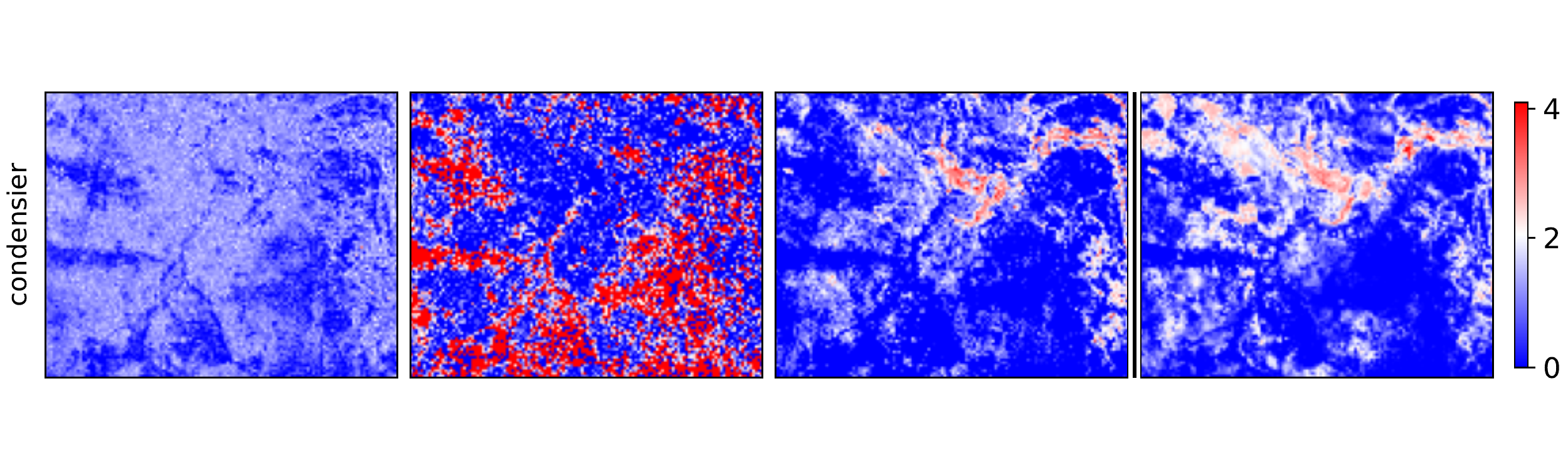}
    \end{subfigure}
    \caption{\textbf{Test region.} Visualization of method estimate summary statistics and labels.}
    \label{fig:panel_test}
\end{figure}
\noindent
We find that the mode closely resembles the labels for all methods, indicating a stable dominant peak in the conditional density.
Tail width is lowest in low-AGB areas, indicating narrow densities.
The test region has many such areas, making the density estimation task comparatively less complex than for the training region (cf. \cref{fig:panel_train}).
The two best-performing methods \condensiteCNN and \flexcodeCNN appear to have the greatest contrast in tail width between high- and low-AGB areas, although the latter generally has narrower tails. 
\textcite[][Section 4]{araza2022comprehensive} assert a positive association between AGB estimates and uncertainty across AGB datasets, hence we may expect such a contrast.
The two worst-performing methods on the test region, \drf and \lincde, have more uniform tail widths, indicating possible underfitting.
We observe the greatest difference between the model estimates is in their skew.
Both \condensite methods have high skewness in contiguous areas corresponding to low-AGB areas.
Skewness is similarly high though more dispersed for \condensier.
For \flexcodeCNN, skewness appears to be highest at the borders of low-AGB areas.
To a lesser extent, this can also be seen for \lincde and \drf, which are somewhere in between the \condensite methods and \flexcodeCNN, although again with less pronounced contrasts between high- and low-AGB areas.
Overall, the experiment demonstrates that the \condensite methods, like \flexcode, can match or outperform the other methods in the literature in terms of ISE. 
It also highlights the complementary nature of the qualitative aspects of the different methods, pointing to different strengths for different purposes.

\section{Discussion and Limitations}\label{sec:discussion}

\paragraph{Considerations for applying \condensite.}
Applying \condensite involves choosing a predictor as well as the hyperparameters $M$ and $h$.
The choice of predictor can be a major determinant for overall performance, and we recommend testing different methods and architectures.
Choosing suitable values for $M$ and $h$ is essential for good performance, but doing so is not difficult.
Higher values of $M$ and lower values of $h$ are better, and low values of $h$ require high values of $M$.
For optimal performance, it is advisable to evaluate different predictor and hyperparameter combinations, yet our results suggest that good performance can be achieved without extensive tuning.

\paragraph{Theoretical result.}
Our main theoretical result establishes the convergence of our estimator to the true conditional density in the data limit and as $h$ goes to zero.
The latter condition is inherited from the transformation into a regression task \parencite[see also][Equation 2.1]{fan1996estimation}.
There is a bias-variance trade-off between choosing $h$ large for a tighter bound by \cref{thm:1}, and choosing $h$ small for the transformation to remain close to the true density by \cref{eq:transformation_h_limit}.
Although we find clear empirical evidence corroborating the benefit of the auxiliary samples, our proof strategy does not capture an effect of $M$.
Since the auxiliary samples are only block-wise independent, the influence of $M$ cancels out in the symmetrization step.
A proof strategy that overcomes this limitation and captures the influence of $M$ would help outline the advantage gained by our approach and could guide the choice of $M$ and $h$.
We are not aware of any technique allowing this, and such a result would represent a substantial advancement.

\paragraph{Empirical evaluation.}
On synthetic data we find that increasing the number of auxiliary samples $M$ greatly and consistently improves the performance of our estimators.
This highlights the benefit of our approach compared to using the observations alone.
In our evaluation on challenging real-world datasets, we find that the \condensite estimators match or outperform the other methods in the literature.
On a large population survey dataset we find that \condensiteNN outperforms the other methods and yields estimates with desirable smoothness properties.
The latter point may be particularly interesting for applications that rely on comparing densities at different points, and underscores the ability to choose an inductive bias via the regressor as an advantage of \condensite.
On a satellite image dataset, we showcase its ability to integrate different machine learning methods and find that \condensiteCNN achieves state of the art performance.
A qualitative assessment of the estimates highlights that methods with similar performance may provide estimates with very different characteristics. 
Hence, our approach complements the existing ones in the literature beyond the raw performance of the \condensite methods.

\paragraph{Limitations.}
We do not investigate or compare the computational requirements of our method in this work, but there is no doubt that the computation time will tend to increase in the number of auxiliary samples $M$. 
Though we consider it unlikely for this to be prohibitive since \condensite can benefit from the widely available infrastructure for large-scale regression, it may slow down prototyping and complicate parameter tuning.
Unlike some other methods in the literature, \condensite also does not yet allow for multivariate dependent variables. 
Such an extension would present a natural and exciting opportunity for future research.
Our experimental setup is aimed at mimicking realistic use-cases, but future research is needed to assess how these empirical results generalize to other datasets and data types, including multimodal data.
In addition, our evaluation criteria are of a generic nature. For practitioners, domain-specific criteria or evaluations on downstream tasks may help better outline the benefit of our methods in practice.
\section{Conclusion}\label{sec:conclusion}
We propose a way of transforming conditional density estimation into a single nonparametric regression task using auxiliary samples that encourage nearby points in the feature space to yield similar estimates.
This acts as regularization and mitigates the overfitting common to flexible conditional density estimators.
We develop \condensite, a method that implements this approach.
By interpreting regression outputs as density estimates directly, rather than viewing them as an input to another parametrization, it is able to leverage the full expressive power of high-dimensional regression methods and is trainable without modification using the same libraries and infrastructure.
We establish convergence of the estimator for function classes of limited capacity and numerically study how the number of auxiliary samples affects performance. Our results show that more auxiliary samples generally improve performance, and that the hyperparameters are straightforward to tune.
In our empirical evaluation, we find that \condensite with a neural network as predictor outperforms the state of the art in the literature on a large real-world population survey dataset and matches it on a satellite imaging dataset.
Qualitatively, we find that the \condensite models provide different and complementary estimates to existing methods.
Moreover, the ability to adapt the inductive bias via the choice of regressor may make our approach interesting to a wide range of applications.
Conditional density estimation in high dimensions is often considered a challenging problem out of reach of current methods.
Our findings demonstrate that it can be feasible and effective under realistic conditions.
Overall, our results indicate that \condensite in particular, and state of the art methods more generally, hold strong promise for applied research in domains requiring flexible conditional density estimates.
\subsection*{Acknowledgements}
We thank Julia M. Schmidt for suggesting the population survey application, Rémy Abergel for his advice on satellite image datasets, and Myrto Limnios for feedback on the theoretical part of the work.
AGR received funding from the European Union's Horizon 2020 research and innovation programme under the Marie Skłodowska-Curie grant agreement No 945332 \raisebox{-.05em}{\includegraphics[height=.85em]{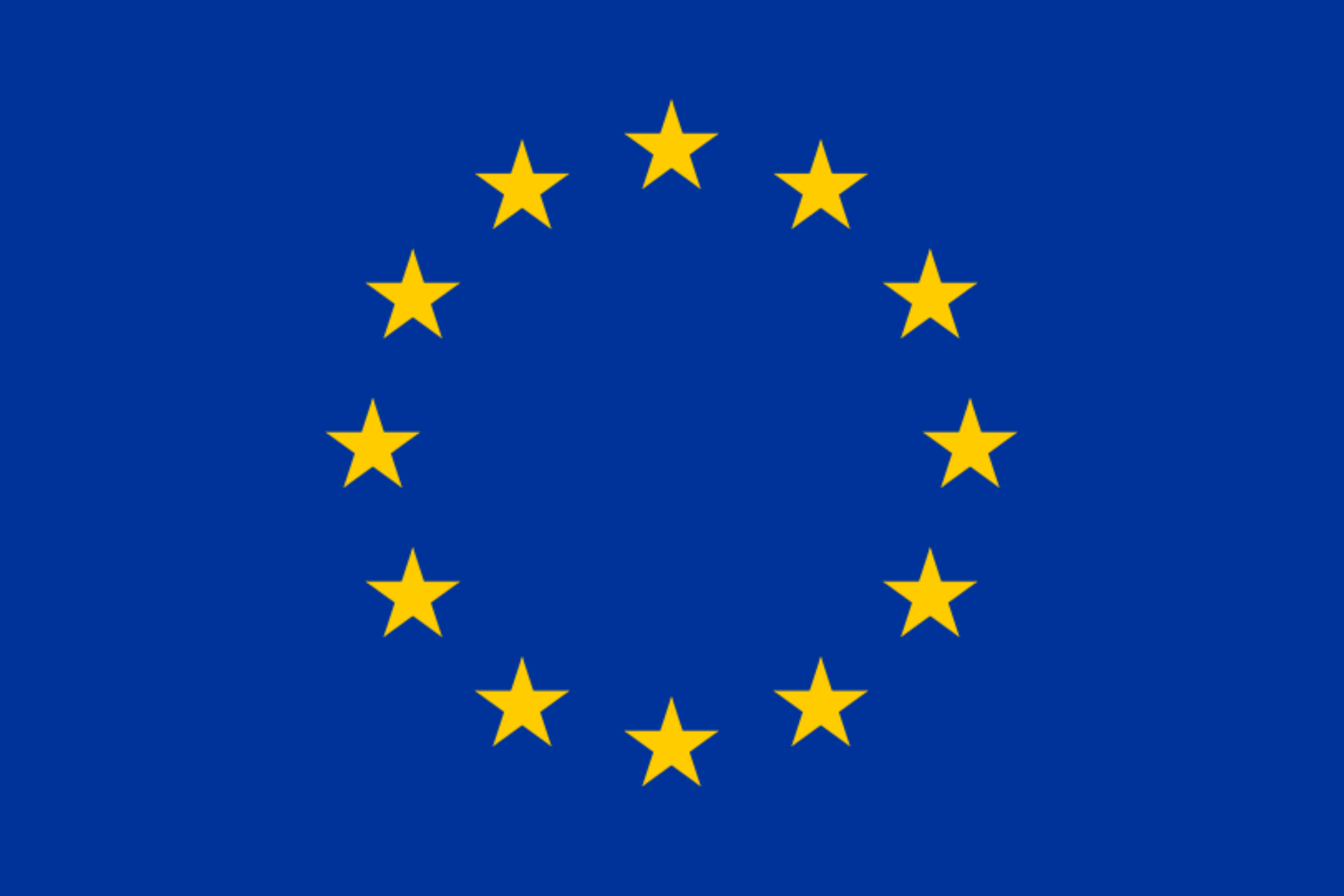}}.

\printbibliography

\newpage
\appendix
\crefalias{section}{appendix}      %
\crefalias{subsection}{appendix}   %
\crefname{appendix}{Appendix}{Appendices}
\Crefname{appendix}{Appendix}{Appendices}
\section{Glossary}\label{app:glossary}

{\renewcommand{\arraystretch}{1.5}
\begin{longtable}{|p{1.5cm}|p{13.5cm}|}
        \hline
        $(X, Y)$ & Random variable on $\X\times[0,1]$ drawn from $\P$.\\
        $\bar{Y}'$ & $\bar{Y}'=(Y'_1,\ldots,Y'_M)$ is drawn from $\left(\Unif[0,1]\right)^{\otimes M}$ independently of $(X,Y)$.\\
        $\bar{Z}_h$ & $\bar{Z}_{h}=(Z_{h,1},\ldots,Z_{h,M})$ with $Z_{h,m}=K_h(Y-Y'_m)$ for each $1\leq m\leq M$.\\
        \hline
    \caption{Glossary of random variables related to $(X,Y)$.}
    \label{tab:var_glossary}
\end{longtable}}

{\renewcommand{\arraystretch}{1.5}
\begin{longtable}{|p{1.5cm}|p{13.5cm}|}
        \hline
        $f_{X,Y}(\cdot,\cdot)$ & Joint density of $X$ and $Y$.\\
        $\fstar(\cdot|\cdot)$ & Conditional density of $Y$ given $X$.\\
        $\transformation(\cdot|\cdot)$ & 
        Conditional density of $Y_m'$ given $X$.
        \\ 
        $\projection(\cdot|\cdot)$ & Projection of $\transformation$ onto $\F$.\\
        $\estimator(\cdot|\cdot)$ & Approximation of $\projection$ on $n$ samples of $(X,Y)$.\\
        \hline
    \caption{Glossary of densities related to $(X,Y)$.}
    \label{tab:f_glossary}
\end{longtable}}

{\renewcommand{\arraystretch}{1.5}
\begin{longtable}{|p{1.5cm}|p{13.5cm}|}
        \hline
        $\X\times[0,1]$ & Sample space of $(X, Y)$.\\
        \hline
        $\P$ & Law of the experiment of interest on $\X\times[0,1]$.\\
        $\PX$ & Marginal law of $X$ under $\P$.\\
        $\Ph$ & Joint law of $(X,\bar{Y}',\bar{Z}_h)$.
        \\
        $\Phn$ & The empirical counterpart to $\Ph$ that puts mass $1/n$ on every $(X_i,\bar{Y}_i^\prime,\bar{Z}_{hi})$, where $(X_1,\bar{Y}_1^\prime,\bar{Z}_{h1})$, $\ldots$, $(X_n,\bar{Y}_n^\prime,\bar{Z}_{hn})$ are independently drawn from $\Ph$.\\
        $\Q$ & Product of $\PX$ with the uniform law on $[0,1]$.\\
        $\QnM$ & The empirical law that puts mass $1/(nM)$ on every $(X_i,Y_{im}')$.\\
        \hline
    \caption{Glossary of laws related to $(X,Y)$.}
    \label{tab:p_glossary}
\end{longtable}}

\section{Proof of the Main Result}\label{app:proof_details}

We aim to bound the probability of the excess risk $\hat\delta\coloneq\ERh(\estimator)$ of our estimator, as defined in \cref{eq:ER_loss}, exceeding a certain threshold.
In a first step (\cref{excess_risk}), we start with the insight that $\hat\delta$ can be expressed as the supremum over the empirical process $(\Phn-\Ph)$ evaluated over the loss differences of pairs of functions from a
class of functions $\F(\delta)$ with excess risk no greater than $\delta$.
The deviation of this expression from its expectation, denoted as $\phideltahat$, can be bounded probabilistically by a version of Talagrand's concentration inequality.
The probabilistic bound is only informative if it bounds an excess risk that is lower than that of at least one function.
We define $\sigma_n^t$ as the lowest excess risk beyond which that is always the case.
This allows us to use the concentration inequality in \textcite[Theorem 4.3]{koltchinskii2011oracle} to bound the chance of $\hat\delta$ exceeding $\sigma_n^t$.
In a second step (\cref{weaker_simpler}), we aim to isolate the part of $\sigma_n^t$ that depends on the unknown true law.
We denote that part as $\phisharp$ and refer to the remainder as $s_n^t$.
In a third step (\cref{symmetrization}), we aim to express the difference $(\Phn-\Ph)$ that features in $\phisharp$ solely in terms of the empirical law using Rademacher sums via a standard symmetrization argument and the use of \textcite[Corollary 1]{maurer2016vector}.
In a fourth step (\cref{complexity}), we use the assumption on the covering number integral to bound the expectation of the Rademacher sum.
Finally (\cref{combining}), we combine our bound on $\phisharp$ with a solution for the remainder $s_n^t$ and plug them in for $\sigma_n^t$ to obtain our result.

$\phantom{a}$\\[-1em]

\subsection{Controlling the excess risk}\label{excess_risk}
Our goal is to obtain a tail inequality on the excess risk 
$\hat\delta$ with respect to $\projection$.
To this end we apply \cite[][Theorem 4.3]{koltchinskii2011oracle}.

\paragraph{Defining an excess risk measure.}
For any $\delta>0$, let the $\delta$-minimal subset of functions be defined as
\begin{align*}
    \F(\delta)\coloneq \{f\in\F\colon \ERh(f)\leq\delta\}.
\end{align*}
In light of \cref{eq:ER_loss}, for any $\epsilon\in(0,\hat\delta]$, for any $f'\in\F(\epsilon)$,
\begin{align}
    \hat\delta &= \Ph(\ell[\estimator] - \ell[f']) + \Ph(\ell[f'] - \ell[\projection])\nonumber\\ 
    &\leq \Ph(\ell[\estimator] - \ell[f'])+\epsilon \nonumber\\
    &= \Phn(\ell[\estimator]-\ell[f'])+(\Ph-\Phn)(\ell[\estimator]-\ell[f'])+\epsilon\nonumber\\
    &\leq (\Ph-\Phn)(\ell[\estimator]-\ell[f'])+\epsilon\nonumber\\
    &\leq \sup_{f_1,f_2\in\F(\hat\delta)}|(\Phn-\Ph)(\ell[f_1]-\ell[f_2])|+\epsilon.
    \label{eq:proof_strat}
\end{align}
This relates the excess risk $\hat\delta$ to the supremum of the empirical process $(\Phn-\Ph)$ over
\begin{align*}
    \Lambda(\hat\delta) \coloneq \{\ell[f_1]-\ell[f_2]\colon f_1,f_2\in\F(\hat\delta)\},
\end{align*}
which we denote as 
\begin{align}
    \|\bar{P}_{h,n} - \Ph\|_{\Lambda(\hat\delta)} \coloneq \underset{\lambda\in\Lambda(\hat\delta)}{\sup}|(\bar{P}_{h,n}-\Ph)\lambda|
    \label{eq:loss_difference_approx_error}.
\end{align}
The deviation of \cref{eq:loss_difference_approx_error} from its expectation 
\begin{align}
    \phideltahat \coloneq \mathbb{E}_{\Ph}\|\bar{P}_{h,n}-\Ph\|_{\Lambda(\hat\delta)},
    \label{eq:phideltahat}
\end{align}
can be bounded using the \cite{bousquet2002bennett} version of Talagrand's inequality \parencite[][]{talagrand1996new}.\\

\paragraph{A distribution-dependent upper bound.}
For said inequality, we introduce the squared diameter of $\F(\delta)$ as
\begin{align*}
    D_h(\delta) \coloneq \underset{f_1,f_2\in\F(\delta)}{\sup}\Ph(\ell[f_1]-\ell[f_2])^2.
\end{align*}
Now, fix arbitrarily $t>0$.
For any $\delta>0$ one can define a probabilistic upper bound $U_n^t(\delta)$ on \cref{eq:loss_difference_approx_error} such that, if $\delta\geq \sup\{\delta\in(0,1]\colon \delta\leq U_n^t(\delta)\}$, then the bound is exceeded with probability at most $e^{1-t}$.
Let the $\flat$-transform for any non-negatively valued function $\delta\mapsto\psi(\delta)$ be
\begin{align*}
    \delta\mapsto\psi^\flat(\delta)\coloneq\sup_{\sigma\geq\delta}\frac{\psi(\sigma)}{\sigma}.
\end{align*}
For any $\delta>0$, let $\Lambda(\delta)$ and $\phi_{h,n}(\delta)$ be defined as in \cref{eq:loss_difference_approx_error} and \cref{eq:phideltahat}, with $\delta$ substituted for $\hat\delta$, and then let
\begin{align}
    V_n^t(\delta) \coloneq 4\left[\phi^\flat_{h,n}(\delta) + \sqrt{D_h^\flat(\delta)}\sqrt{\frac{t}{n\delta}}+\frac{t}{n\delta}\right]
    \label{eq:Vnt}
\end{align}
be an upper bound on $(U_n^t)^\flat(\delta)$.
Note that $D_h^\flat(\delta)$ can be upper bounded by a constant (see the forthcoming \cref{prop:diameter_bound}), and thus every term of $V_n^t(\delta)$ is decreasing in $\delta$.
Let
\begin{align}
    \sigma_n^t\coloneq \inf\{\delta>0\colon V_n^t(\delta)\leq 1\}.
    \label{eq:sigma}
\end{align}
In essence, $V_n^t(\sigma_n^t)\leq 1$, hence $U_n^t(\sigma_n^t)\leq \sigma_n^t$, which implies that $\sigma_n^t\geq\sup\{\delta\in(0,1]\colon\delta\leq U_n^t(\delta)\}$.
Therefore, $\sigma_n^t$ is a probabilistic upper bound on \cref{eq:loss_difference_approx_error}, which, in view of \cref{eq:proof_strat}, is itself an upper bound on $\ERh(\estimator)$.
In summary, \cite[][Theorem 4.3]{koltchinskii2011oracle} yields the distribution-dependent concentration inequality
\begin{align}
    \Ph\left(\ER_h(\estimator)\geq\sigma_n^t\right) \leq e^{1-t}.
    \label{eq:proof_part1}
\end{align}

\subsection{Deriving a weaker but simpler control of the excess risk}\label{weaker_simpler}

In this step we aim to isolate the part of the risk bound from \cref{eq:proof_part1} that depends on the unknown true distribution $\Ph$.
The definition of $\sigma_n^t$ in \cref{eq:sigma} can be separated into two parts.
Let the first part be given by 
\begin{align}
    \phisharp(1/8) \coloneq \inf\left\{\delta>0\colon \phi^\flat_{h,n}(\delta)\leq1/8\right\}.
    \label{eq:phisharp}
\end{align}
For the second part we derive an upper bound on $D_h^\flat$ from \cref{eq:Vnt} based on the following Lemma.
\begin{lemma}
    For all $\delta>0$,
    $\F(\delta) \subseteq \{f\in\F\colon \Q(f-\projection)^2\leq\delta\}$.
    \label{lemma:D_upper_bound}
\end{lemma}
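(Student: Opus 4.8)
The plan is to treat this as the classical fact that projection onto a closed convex set in a Hilbert space contracts excess squared distance, applied here in $L^2(\Q)$.

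First I would fix the ambient setup. Since $\Q$ is a probability measure and every $f\in\F$ obeys $h\|f\|_\infty\le c$, we have $\F\subseteq L^2(\Q)$, while $\transformation\in L^2(\Q)$ by its very definition as a minimizer over $L^2(\Q)$. Because $\F$ is closed and convex, $\projection=\argmin_{g\in\F}\Q(g-\transformation)^2$ is exactly the (unique) metric projection of $\transformation$ onto $\F$ in $L^2(\Q)$, so the usual projection geometry applies.

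Second, I would record the variational characterization of this projection: for every $f\in\F$,
\[
    \Q\big((\transformation-\projection)(f-\projection)\big)\le 0.
\]
This is the standard consequence of convexity — for $t\in(0,1]$ the point $\projection+t(f-\projection)$ lies in $\F$, hence $\Q(\projection+t(f-\projection)-\transformation)^2\ge\Q(\projection-\transformation)^2$; expanding the square, dividing by $t$, and letting $t\downarrow 0$ gives the inequality.

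Third, I would expand using $f-\transformation=(f-\projection)-(\transformation-\projection)$:
\[
    \Q(f-\transformation)^2=\Q(f-\projection)^2+\Q(\transformation-\projection)^2-2\,\Q\big((f-\projection)(\transformation-\projection)\big)\ge\Q(f-\projection)^2+\Q(\projection-\transformation)^2,
\]
the inequality being exactly the previous step. Rearranging and recalling the definition of $\ERh$ in \cref{eq:ER_loss},
\[
    \Q(f-\projection)^2\le\Q(f-\transformation)^2-\Q(\projection-\transformation)^2=\ERh(f).
\]
Thus any $f\in\F$ with $\ERh(f)\le\delta$ satisfies $\Q(f-\projection)^2\le\delta$, which is the claimed inclusion $\F(\delta)\subseteq\{f\in\F\colon\Q(f-\projection)^2\le\delta\}$. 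I do not expect any real obstacle: the only points needing care are checking the integrability that places $\F$ and $\transformation$ inside $L^2(\Q)$, and using the closedness and convexity of $\F$, which are precisely the hypotheses that make the projection inequality available.
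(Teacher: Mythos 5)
Your proof is correct and follows essentially the same route as the paper: both identify $\projection$ as the metric projection of $\transformation$ onto the closed convex set $\F$ in $L^2(\Q)$, invoke the variational inequality $\Q\bigl((\transformation-\projection)(f-\projection)\bigr)\leq 0$, and expand the square to get $\Q(f-\transformation)^2\geq\Q(f-\projection)^2+\Q(\projection-\transformation)^2$. The only cosmetic difference is that the paper phrases the variational inequality via the gradient of the convex functional $f\mapsto\tfrac12\Q(\transformation-f)^2$, while you derive it directly from the perturbation $\projection+t(f-\projection)$; these are the same argument.
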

\begin{proof}
    Consider the functional
    \begin{align*}
        \psi\colon \F&\to\mathbb{R}_+,\\
        f&\mapsto \frac{1}{2}\Q(\transformation-f)^2.
    \end{align*}
    It is convex and its gradient at any $f\in\F$ is given by the Riesz representer $\nabla\psi(f) = -(\transformation-f)$.
    This means in particular that $\nabla\psi(f)\cdot g = \langle -(\transformation-f), g\rangle_{\Q}$.
    The projection of $\transformation$ onto $\F$ is the minimizer $\projection = \argmin_{f\in\F}\psi(f)$.
    Since $\F$ is closed and convex, it follows that for all $f\in\F$,
    \begin{align*}
        \langle -(\transformation-\projection), f-\projection \rangle_{\Q} \geq 0.
    \end{align*}
    This implies the inequality below, from which the result follows:
    \begin{align*}
        \Q(f-\transformation)^2 &= \Q(f-\projection+\projection-\transformation)^2\\
        &= \Q(f-\projection)^2+\Q(\projection-\transformation)^2+2\langle f-\projection,\projection-\transformation\rangle_{\Q}\\
        &\geq \Q(f-\projection)^2+\Q(\projection-\transformation)^2.
    \end{align*}
\end{proof}
\begin{proposition}[Upper bound on $D_h^\flat(\delta)$]
    For all $\delta>0$, $D_h^\flat(\delta)\leq16\frac{c^2}{h^2}.$
    \label{prop:diameter_bound}
\end{proposition}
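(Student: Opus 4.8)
The plan is to bound the squared diameter $D_h(\delta)$ directly and then show that this bound is preserved under the $\flat$-transform. Recall $D_h(\delta) = \sup_{f_1,f_2\in\F(\delta)}\Ph(\ell[f_1]-\ell[f_2])^2$. The first step is to rewrite the loss difference pointwise: since $\ell[f](x,\bar y',\bar z) = \frac1M\sum_m (f(y'_m|x)-z_m)^2$, we get $\ell[f_1]-\ell[f_2] = \frac1M\sum_m (f_1(y'_m|x)-f_2(y'_m|x))\big((f_1(y'_m|x)-z_m)+(f_2(y'_m|x)-z_m)\big)$. Both $f_1,f_2\in\F$ take values in $[0,c/h]$ (by the constant-envelope simplification after \cref{thm:1}), and each $z_m=K_h(y-y'_m)\in[0,\|K_h\|_\infty]\subseteq[0,c/h]$. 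Hence each factor $f_i(y'_m|x)-z_m$ lies in $[-c/h,c/h]$, so $|(f_1-z_m)+(f_2-z_m)|\leq 2c/h$ pointwise, and therefore $|\ell[f_1](x,\bar y',\bar z)-\ell[f_2](x,\bar y',\bar z)| \leq \frac{2c}{h}\cdot\frac1M\sum_m |f_1(y'_m|x)-f_2(y'_m|x)|$.

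The second step is to square this and integrate against $\Ph$. By Jensen (or Cauchy–Schwarz) applied to the average over $m$, $\big(\frac1M\sum_m|f_1(y'_m|x)-f_2(y'_m|x)|\big)^2 \leq \frac1M\sum_m (f_1(y'_m|x)-f_2(y'_m|x))^2$. Now the key observation is that under $\Ph$, the marginal law of $(X,Y'_m)$ is exactly $\PX\otimes\Unif[0,1]=\Q$ for each $m$ (the $Y'_m$ are i.i.d.\ uniform and independent of $X$), so $\Ph\big[(f_1(Y'_m|X)-f_2(Y'_m|X))^2\big] = \Q(f_1-f_2)^2$ for every $m$. Averaging over $m$ still gives $\Q(f_1-f_2)^2$. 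Thus $\Ph(\ell[f_1]-\ell[f_2])^2 \leq \big(\tfrac{2c}{h}\big)^2\,\Q(f_1-f_2)^2 = \tfrac{4c^2}{h^2}\,\Q(f_1-f_2)^2$. Finally, for $f_1,f_2\in\F(\delta)$, \cref{lemma:D_upper_bound} gives $\Q(f_i-\projection)^2\leq\delta$, so by the triangle inequality in $L^2(\Q)$, $\|f_1-f_2\|_{L^2(\Q)}\leq 2\sqrt\delta$, i.e.\ $\Q(f_1-f_2)^2\leq 4\delta$. Combining, $D_h(\delta)\leq \tfrac{16c^2}{h^2}\,\delta$.

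The third step handles the $\flat$-transform: $D_h^\flat(\delta) = \sup_{\sigma\geq\delta} D_h(\sigma)/\sigma \leq \sup_{\sigma\geq\delta} \tfrac{16c^2}{h^2}\sigma/\sigma = \tfrac{16c^2}{h^2}$, which is exactly the claimed bound. This last step is immediate once the linear-in-$\delta$ bound on $D_h$ is established, so no work remains.

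I expect the only subtle point — more bookkeeping than genuine obstacle — to be the very first reduction: correctly identifying the uniform bound $c/h$ on all of $f_1,f_2$ and the $z_m$ so that the ``sum of two factors'' term is controlled by $2c/h$, rather than naively by something like $4c/h$. One must also be careful that the envelope simplification ($F_h\equiv c/h$) is exactly the one used in the statement of \cref{thm:1}, so that the constant $16$ (and not a larger one) comes out; using the sharper envelope $F_h$ would give a correspondingly sharper, distribution-dependent bound, but the stated proposition only claims the $16c^2/h^2$ version, which the argument above delivers cleanly. Everything else — Jensen over the $m$-average, the identification of the $(X,Y'_m)$-marginal with $\Q$, the triangle inequality via \cref{lemma:D_upper_bound}, and the trivial $\flat$-transform step — is routine.
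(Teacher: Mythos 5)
Your proof is correct and follows essentially the same route as the paper's: the difference-of-squares factorization with the $2c/h$ bound on the sum of residuals, Jensen over the $m$-average combined with the identification of the $(X,Y'_m)$-marginal with $\Q$, \cref{lemma:D_upper_bound} plus the triangle inequality to get $\Q(f_1-f_2)^2\leq 4\delta$, and the immediate $\flat$-transform step. Your write-up is in fact slightly more explicit than the paper's (which leaves the marginal identification and the factorization implicit), but there is no substantive difference.
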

\begin{proof}
    Note that for all $x\in\X$, $\bar{y}\in[0,1]^M$, $\bar{z}\in[0,c/h]^M$, and $f_1, f_2\in\F(\delta)$,
    \begin{align*}
        (\ell[f_1](x,\bar{y},\bar{z})-\ell[f_2](x,\bar{y},\bar{z}))^2
        &\leq \left(\frac{1}{M}\sum_{m=1}^M \left|\left(z_m-f_1(y_m|x)\right)^2 - \left(z_m-f_2(y_m|x)\right)^2\right|\right)^2\\
        &\leq \left(2\frac{c}{h}\times\frac{1}{M}\sum_{m=1}^M\left|(f_1-f_2)(y_m|x)\right|\right)^2,
    \end{align*}
    since each of the squared differences in the absolute value of the first line is bounded by $(c/h)^2$.
    Note that the squared distance between any two functions in $\F$ is at most $4\delta$ by the triangle inequality, since $\F(\delta)$ is a subset of $\{f\in\F\colon \Q(f-\projection)^2\leq\delta\}$ by \cref{lemma:D_upper_bound}.
    It follows that by convexity (second inequality) and the said squared triangle inequality in $\F(\delta)$ (third inequality),
    \begin{align*}
        \Ph(\ell[f_1]-\ell[f_2])^2\leq 4\frac{c^2}{h^2}\Q(f_1-f_2)^2
        &\leq 4\frac{c^2}{h^2}\sup_{f_1,f_2\in\F(\delta)}\Q(f_1-f_2)^2\leq 16\frac{c^2}{h^2}\delta.
    \end{align*}
    From the definitions of $D_h(\delta)$ and the $\flat$-transform, it follows directly that $D_h^\flat(\delta)\leq16\frac{c^2}{h^2}$.
\end{proof}
Given the bound on $D_h^\flat(\delta)$ from \cref{prop:diameter_bound}, we return to defining the second part of $\sigma_n^t$.
Let $s_n^t$ be the unique solution in $\delta$ to 
\begin{align*}
    \sqrt{16\frac{c^2}{h^2}}\sqrt{\frac{t}{n\delta}} + \frac{t}{n\delta} = 1/8,
\end{align*}
given by 
\begin{align}
    \frac{t}{n}\left(\sqrt{\left(\frac{2c}{h}\right)^2+\frac{1}{8}}-\frac{2c}{h}\right)^{-2}.
    \label{eq:s_t_solution}
\end{align}
Since both terms in the right-hand side expression of \cref{eq:Vnt} are decreasing in $\delta$, it straightforwardly holds that $\sigma_n^t \leq \phisharp(1/8) + s_n^t$.
Plugging this into \cref{eq:proof_part1}, we obtain
\begin{align}
    \Ph(\ER(\estimator)\geq \phisharp(1/8) + s_n^t) \leq e^{1-t}.
    \label{eq:concentration_ineq_too_complicated}
\end{align}
In the following steps we aim to express $\phisharp$ in terms of the empirical distribution using the geometry of the function class $\F$.
\noindent

\subsection{Symmetrization}\label{symmetrization}
The object $\phisharp$ still depends on the unknown distribution $\Ph$ via \cref{eq:phideltahat}.
Our goal in this step is to express the difference between empirical and theoretical distribution solely in terms of the empirical distribution using Rademacher sums. These can then be upper bounded in terms of the complexity of the class $\F$.
For any $\delta>0$, we introduce the set of differences
\begin{align*}
    \G(\delta) \coloneq \left\{f-\projection\colon f\in\F(\delta)\right\}.
\end{align*}
Let $\epsilon_{11}, \dots, \epsilon_{nM}$ be independent Rademacher random variables drawn independently of $\Phn$. 
For any $g \in \G(\delta)$, we define the Rademacher sum
\begin{align*}
    R_{h,n}g\coloneq \frac{1}{nM} \sum_{i=1}^n\sum_{m=1}^M \epsilon_{im}g(Y'_{im}|X_i),
\end{align*}
and denote its supremum analogously to \cref{eq:loss_difference_approx_error} as
\begin{align*}
    \|R_{h,n}\|_{\G(\delta)} = \underset{g\in \G(\delta)}{\sup}|R_{h,n}g|.
\end{align*}
We now relate first $\phi_{h,n}(\delta)$ and then $\phisharp(\delta)$, whose definitions involve $\F(\delta)$, to $\|\Radsum\|_{\G(\delta)}$, whose definition involves $\G(\delta)$. 
Let henceforth $\E_{\Ph,\bfeps}$ be the expectation operator with respect to the product of $(\Ph)^{\otimes n}$ and $\Rad(1/2)^{\otimes nM}$.
By the standard symmetrization argument \parencite[see e.g. the rightmost inequality of the particular case of][Theorem 2.1]{koltchinskii2011oracle}, we have that
\begin{align}
    \phi_{h,n}(\delta) \leq 2 \E_{\Ph,\bfeps}\left[\sup_{f_1,f_2\in\F(\delta)}\left|\frac{1}{n}\sum_{i=1}^n\epsilon_{i1}(\ell[f_1]-\ell[f_2])(X_i,\bar{Y}_i',\bar{Z}_i)\right|\right].
    \label{eq:first_rademacher}
\end{align}
The absolute value can be dropped. Then, using $\ell[\projection]$ as a pivot, and the fact that Rademacher law is symmetric around zero, \cref{eq:first_rademacher} is equivalent to 
\begin{align}
    \phi_{h,n}(\delta) \leq 4 \E_{\Ph,\bfeps}\left[\sup_{f\in\F(\delta)}\left(\frac{1}{n}\sum_{i=1}^n\epsilon_{i1}(\ell[f]-\ell[\projection])(X_i,\bar{Y}_i',\bar{Z}_i)\right)\right].
    \label{eq:second_rademacher}
\end{align}
For any $f\in\F$, $\ell[f](X_i,\bar{Y}_i',\bar{Z}_i)$ can be seen as the value of the averaging function $u\mapsto h(u)=M^{-1}\sum_{m=1}^Mu_m$ at $\bar{\ell}[f](X_i,\bar{Y}_i',\bar{Z}_i)$ whose $m$-th component is $(f(Y'_{im}|X_i)-Z_{im})^2$.
\cite[Corollary 1 in][]{maurer2016vector} provides a version of the contraction inequality \parencite[see e.g.][Theorem 2.2]{koltchinskii2011oracle} for this case.
The function $h$ is $M^{-1/2}$-Lipschitz, that is, $|h(u)-h(u')|\leq M^{-1/2}d_2(u,u')$ for all $u,u'\in\mathbb{R}$ and with $d_2$ the Euclidean distance on $\mathbb{R}^M$. Thus, by the corollary, \cref{eq:second_rademacher} yields
\begin{align}
    \phi_{h,n}(\delta) &\leq 4\E_{\Ph,\bfeps}\left[\sup_{f\in\F(\delta)}\left(\frac{1}{n}\sum_{i=1}^n\epsilon_{i1}h\left((\bar{\ell}[f]-\bar{\ell}[\projection])(X_i,\bar{Y}'_i,\bar{Z}_i)\right)\right)\right]\nonumber\\
    &\leq \frac{4\sqrt{2}}{\sqrt{M}}\E_{\Ph,\bfeps} \Bigg[\sup_{f\in\F(\delta)}\Bigg(\frac{1}{n}\sum_{i=1}^n\sum_{m=1}^M\epsilon_{im}\Big[(f(Y'_{im}|X_i)-Z_{im})^2-(\projection(Y'_{im}|X_i)-Z_{im})^2\Big]\Bigg)\Bigg]
    \label{eq:by_corollary}.
\end{align}
Note that the $M^{-1/2}$ term here is why our final result does not capture an influence of $M$.
The expression in square brackets in \cref{eq:by_corollary} is of the form $(a-x)^2-(b-x)^2$
and can be expressed of in terms of the function $\phi_{b,x}\colon t\mapsto(t+b-x)^2-(b-x)^2$ over $[-c/h,c/h]$ if one chooses $t=a-b$.
For any $b, x\in[0,c/h]$ the function $\phi_{b,x}$ satisfies $\phi_{b,x}(0)=0$ and
\begin{align*}
    |\phi_{b,x}(u)-\phi_{b,x}(v)| = |(u-v)[(u-(x-b)) + (v-(x-b))]|
    \leq L |u-v|
\end{align*}
with $L=4c/h$, since $u,v\in[-c/h,c/h]$.
It follows that the function $u\mapsto \phi_{b,x}(u)h/(4c)$ is a contraction.
Thus, by the aforementioned contraction inequality, \cref{eq:by_corollary} is itself smaller than 
\begin{align*}
    \frac{32\sqrt{2}}{\sqrt{M}}\frac{c}{h}\E_{\Ph,\bfeps}\left[\sup_{f\in\F(\delta)}\left(\frac{1}{n}\sum_{i=1}^n\sum_{m=1}^M\epsilon_{im}\left[f(Y_{im}'|X_i)-\projection(Y_{im}'|X_i)\right]\right)\right]
    =\frac{32c\sqrt{2M}}{h}\E_{\Ph,\bfeps}{\|\Radsum\|}_{\G(\delta)}.
\end{align*}
It follows that
\begin{align}
    \phiflat(\delta)\leq \frac{32c\sqrt{2M}}{h}\sup_{\sigma\geq\delta}\frac{\E_{\Ph,\bfeps}\|\Radsum\|_{\G(\sigma)}}{\sigma}.
    \label{eq:phiflat}
\end{align}
This puts us in a position where we can upper bound $\phisharp$ by upper bounding the Rademacher sum for a given complexity of $\F$.
\subsection{Bounding the excess risk with respect to the complexity of \texorpdfstring{$\F$}{\textit{F}}.}\label{complexity}
On the basis of the covering number assumption from \cref{eq:covering_number_condition}, we adapt \textcite[Theorem~3.12]{koltchinskii2011oracle} to our case in order to obtain a simpler bound on the excess risk.

In the previous \cref{symmetrization} we rely on the laws $\Ph$ and $\Phn$ on $\X\times[0,1]^M\times\mathbb{R}_+^M$.
The functions in $\G(\delta)$, like those in $\F(\delta)$ are defined on $\X\times[0,1]$.
We therefore introduce $\QnM$, the empirical law that puts mass $1/(nM)$ on each $(X_i,Y_{im}')$.
The law $\QnM$ is a marginal law over $\X\times[0,1]$ derived from $\Phn$, just like how $\Q$ is derived from $\Ph$.
In view of \cref{eq:covering_number_condition} we then capture the richness \parencite[see][Section 3.4]{koltchinskii2011oracle} of the class $\G(\delta)$ as
\begin{align}
    \sigma_n^2 \coloneq \sup_{g\in\G(\delta)}\QnM g^2.
    \label{eq:sigma_n}
\end{align}
The covering number condition in \cref{eq:covering_number_condition} also holds for $\G(\delta)$, which is also uniformly absolutely bounded by $c/h$ for any $\delta$. 
We then have by the version in \textcite[][Theorem 3.11]{koltchinskii2011oracle} of Dudley's entropy integral \parencite[see][]{dudley2014uniform} that
\begin{align}
    \E_{\Ph,\bfeps}\|R_{h,n}\|_{\G(\delta)} &\lesssim \frac{1}{\sqrt{nM}}\E_{\Ph}\int_0^{2\sigma_n}\sqrt{\log N(\varepsilon,\F,L^2(\Phn))}\dd\varepsilon \nonumber\\
    &= \frac{1}{\sqrt{nM}}\E_{\Ph}\int_0^{2\sigma_n}\sqrt{\log N(\varepsilon,\F,L^2(\QnM))}\dd\varepsilon \nonumber\\ 
    &\lesssim \frac{1}{\sqrt{nM}} \E_{\Ph}\int_0^{2\sigma_n} \left(\frac{\NFQn}{\varepsilon}\right)^\rho \dd\varepsilon,
    \label{eq:covering_bound}
\end{align}
where the last step uses \cref{eq:covering_number_condition}.
We aim to upper bound the integral in \cref{eq:covering_bound}.
By change of variable with $u\coloneq\varepsilon/\NFQn$,
\begin{align*}
    \int_0^{2\sigma_n}\left(\frac{\NFQn}{\varepsilon}\right)^\rho \dd\varepsilon = \NFQn\int_0^{2\sigma_n/\|F_h\|_{L^2(\QnM)}}u^{-\rho}\dd u.
\end{align*}
Importantly we have that for all $f\in\F$ and any law $Q$ on $\X\times[0,1]$, it holds that $Qf^2 \leq QF^2$, hence we can upper bound the previous equality by
\begin{align}
    \NFQn\int_0^2 u^{-\rho}\dd u =  \NFQn\frac{2^{1-\rho}}{1-\rho}.
    \label{eq:rad_upp_bound}
\end{align}
Without loss of generality, we can assume that $0\leq F_h\leq c/h$. Therefore,
\begin{align*}
    \E_{\Ph,\bfeps}\|R_{h,n}\|_{\G(\delta)} \lesssim \frac{1}{\sqrt{nM}}\frac{c}{h}\frac{2^{1-\rho}}{1-\rho}.
\end{align*}

\subsection{Combining the Parts}\label{combining}

Making use of \cref{eq:rad_upp_bound} we can upper bound \cref{eq:phiflat} and write
\begin{align*}
    \phiflat(\delta) \lesssim
    \frac{c\sqrt{M}}{h\delta}\frac{1}{\sqrt{nM}}\frac{c}{h}\frac{2^{1-\rho}}{1-\rho}
    = \frac{c^2}{h^2\delta\sqrt{n}}\frac{2^{1-\rho}}{1-\rho}.
\end{align*}
Note that here one can see how the effect of $M$ cancels out.
This bound is strictly decreasing in $\delta$. Thus, using the definition of $\phisharp(1/8)$ in \cref{eq:phisharp}, it suffices to find $\delta$ such that 
\begin{align*}
    \frac{c^2}{h^2\delta\sqrt{n}}\frac{2^{1-\rho}}{1-\rho} = \frac{1}{8}
\end{align*}
to reveal that 
\begin{align}
    \phisharp \lesssim \frac{c^2 2^{1-\rho}}{h^2\sqrt{n}(1-\rho)}.
    \label{eq:phisharp_value}
\end{align}
Using \cref{eq:s_t_solution} we get our result in \cref{eq:theorem}.
\section{Approximating the Identity}\label{app:ApproximateIdentity}
For a formal definition of the approximate identity introduced in \cref{sec:method}, first recall that the convolution of two Lebesgue integrable functions $g$ and $h$ in $L^1(\mathbb{R})$ is defined by
\begin{align*}
    (g*h)(y)\coloneq\int g(y-y')h(y')\dd y'.
\end{align*}
Now recall that a collection $\{K_h \colon h>0\}\subset L^1(\mathbb{R})$ of real-valued Lebesgue-integrable functions satisfying
\begin{itemize}
    \item[(i)] $\forall \; h>0, \int K_h(t) \dd t=1$,
    \item[(ii)] $\sup_{h>0} \int |K_h(t)| \dd t < \infty$,
    \item[(iii)] $\forall\varepsilon>0,\lim \sup_{h\to 0} \int K_h(t) \mathbbm{1}\{|t|>\varepsilon\} \dd t = 0$,  %
\end{itemize}
is called an approximate identity.
For instance, if $K_1$ is the standard Gaussian density and $K_h(\cdot)\coloneq h^{-1}K_1(\cdot/h)$, then (i) and (ii) are obviously met, as well as (iii) by the dominated convergence theorem.
The name `approximate identity' derives from the following property.

\begin{proposition}
    Let $\{K_h \colon h>0\}$ be an approximate identity.
    If $\varphi\in L^p(\mathbb{R})$ for some finite $p\geq1$, then $\lim\sup_{h\to 0}\int|(K_h*\varphi)-\varphi|^p \dd t=0$.
    Moreover, if $\varphi$ is bounded and uniformly continuous, then $\lim\sup_{h\to 0}\|(K_h*\varphi)-\varphi\|_\infty = 0$.
    \label{prop:approx_iden}
\end{proposition}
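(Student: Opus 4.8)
The statement to prove is \cref{prop:approx_iden}, the standard approximate-identity convergence result. Here is my plan.

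\medskip

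\textbf{Proof strategy for \cref{prop:approx_iden}.}

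The plan is to prove both claims by the classical three-epsilon argument, exploiting a uniform continuity or $L^p$-continuity property of translation together with the defining properties (i)--(iii) of the approximate identity. First I would record the key reduction: using property (i), $\int K_h(t)\,\dd t = 1$, so I can write the pointwise difference as
\begin{align*}
    (K_h*\varphi)(y) - \varphi(y) = \int K_h(t)\bigl(\varphi(y-t) - \varphi(y)\bigr)\,\dd t,
\end{align*}
which is the crucial identity making everything go through. For the $L^p$ claim, I would then apply Minkowski's integral inequality to bound $\|K_h*\varphi - \varphi\|_{L^p}$ by $\int |K_h(t)|\,\omega_p(t)\,\dd t$, where $\omega_p(t) \coloneq \|\varphi(\cdot - t) - \varphi\|_{L^p}$ is the $L^p$-modulus of continuity.

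Next I would invoke the standard fact that translation is continuous in $L^p(\mathbb{R})$ for $1 \le p < \infty$, i.e.\ $\omega_p(t) \to 0$ as $t \to 0$; this is typically proved by density of compactly supported continuous functions in $L^p$, and I would either cite it or sketch it. Then, given $\varepsilon > 0$, I would choose $\eta > 0$ so that $\omega_p(t) < \varepsilon$ for $|t| \le \eta$, and split the integral $\int |K_h(t)|\,\omega_p(t)\,\dd t$ into the region $|t| \le \eta$ and $|t| > \eta$. On the first region the integral is bounded by $\varepsilon \sup_{h>0}\int|K_h(t)|\,\dd t$, which is finite by (ii). On the second region I would use that $\omega_p(t) \le 2\|\varphi\|_{L^p}$ uniformly (triangle inequality plus translation invariance of Lebesgue measure), so this part is bounded by $2\|\varphi\|_{L^p}\int |K_h(t)|\mathbbm{1}\{|t|>\eta\}\,\dd t$, which tends to $0$ as $h \to 0$ by (iii). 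Combining, $\limsup_{h\to 0}\|K_h*\varphi - \varphi\|_{L^p} \le \varepsilon\sup_{h>0}\int|K_h|$, and letting $\varepsilon \to 0$ gives the first claim.

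For the second claim, with $\varphi$ bounded and uniformly continuous, I would run the same three-epsilon split but now with the sup-norm modulus of continuity $\omega_\infty(t) \coloneq \sup_y |\varphi(y-t)-\varphi(y)|$, which $\to 0$ as $t \to 0$ directly by uniform continuity (no density argument needed), and with $\omega_\infty(t) \le 2\|\varphi\|_\infty$ for the tail. The estimate $|(K_h*\varphi)(y)-\varphi(y)| \le \int|K_h(t)|\omega_\infty(t)\,\dd t$ holds uniformly in $y$, so the same bound controls the sup-norm. The main (minor) obstacle is really just being careful with property (iii): as stated it controls $\int K_h(t)\mathbbm{1}\{|t|>\varepsilon\}\,\dd t$ rather than $\int |K_h(t)|\mathbbm{1}\{|t|>\varepsilon\}\,\dd t$, so I would note that in combination with (i) and (ii) one also gets $\limsup_{h\to 0}\int |K_h(t)|\mathbbm{1}\{|t|>\varepsilon\}\,\dd t = 0$ (writing $|K_h| = K_h + 2K_h^-$ and controlling the negative part via the total mass $\int|K_h| = \int K_h + 2\int K_h^-$ together with boundedness), or simply work under the common additional convention that the $K_h$ are nonnegative as in the Gaussian example given in the text. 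Everything else is routine.
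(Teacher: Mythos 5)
Your strategy is the standard textbook proof and it is sound; note, however, that the paper deliberately does \emph{not} prove \cref{prop:approx_iden} --- it cites it as classical and instead proves only the adapted \cref{eq:transformation_h_limit}, so there is no like-for-like proof to match. The comparison is still instructive. You reduce via $\int K_h=1$ to an integral of $K_h(t)$ against the modulus of continuity $\omega_p(t)=\|\varphi(\cdot-t)-\varphi\|_{L^p}$ using Minkowski's integral inequality, then split near/far from the origin using (ii) and (iii) plus $L^p$-continuity of translation. The paper's \cref{eq:transformation_h_limit} does the analogous thing in $L^2$ but replaces Minkowski by the Cauchy--Schwarz bound $\left(\int K_h[\cdots]\,\dd t\right)^2\leq\int|K_h|\,\dd t\times\int|K_h|[\cdots]^2\,\dd t$, cites Tsybakov's Lemma A.2 for the continuity of translation, and adds a dominated-convergence step over $x$ that has no counterpart in your setting; the two arguments are morally identical, and yours is the cleaner route for the proposition as stated. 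One caveat: your repair of property (iii) for signed kernels does not quite close. Writing $|K_h|=K_h+2K_h^-$ reduces the problem to showing $\int K_h^-(t)\mathbbm{1}\{|t|>\eta\}\,\dd t\to0$, and the identity $\int|K_h|=1+2\int K_h^-$ together with (ii) only gives \emph{boundedness} of $\int K_h^-$, not decay of its tail; so for genuinely signed kernels the stated (iii) is too weak and the proposition can fail. Your fallback --- reading (iii) with $|K_h|$ or assuming $K_h\geq0$ --- is the right resolution, and is in fact what the paper itself implicitly does: its proof of \cref{eq:transformation_h_limit} invokes $\int|K_h(t)|\mathbbm{1}\{|t|\geq t_x\}\,\dd t\leq\varepsilon$ as if (iii) were stated with an absolute value. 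With that reading fixed, your proof is complete.
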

\noindent
Rather than recalling the full proof of \cref{prop:approx_iden}, let us give the simpler proof of the following result adapted to our case.
\begin{lemma}
    If $\fstar(\cdot|x)\in L^2([0,1])$ for (almost) every $x\in\X$ and if $\sup_{x\in\X}\|f(\cdot|x)\|_2<\infty$, then \mbox{$\lim\sup_{h\to0}\|\transformation(\cdot|\cdot)-\fstar(\cdot|\cdot)\|_2=0$}.
    \label{eq:transformation_h_limit}
\end{lemma}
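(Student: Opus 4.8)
The plan is to first identify $\transformation$ explicitly. Since $\transformation$ minimizes $\Ph\ell[f]$ over all of $L^2(\Q)$, and since the loss decomposes coordinate-wise as $\ell[f](x,\bar{y}',\bar{z}) = \tfrac1M\sum_m (f(y_m'|x)-z_m)^2$, the minimizer is the conditional expectation: for $\Q$-almost every $(x,y')$, $\transformation(y'|x) = \E[Z_{h,m}\mid X=x, Y_m'=y']$. Because $Y_m'$ is independent of $(X,Y)$ and drawn uniformly, conditioning on $Y_m' = y'$ does not alter the law of $(X,Y)$, so this expectation equals $\E[K_h(Y-y')\mid X=x] = \int_0^1 K_h(y-y')\fstar(y|x)\,\dd y = (K_h * \fstar(\cdot|x))(y')$, where the convolution is understood with $\fstar(\cdot|x)$ extended by zero outside $[0,1]$. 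The first step of the write-up is therefore to justify this identification carefully, noting the extension-by-zero convention and that the envelope/boundedness assumptions ($h\|f\|_\infty \le c$, $\fstar(\cdot|x)\in L^2$) make everything integrable.

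The second step is to invoke the $L^2$ part of \cref{prop:approx_iden} pointwise in $x$: for (almost) every $x$, $\fstar(\cdot|x)\in L^2(\mathbb{R})$ (after zero-extension), so $\|K_h*\fstar(\cdot|x) - \fstar(\cdot|x)\|_{L^2(\mathbb{R})} \to 0$ as $h\to 0$. Restricting the $L^2(\mathbb{R})$ norm to $[0,1]$ only decreases it, so $\|\transformation(\cdot|x) - \fstar(\cdot|x)\|_{L^2([0,1])}\to 0$ for almost every $x$. The third step is to upgrade this almost-everywhere convergence to convergence of $\|\transformation(\cdot|\cdot)-\fstar(\cdot|\cdot)\|_2$, where the outer norm is the $L^2(\Q)$ norm, i.e. $\|\transformation-\fstar\|_2^2 = \int_\X \|\transformation(\cdot|x)-\fstar(\cdot|x)\|_{L^2([0,1])}^2\,\PX(\dd x)$. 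For this I would use dominated convergence: the integrand converges to $0$ $\PX$-a.e. by Step 2, and it is dominated by $2(\|\transformation(\cdot|x)\|_2^2 + \|\fstar(\cdot|x)\|_2^2)$; Young's inequality gives $\|K_h*\fstar(\cdot|x)\|_{L^2} \le \|K_h\|_{L^1}\|\fstar(\cdot|x)\|_{L^2} = \|\fstar(\cdot|x)\|_{L^2}$ (using property (i), assuming $K_h\ge 0$, or property (ii) for a uniform $L^1$ bound), so the integrand is bounded by $4\sup_x\|f(\cdot|x)\|_2^2 < \infty$, a constant which is $\PX$-integrable. Hence the integral tends to $0$.

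The main obstacle — really the only subtlety — is the first step: pinning down that $\transformation$ is genuinely this conditional expectation and that it coincides with the convolution despite $\transformation$ being defined as an $L^2(\Q)$ element while the convolution is defined on all of $\mathbb{R}$. One must be careful that (a) the minimizer over $L^2(\Q)$ of $\Ph\ell[\cdot]$ is well-defined and given by the regression function $\transformation(y'|x)=\E[Z_{h,m}\mid X=x,Y_m'=y']$ (this uses that each summand of $\ell$ depends on $f$ only through the value at $(x,y_m')$ and that the $Y_m'$ are i.i.d. uniform, so all $M$ coordinates give the same conditional-expectation constraint), and (b) the conditioning step uses independence of $\bar{Y}'$ from $(X,Y)$. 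A secondary minor point is the zero-extension convention needed to make "$K_h*\fstar(\cdot|x)$" literally match the integral $\int_{[0,1]}K_h(y-y')\fstar(y|x)\,\dd y$ appearing in \cref{eq:key_idea_theoretical}; this is harmless since on $[0,1]$ the two agree, and it is exactly the form required to apply \cref{prop:approx_iden}. Everything after Step 1 is routine: pointwise approximate-identity convergence plus dominated convergence.
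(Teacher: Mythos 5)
Your proposal is correct and follows essentially the same route as the paper: identify $\transformation(\cdot|x)$ fiberwise as the convolution $K_h * \fstar(\cdot|x)$ (with the zero-extension convention), obtain $L^2([0,1])$ convergence for (almost) every fixed $x$ as $h\to 0$, and conclude with dominated convergence over $x$ using a uniform bound of the form $\sup_x\|\fstar(\cdot|x)\|_2^2$ times a constant depending on $\sup_h\|K_h\|_{L^1}$. The only difference is cosmetic: where you invoke \cref{prop:approx_iden} as a black box for each fixed $x$, the paper re-derives exactly that fiberwise statement (via Cauchy--Schwarz, Fubini, and continuity of translation in $L^2$) before applying the same dominated-convergence step.
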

\begin{proof}
    Set $h>0$ and observe that, for (almost all) $(x,y)\in\X\times[0,1]$,
    \begin{align*}
        \transformation(y|x) &= \E_{\Ph}\left(K_h(Y-Y')|Y'=y,X=x\right)\\
        &= \int_{[0,1]}K_h(\gamma-y)\fstar(\gamma|x) \dd \gamma\\
        &= \int K_h(t)\fstar(t+y|x)\dd t,
    \end{align*}
    with the convention that $\fstar(\gamma|x)=0$ for all $\gamma\notin[0,1]$.
    Therefore, using $\int K_h(t) \dd t=1$ (second equality), the Cauchy-Schwarz inequality (the inequality), and Fubini's theorem (third equality) yields
    \begin{align}
        \|\transformation(\cdot|\cdot)-\fstar(\cdot|\cdot)\|_2^2 &=
        \int_{\X}\int_{[0,1]}\left(\int K_h(t)\fstar(t+y|x)\dd t-\fstar(y|x)\right)^2 \fstar(x)\dd x \dd y\nonumber\\
        &=\int_{\X}\int_{[0,1]}\left(\int K_h(t)\left[\fstar(t+y|x)-\fstar(y|x)\right]\dd t\right)^2\fstar(x)\dd x\dd y\nonumber\\
        &\leq \int|K_h(t)|\dd t\times\int_{\X}\int_{[0,1]}\left(\int |K_h(t)|\left[\fstar(t+y|x)-\fstar(y|x)\right]^2\dd t\right)\fstar(x)\dd x\dd y\nonumber\\
        &=\int|K_h(t)|\dd t\times\int_{\X}\left(\int|K_h(t)|\times\|\tau_{t}\fstar(\cdot|x)-\fstar(\cdot|x)\|_2^2\dd t\right)\fstar(x)\dd x,
        \label{eq:norm_diff_transformed}
    \end{align}
    where $\tau_{t}\fstar\colon y\mapsto \fstar(t+y)$ and $\|\cdot\|_2$ denotes the $L^2(\mathbb{R})$-norm.

    Set arbitrarily $\varepsilon>0$ and $x\in\X$. Since $\fstar(\cdot|x)\in L^2([0,1])$, the triangle inequality yields
    \begin{align*}
        \|\tau_{t}\fstar(\cdot|x)-\fstar(\cdot|x)\|_2 \leq 2\|\fstar(\cdot|x)\|_2 \leq 2\sup_{x\in\X}\|\fstar(\cdot|x)\|_2<\infty
    \end{align*}
    for all $t\in\mathbb{R}$.
    Moreover, by \textcite[Lemma A.2]{Tsybakov2009}, there exists $t_x>0$ such that $\|\tau_{t}\fstar(\cdot|x)-\fstar(\cdot|x)\|_2^2\leq\varepsilon$ for all $|t|<t_x$. Then, by the definition of an approximate identity, there exists $h_x>0$ such that $0<h\leq h_x$ implies $\int|K_h(t)|\mathbbm{1}\{|t|\geq t_x\}\dd t\leq\varepsilon$.
    Consequently if $0<h\leq h_x$, then
    \begin{align*}
        \int|K_h(t)|\times\|\tau_t\fstar(\cdot|x)-\fstar(\cdot|x)\|_2^2\dd t
        &= \int|K_h(t)|\left(\mathbbm{1}\{|t|\geq t_x\} + \mathbbm{1}\{|t|<t_x\}\right)\times\|\tau_t\fstar(\cdot|x)-\fstar(\cdot|x)\|_2^2\dd t\\
        &\leq\left(4\|\fstar(\cdot|x)\|_2^2 + \sup_{h>0}\int|K_h(t)|\dd t\right)\varepsilon.
    \end{align*}
    Therefore, $x\mapsto \left(\int|K_h(t)|\times\|\tau_{t}\fstar(\cdot|x)-\fstar(\cdot|x)\|_2^2\dd t\right)\fstar(x)$ converges pointwise to zero as $h$ goes to zero. Because it is also upper-bounded by an integrable function independent of $h$ (a constant times $x\mapsto\fstar(x)$), the dominated convergence theorem guarantees that the RHS integral in \cref{eq:norm_diff_transformed} goes to zero as $h$ goes to zero, hence the result.
\end{proof}
\section{Hyperparameters}\label{app:hyper}
This section provides an overview over the implementation and hyperparameters of the \condensite methods as well as the other methods from the literature.
We specify any values set explicitly by us, for the remaining hyperparameters we keep the default values of the various implementations.
In the synthetic data (\cref{sec:synthetic}) and CPS (\cref{sec:IPUMS}) experiments, empirical ISE results are computed using a grid of $500$ points, for the satellite image data (\cref{sec:AGB}) we use only $30$ points to reduce the memory footprint. Note that any further data splitting within the methods described below are secondary splits of the 80\% training data resulting from the primary splits as described in the main text.

\subsection{\condensite}\label{app:hyper_condensite}

\paragraph{\condensiteNN.}
For \condensiteNN we use a fully connected
neural network with $5$ hidden linear layers, batch normalization \parencite{Ioffe2015}, and sigmoid-weighted linear unit activations \parencites[known as \textit{SiLU} or \textit{swish}, they were originally proposed in][]{hendrycks2016gelu}[and found to perform well empirically in][]{elfwing2018sigmoid,ramachandran2017searching}. 
Other common activation functions such as \textit{ReLU} \parencite{glorot2011deep} also work, but we think it is instructive to show how the predictor architecture can be used to shape the inductive bias of our method, in this case in favor of smoothness.
We use a batch size of $1024$ and $20$ neurons per hidden layer.
We train using the \textit{Adam} \parencite{kingma2015adam} optimizer with a learning rate of $10^{-3}$ and weight decay \parencite[see][]{loshchilov2019decoupled} of $10^{-4}$.
We use $80\%$ of the samples for training, and the remaining $20\%$ for cross-validation. For training, our objective is the mean squared error stated in \cref{eq:key_idea_empirical}. For cross-validation, we evaluate using the ISE as stated in \cref{eq:ISE}.
We train for at most $20$ epochs and stop the training if the best previous best cross-validation performance has not been exceeded for $5$ rounds.

\paragraph{\condensiteTree}
For \condensiteTree we use a \texttt{LightGBM}\footnote{https://pypi.org/project/lightgbm/4.6.0/} gradient boosted tree. 
As with \condensiteNN, we choose $h=0.01$ and $M=100$ and perform a $80\%$ to $20\%$ train and cross-validation split. We set the minimal number of data points per leaf to $50$, the number of bins to $40$ (the same as for \condensier and \lincde), and the number of leaves to $40$.

\paragraph{\condensiteCNN.}
This \condensite version consists of a CNN encoder and fully connected head, connected by skip connections. The CNN contains three convolutional layers with kernel size $3$, batch normalization, and max pooling with kernel size $2$.
The inputs are images with three channels, where the third channel consists in the value of the $Y'_{im}$ of the data point in question.
We then perform global average pooling on the activations of each convolutional layer, concatenate them with one another and again with the $Y'_{im}$ feature, and feed the result to the head.
The head itself consists of three hidden layers, the first with $128$ neurons, the subsequent ones with $64$.
For training, we use the same hyperparameters as for \condensiteNN, but reduce the batch size to $128$ for memory reasons.

\subsection{Other Methods}\label{app:hyper_literature}

\paragraph{\lincde.}
We mostly retain the default hyperparameters of the \lincde implementation\footnote{https://github.com/ZijunGao/LinCDE}, but change the number of trees from the default of $100$ to $200$, and the depth parameter for the individual trees from $1$ to $3$ so the expressivity of the estimator is not limited too severely. 

\paragraph{\drf.} 
We use the Python version of \drf with the default parameters of the implementation\footnote{https://github.com/lorismichel/drf} except for increasing the minimal node size parameter to $50$ to prevent overfitting, since we work with large datasets (the default value is $15$).
\drf returns an empirical probability mass function, so we apply Gaussian smoothing to obtain a density. We choose the bandwidth using Siverman's rule of thumb \parencite[see e.g.][]{silverman2018density}.

\paragraph{\flexcodeTree.}
This version of \flexcode uses an XGBoost\footnote{https://pypi.org/project/xgboost/3.0.2/} regressor  \parencite{Chen2016XGBoost} and a cosine basis system with 31 bases.

\paragraph{\flexcodeCNN.}
This version of \flexcode uses the same combination of CNN and fully connected head as \condensiteCNN as regressor for the coefficients of a cosine basis system with 31 bases, with the only difference being that here there are no $Y'_{im}$ features, and hence there is no third input channel. 
We use the same optimizer and training parameters as for \condensiteCNN.

\paragraph{\condensier.}
We choose the equal length binning method referred to as default in the documentation of the implementation\footnote{https://github.com/osofr/condensier}.
We increase the number of bins to $40$ (the default is $20$) to account for the complex nature of our datasets, since equal width bins put a hard limit on the expressivity of the method ($40$ bins is also the default value used by \lincde).

\paragraph{Feature extractor.}
In the image application presented in \cref{sec:AGB}, we use a feature extractor to transfer the two-channel $100\times 100$ image patches into $176$ features for \condensiteTree, \lincde, \drf, and \condensier.
The basis of the feature extractor is the same neural network also used for \flexcodeCNN.
Within each of the aforementioned methods, it is trained to predict AGB labels on the fraction of the training data used for fitting the model.
The features extracted for a given input image are the global average pooled activations of the convolutional layers which serve as input to the fully connected head.

\section{Proof of Concept and Hyperparameter Analysis (Additional Figures)}
This section provides additional figures and details complementing \cref{sec:synthetic} of the main text. 

\subsection{Synthetic Data -- Landmark Analysis}\label{app:landmarks}

This landmark analysis augments the ISE comparison presented in \cref{sec:PoC}.
We inspect the conditional densities fitted by the different methods at three different landmarks for each mechanism.
For the single relevant covariate setting we vary only the relevant covariate. For the data on manifold setting we vary the angle between the first two covariates. For the non-sparse data setting, we set all covariates to a specific value.
The landmarks are stated in \cref{tab:landmarks}. All unspecified covariates are fixed at zero.
\begin{table}[H]
    \centering
    \renewcommand{\arraystretch}{1.2}
    \begin{tabular}{l|lll}
        \textbf{Setting} & \textbf{Landmark 1} & \textbf{Landmark 2} & \textbf{Landmark 3}\\
        \hline
        Single relevant covariate & $x^{(1)}=0$ & $x^{(1)}=0.5$ & $x^{(1)}=1$\\
        \hline
        Data on manifold & $x^{(1)}=\cos\frac{2\pi}{6}$ & $x^{(1)}=\cos\pi$ & $x^{(1)}=\cos\frac{10\pi}{6}$\\
        & $x^{(2)}=\sin\frac{2\pi}{6}$ & $x^{(2)}=\sin\pi$ & $x^{(2)}=\sin\frac{10\pi}{6}$\\
        \hline
        Non-sparse data & $x=(-1,\dots,-1)$& $x=(0,\dots,0)$ & $x=(1, \dots, 1)$
    \end{tabular}
    \caption{Landmark values for each data-generating process to visualize the fitted estimators.}
    \label{tab:landmarks}
\end{table}
\noindent
\cref{fig:synth_lm_all} shows the conditional densities estimates for each method and landmark, and a histogram of $1000$ samples from the true density for comparison.
Overall, we observe good performance for both \condensite methods.
The data on manifold and non-sparse data settings prove somewhat challenging for \condensiteTree, whereas \condensiteNN gives a close fit everywhere.
\lincde performs well in most settings but struggles somewhat with the magnitude of the shifts along the horizontal axis for the non-sparse data.
In this respect it is similar to \condensiteTree, although it provides a smoother fit.
Similarly to the \condensite methods, \drf provides an excellent visual fit on all single relevant covariate landmarks, but underestimates the magnitude of the shifts in some of the other settings.
\flexcode fails to adapt to the heteroskedasticity of the single relevant covariate landmarks, and struggles in the data on manifold setting.
Its estimates also exhibit noticeable spurious peaks in multiple instances.
A larger number of bases might give it greater flexibility to adapt to concentrated densities, but would likely also incur even more severe spurious peaks.
Lastly, \condensier consistently estimates the location of the mode well, although it under-smooths rather severely in the non-sparse data setting.
In summary, the examples confirm that both versions of \condensite, as well as the other methods from the literature, capture the essential tendencies of the conditional distributions in our synthetic data examples correctly.
\begin{figure}[H]
    \centering
    \includegraphics[width=.9\linewidth]{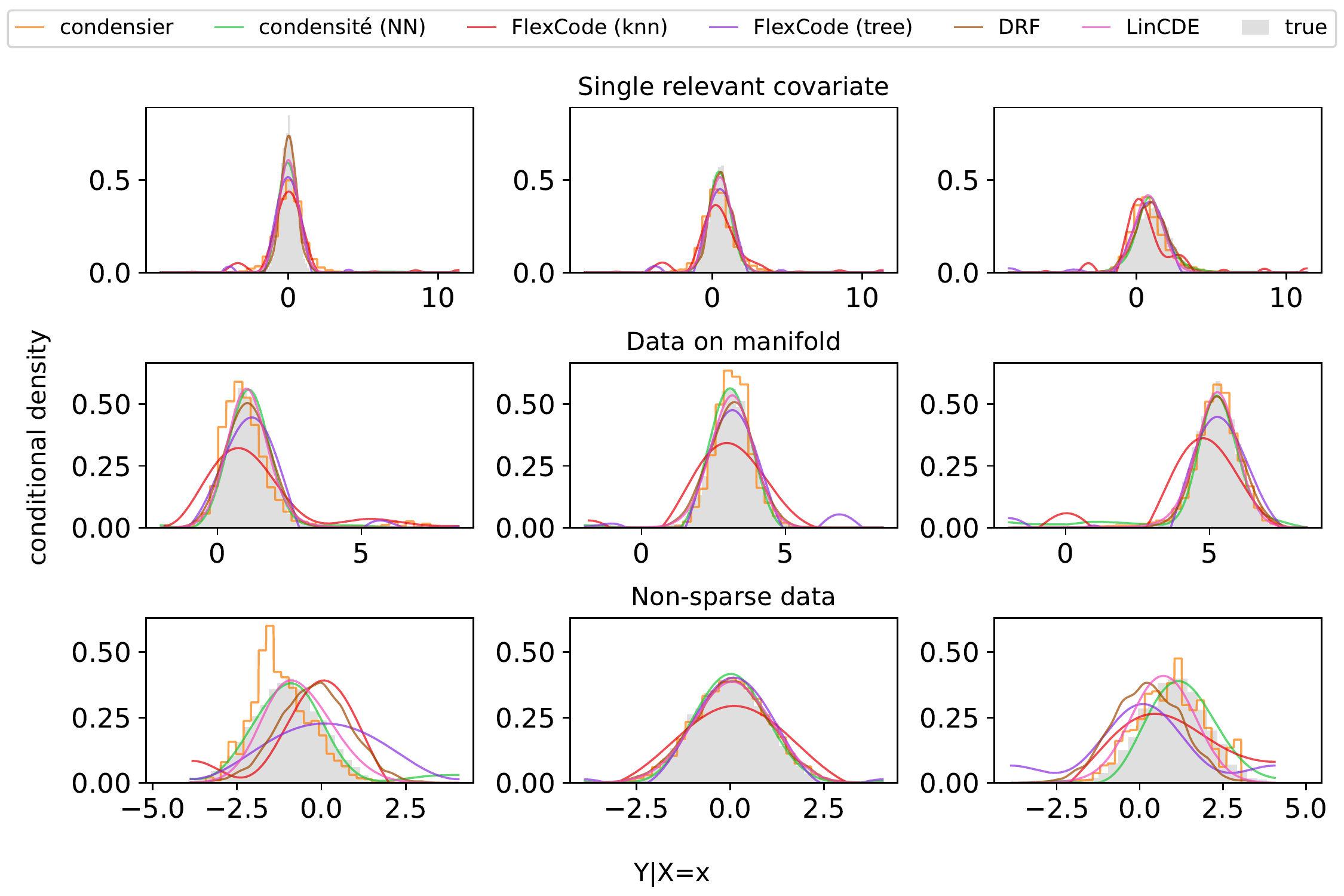}
    \caption{Comparison of the estimates to a sample from the true conditional density on a grid of points.}
    \label{fig:synth_lm_all}
\end{figure}

\subsection{The Effect of \texorpdfstring{$M$}{\textit{M}} and Its Interaction With \texorpdfstring{$h$}{\textit{h}} -- {\condensiteTree}}\label{app:h_M_interplay}
\cref{fig:h_M_interplay_Tree} below shows the experiment exploring the interplay of $M$ and $h$ for \condensiteTree. It complements \cref{fig:h_M_interplay_NN} in \cref{sec:h_M_interplay} of the main text.
\begin{figure}[H]
    \centering
    \begin{subfigure}{.32\linewidth}
        \centering
        \includegraphics[width=\linewidth]{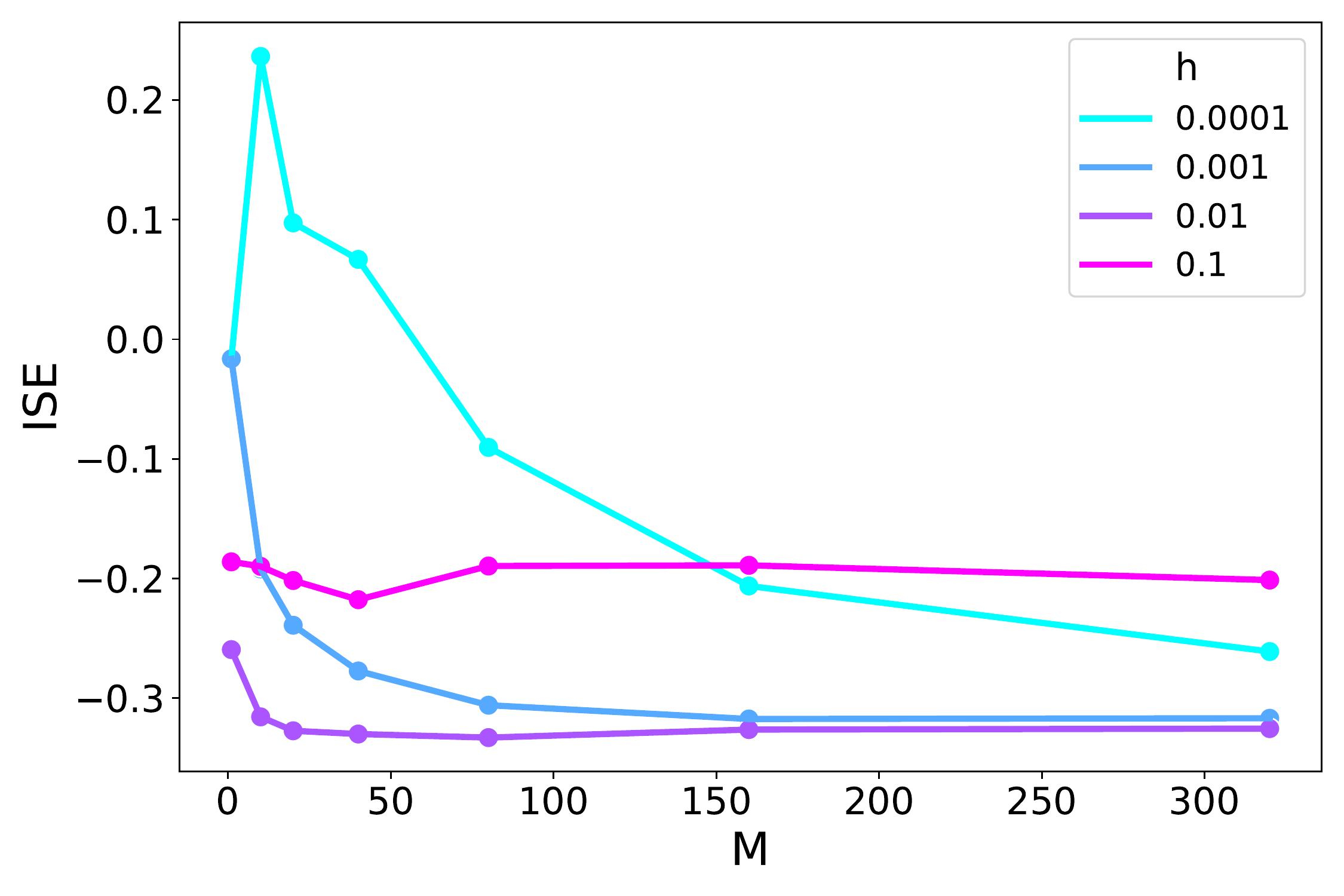}
        \caption{Single relevant covariate.}
    \end{subfigure}
    \begin{subfigure}{.32\linewidth}
        \centering
        \includegraphics[width=\linewidth]{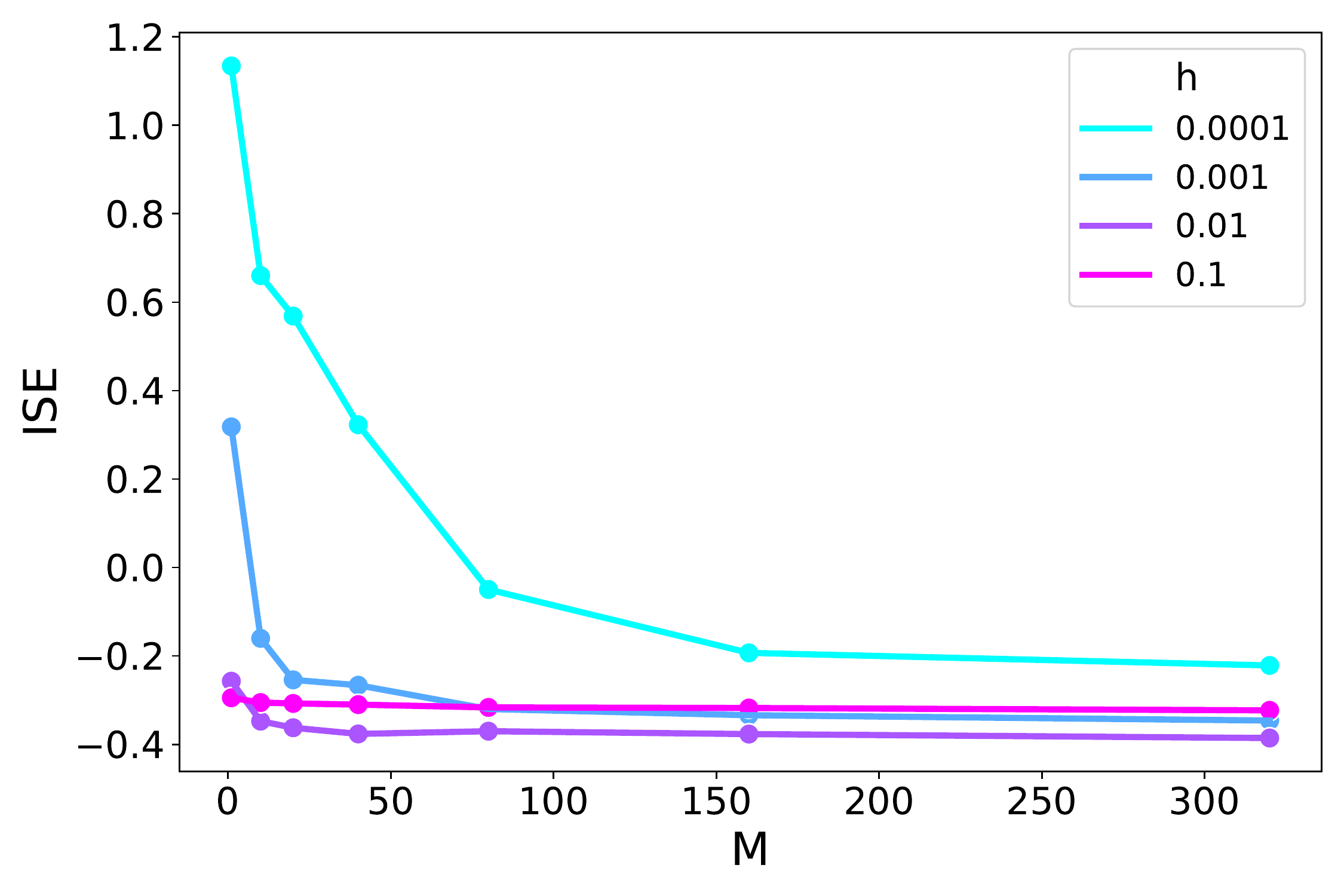}
        \caption{Data on manifold.}
    \end{subfigure}
    \begin{subfigure}{.32\linewidth}
        \centering
        \includegraphics[width=\linewidth]{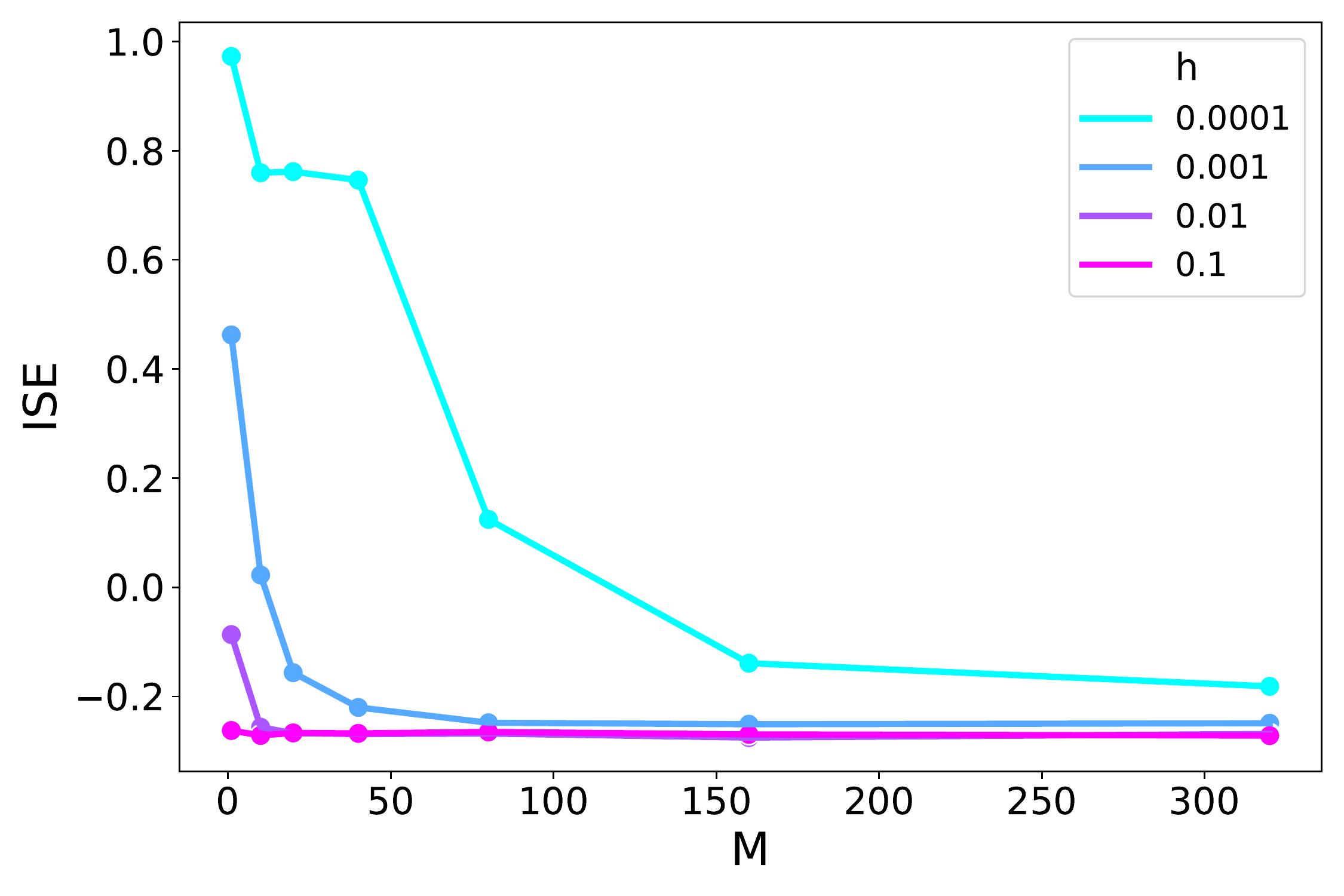}
        \caption{Non-sparse data.}
    \end{subfigure}
    \caption{\condensiteTree: ISE for different $M$ and $h$ on the synthetic data-generating mechanisms.}
    \label{fig:h_M_interplay_Tree}
\end{figure}

\section{The IPUMS-CPS Dataset (Preprocessing and Additional Figures)}\label{app:IPUMS}

This section contains details on our compilation of the CPS dataset, as well as a landmark analysis for the other methods from the literature.

\subsection{Dataset and Preprocessing}\label{app:ipums_pp}

For the evaluation of our methods on real-world data in \cref{sec:real_world} of the main text, we use CPS records from 2024 for which the Annual Social and Economic Supplements (ASEC) are available, meaning from March 2024.
We create a dataset based on the variable selection outlined in \cref{tab:ipums_vars}.
\begin{table}[H]
    \centering
    \begin{tabular}{ll}
        CBSASZ & (population size of household location area)\\
        AGE & (age at last birthday) \\
        SEX & (sex)\\
        NCHILD & (number of own children in household)\\
        YRIMMIG & (year of immigration into the US)\\
        NATIVITY & (foreign birthplace or parentage)\\
        LABFORCE & (labor force status)\\
        UHRSWORKT & (hours usually worked per week at all jobs)\\
        RACE & (race)\\
        EMPSTAT &  (employment status) \\
        CLASSWKR & (class of worker) \\
        EDUC & (educational attainment) \\
        INCTOT & (total personal income)
    \end{tabular}
    \caption{IPUMS-CPS variable selection.}
    \label{tab:ipums_vars}
\end{table}
\noindent
We perform the following preprocessing steps:
\begin{enumerate}
    \item keep only observations that are part of the ASEC,
    \item drop all observations with
    \begin{itemize}
        \item INCTOT negative, unknown, or in excess of $300000\$$,
        \item unknown or below secondary school education,
        \item unknown LABFORCE or NATIVITY,
    \end{itemize}
    \item re-code SEX and LABFORCE to $\{0,1\}$ dummy variables,
    \item one-hot encode RACE and EMPSTAT by the first digit,
    \item one-hot encode CLASSWKR,
    \item map EDUC to years of education,
    \item create separate dummy for UHRSWORKT code $997$ (varying hours), and set values $997$ and $999$ (not in universe) to $0$.
\end{enumerate}
This leaves us with a total of $113104$ observations of $26$ covariates (not including INCTOT).
The variables AGE, NCHILD, YRIMMIG, NATIVITY, UHRSWORKT, and EDUC\_YEARS (derived from EDUC) are multi-valued, with the remaining ones being binary.
All variables are encoded numerically.
Importantly, whenever we draw a sample for learning or evaluation, we resample with replacement weighted by ASECWT (the ASEC weighting factor) to obtain a representative sample.

\subsection{Landmark Analysis for Realistic Use-Cases (Other Methods)}\label{app:ipums_landmarks}

\cref{fig:skill_prem_lit} and \cref{fig:lm_metro_lit} below show the landmark analyses for the other methods from the literature corresponding respectively to \cref{fig:skill_prem} and \cref{fig:lm_metro} in the main text.
\drf and \condensier fit areas of high density well, but seemingly at the cost of overfitting other areas, resulting in high-variance estimates.
\lincde provides a smooth fit but does not manage to fit highly concentrated densities.
Despite performing worse in ISE than \condensiteCNN, \condensiteTree, and \flexcodeTree, the other methods from the literature exhibit the same general trends in line with the manually constructed local empirical density.
\vskip-.75em
\begin{figure}[H]
    \centering
    \includegraphics[width=\linewidth]{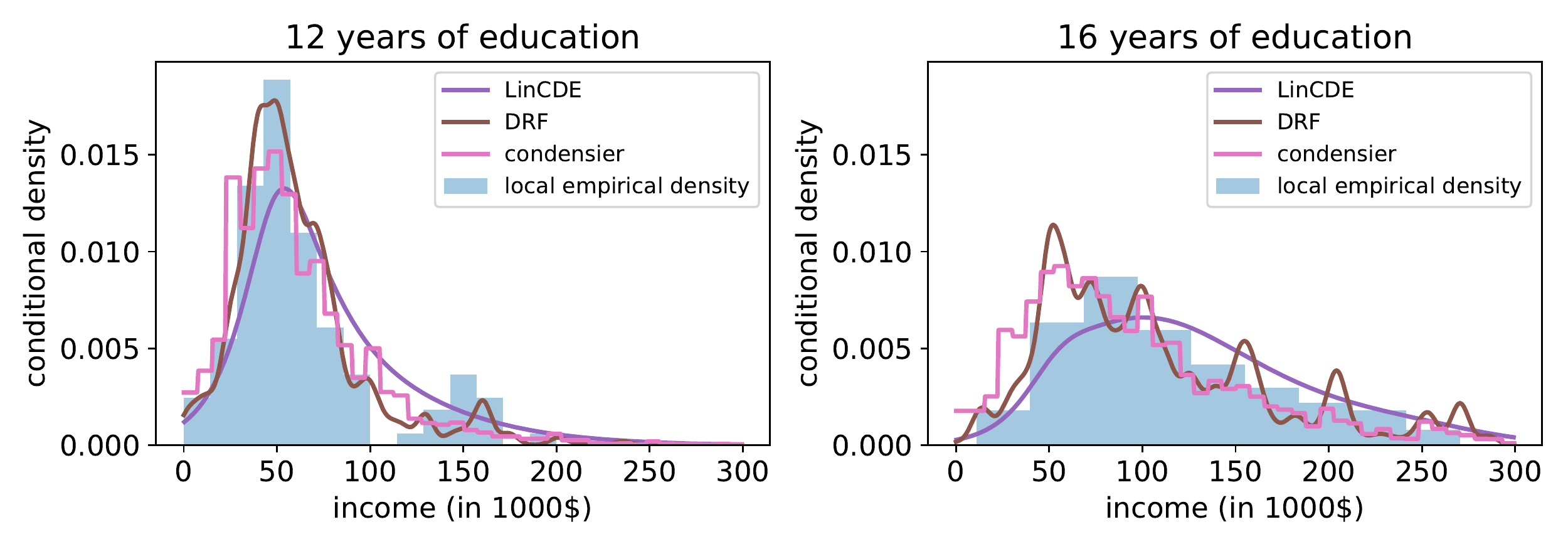}
    \vskip-1em
    \caption{Conditional income density for 12 and 16 years of education with otherwise identical covariates.}
    \label{fig:skill_prem_lit}
\end{figure}
\vskip-1.25em
\begin{figure}[H]
    \centering
    \includegraphics[width=\linewidth]{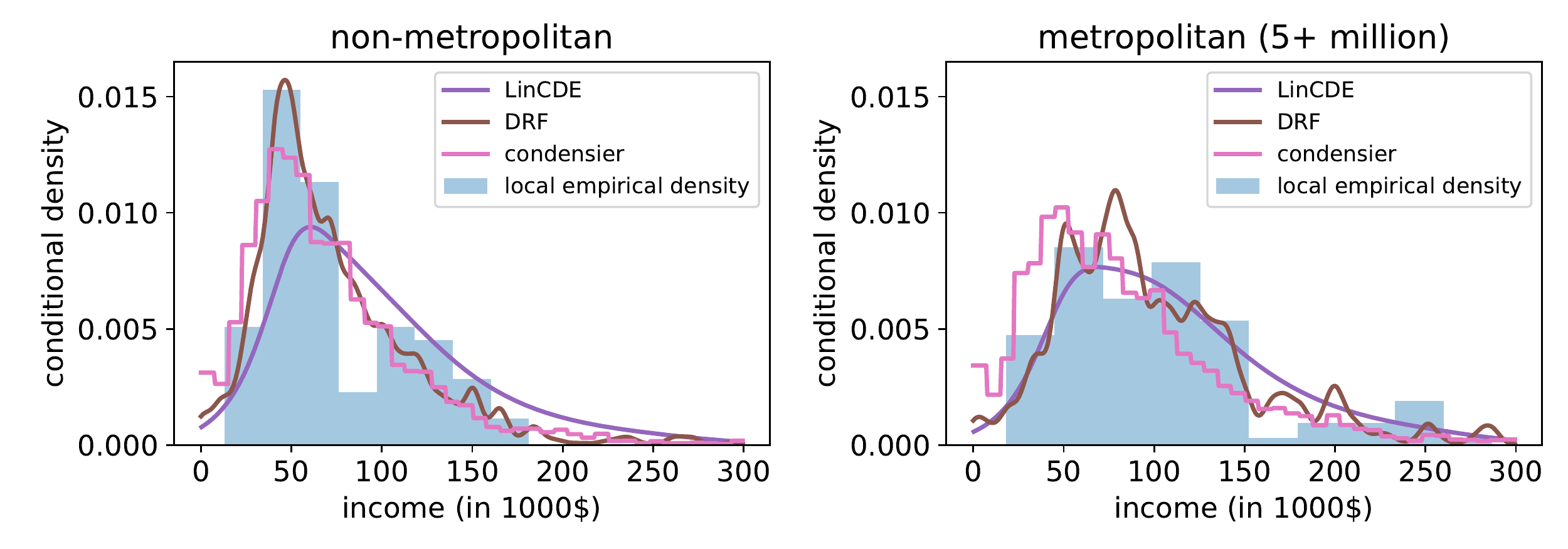}
    \vskip-1em
    \caption{Conditional income density for metropolitan and non-metropolitan inhabitants with otherwise identical covariates.}
    \label{fig:lm_metro_lit}
\end{figure}

\section{ESA ICC Satellite Imaging Data (Preprocessing and Additional Figures)}

\subsection{Dataset and Preprocessing}\label{app:ESA_pp}
The following are details on the dataset and additional results for the AGB satellite image application presented in \cref{sec:AGB}.

\paragraph{Label data.}
AGB is a measure of biomass density, defined as dry weight of live wood per unit area.
We derive our labels from the biomass dataset provided by the ESA Biomass Climate Change Initiative \parencite{esacci2025AGB}.
This dataset provides labels computed from satellite images of different wavelengths by a theoretically motivated and empirically calibrated algorithm. 
Each label corresponds to a specific geographic area of $100\times100$ meters.
The AGB measurements provided are given in megagrams (tons) per hectare and constitute yearly averages.
Our experiment sets out to predict biomass measurements from satellite images of the type used for the creation of the \cite{esacci2025AGB} dataset, essentially expanding upon the role of the theoretical model used in the original study,
which provides mean estimates, not conditional densities.
To this end we choose the two rectangular coordinate regions given in \cref{tab:coordinate_rectangle}, one for training, and the other for testing out-of-sample.
\begin{table}[H]
    \centering
    \begin{tabular}{lcccc}
         & north-west & north-east & south-east & south-west\\
        \textbf{training region} & $[132.5, -14.65]$ & $[133.75, -14.65]$ & $[133.75, -15.65]$ & $[132.5, -15.65]$\\
        \textbf{test region} & $[132.75,  -13.25]$ & $[134.0, -13.25]$ & $[134.0, -14.25]$ & $[132.75,  -14.25]$ \\
    \end{tabular}
    \caption{Coordinate rectangle.}
    \label{tab:coordinate_rectangle}
\end{table}
\noindent
Both of these are in the Australian Northern Territory. The training patch covers an area to the south-east of Kakadu national park, and the test patch an area to the south-east of the town Katherine.
The location is chosen for its low biomass, which can be captured by C-band imaging, and little seasonal change in AGB.

\paragraph{Feature data.}
To obtain features for each label patch, we retrieve a ground range detected Sentinel-1A satellite image taken in interferometric wide swath mode from the GEODES portal \parencite{geodes2025}.
We choose the images in \cref{tab:sat_images}, which contain the corresponding patches described in \cref{tab:coordinate_rectangle}.
They have been taken on August 15th 2020, that is during the dry season where foliage is low and C-band measurements are most informative. We ensure cloud cover is less than 5\%.
We work with an ortho-rectified version of the images as can be obtained through the GEODES portal, and use the \textit{VV} and \textit{VH} bands.
The resolution of the images is $10\times 10$ meters, their IDs are given in \cref{tab:sat_images}.
\begin{table}[H]
    \centering
    \begin{tabular}{ll}
        \textbf{training} & S1A\_IW\_GRDH\_1SDV\_20200815T204041\_20200815T204106\_033922\_03EF62\_FF6E\\
        \textbf{test} & S1A\_IW\_GRDH\_1SDV\_20200815T204106\_20200815T204131\_033922\_03EF62\_BB0B
    \end{tabular}
    \caption{IDs of the Sentinel-1A satellite images used to construct our features.}
    \label{tab:sat_images}
\end{table}
The images are shown in \cref{fig:training_feat_img} and \cref{fig:test_feat_img} for illustration; note that our preprocessing is minimal and not aimed at visualization.
\begin{figure}[H]
    \centering
    \begin{subfigure}{.49\linewidth}
        \centering
        \includegraphics[width=.8\linewidth]{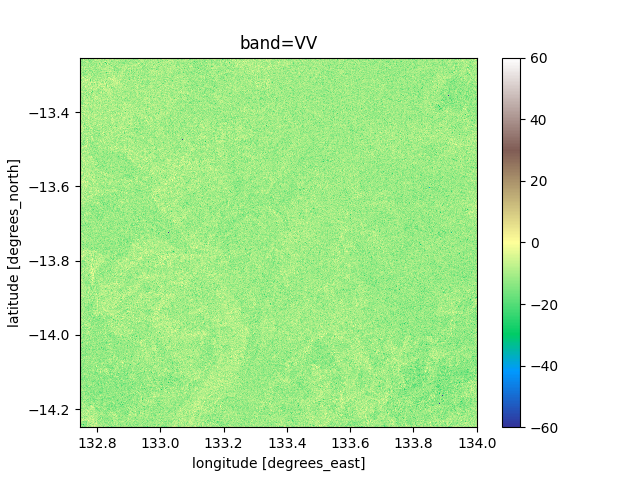}
    \end{subfigure}
        \begin{subfigure}{.49\linewidth}
        \centering
        \includegraphics[width=.8\linewidth]{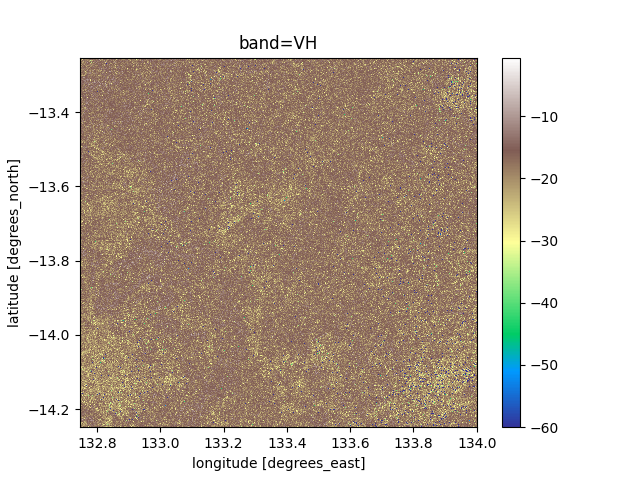}
    \end{subfigure}
    \caption{\textbf{Training region.}  Images (in decibel) used as the basis for our feature patches.}
    \label{fig:training_feat_img}
\end{figure}
\begin{figure}[H]
    \begin{subfigure}{.49\linewidth}
        \centering
        \includegraphics[width=.8\linewidth]{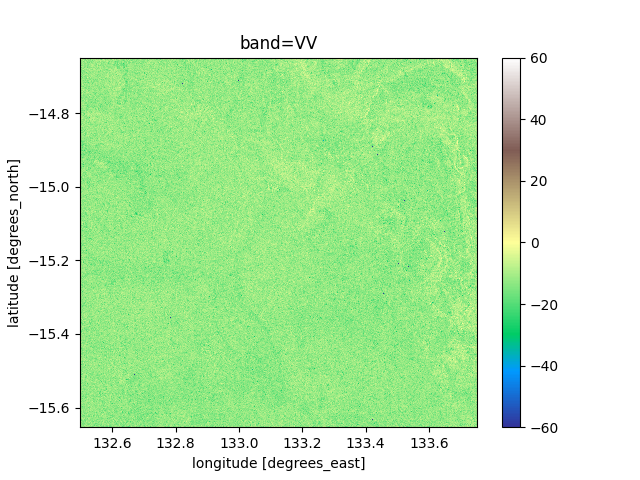}
    \end{subfigure}
        \begin{subfigure}{.49\linewidth}
        \centering
        \includegraphics[width=.8\linewidth]{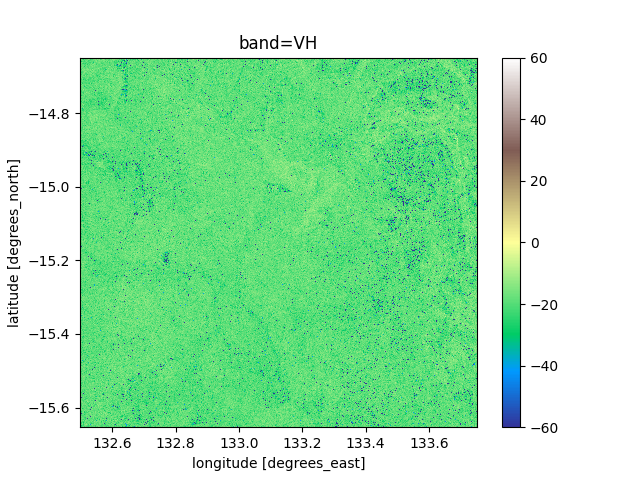}
    \end{subfigure}
    \caption{\textbf{Test region.} Images (in decibel) used as the basis for feature patches.}
    \label{fig:test_feat_img}
\end{figure}

\paragraph{Compiling our dataset.}
To compile our dataset, we perform the following steps:
\begin{enumerate}
    \item fuse AGB labels into square kilometer patches by taking the average,
    \item log-transform target AGB values by $x\mapsto \log(1+x)$ to avoid dominance of near-zero values,
    \item log-transform Sentinel-1A image pixels by $x\mapsto 10\log_{10}(x)$ to obtain values in decibel,
    \item identify the corresponding $100\times100$ pixel satellite image patch for each fused AGB label by reprojecting into EPSG:3577 (drop all patches with missing data).
\end{enumerate}
\noindent
After the first step, each AGB label corresponds to a one square kilometer patch, which makes for a $100\times 100$ pixel patch in the satellite images. 
This results in $14653$ observations for the training region, and $14683$ observations for the test region.
Each observation is a $100\times100$ image with two channels, \textit{VV} and \textit{VH}.

\newpage
\subsection{Visualization and Qualitative Assessment (Training Region)}\label{app:viz_AGB}
\cref{fig:panel_train} below contains the training region counterparts to the test region plots in \cref{fig:panel_test}.
\begin{figure}[H]
    \centering
    \begin{subfigure}{\linewidth}
        \centering
        \includegraphics[width=\linewidth,trim={0 1.4cm 0 .95cm},clip]{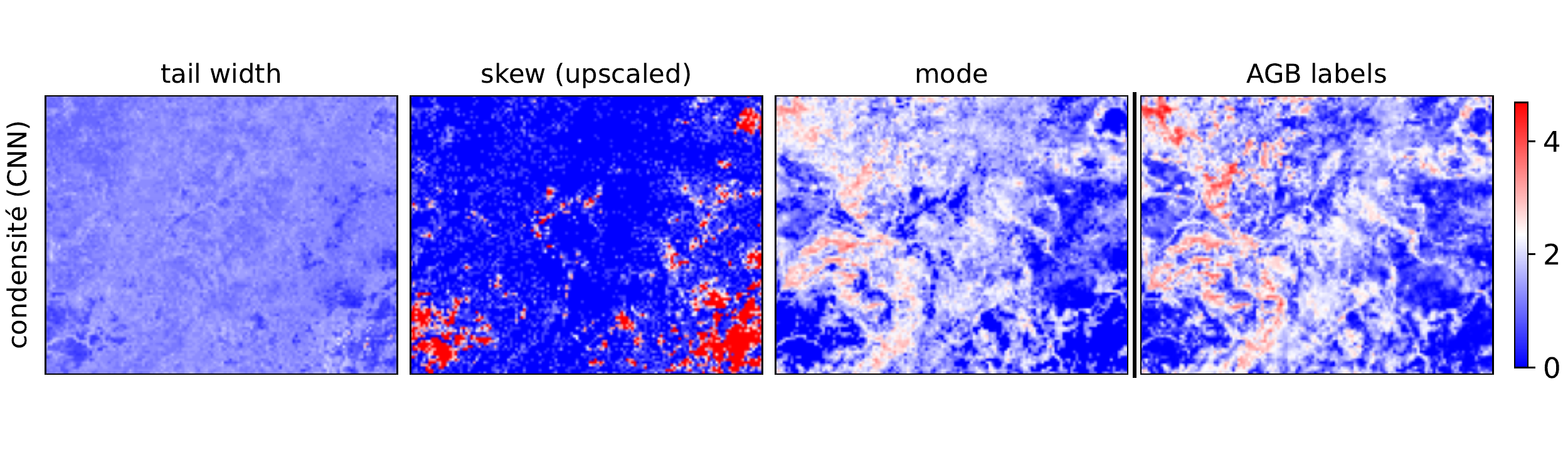}
    \end{subfigure}
        \begin{subfigure}{\linewidth}
        \centering
        \includegraphics[width=\linewidth,trim={0 1.5cm 0 1.5cm},clip]{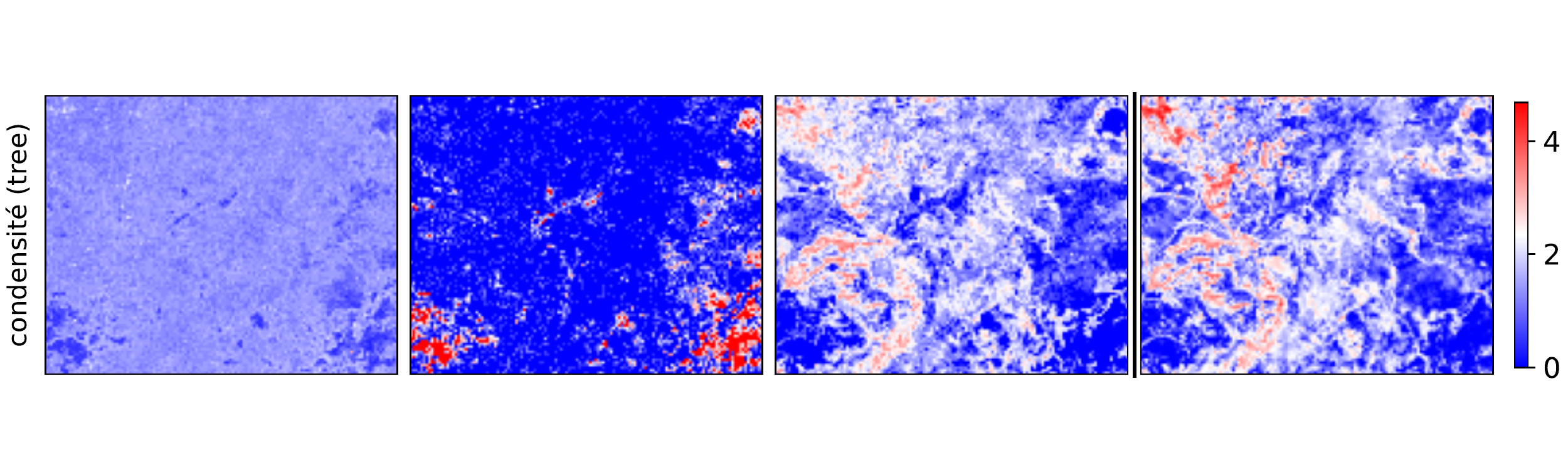}
    \end{subfigure}
        \begin{subfigure}{\linewidth}
        \centering
        \includegraphics[width=\linewidth,trim={0 1.5cm 0 1.5cm},clip]{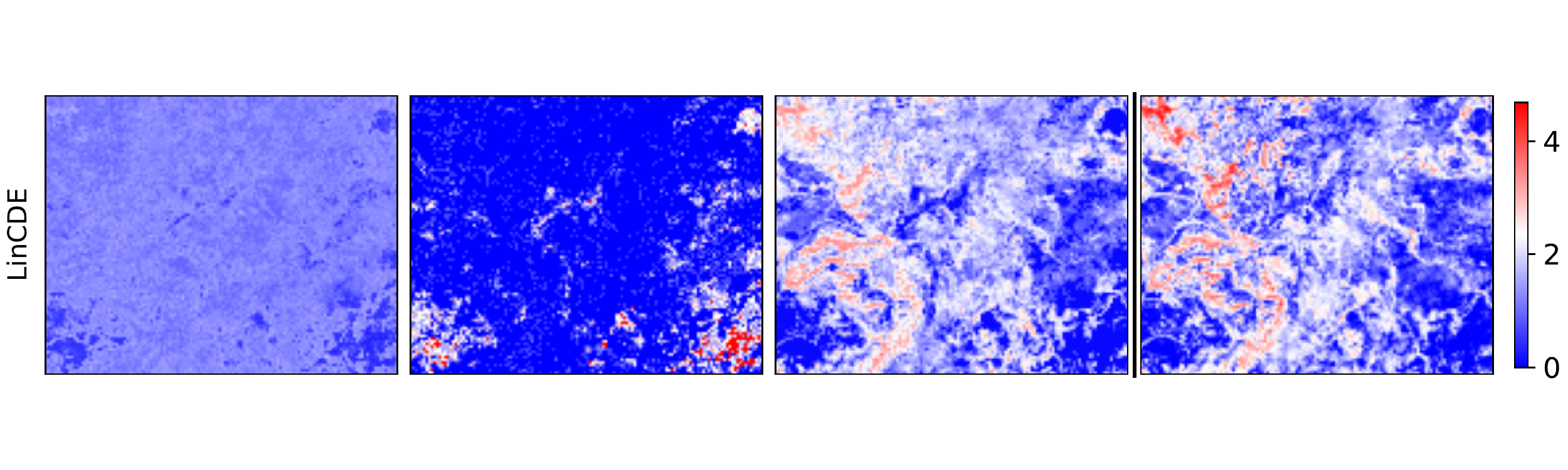}
    \end{subfigure}
    \begin{subfigure}{\linewidth}
        \centering
        \includegraphics[width=\linewidth,trim={0 1.5cm 0 1.5cm},clip]{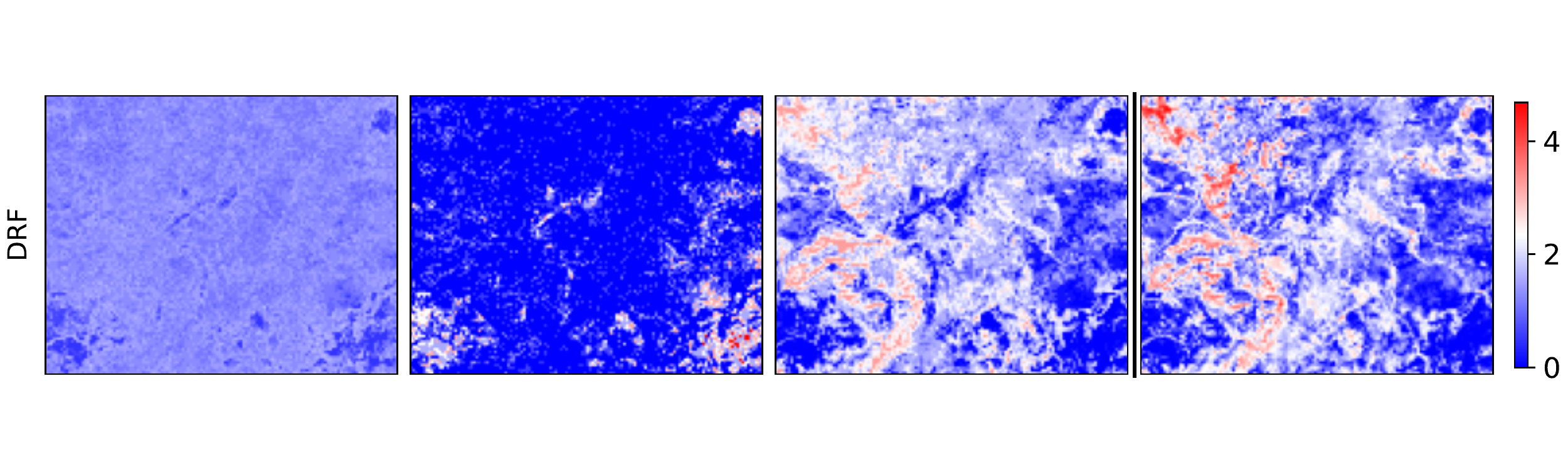}
    \end{subfigure}
        \begin{subfigure}{\linewidth}
        \centering
        \includegraphics[width=\linewidth,trim={0 1.5cm 0 1.5cm},clip]{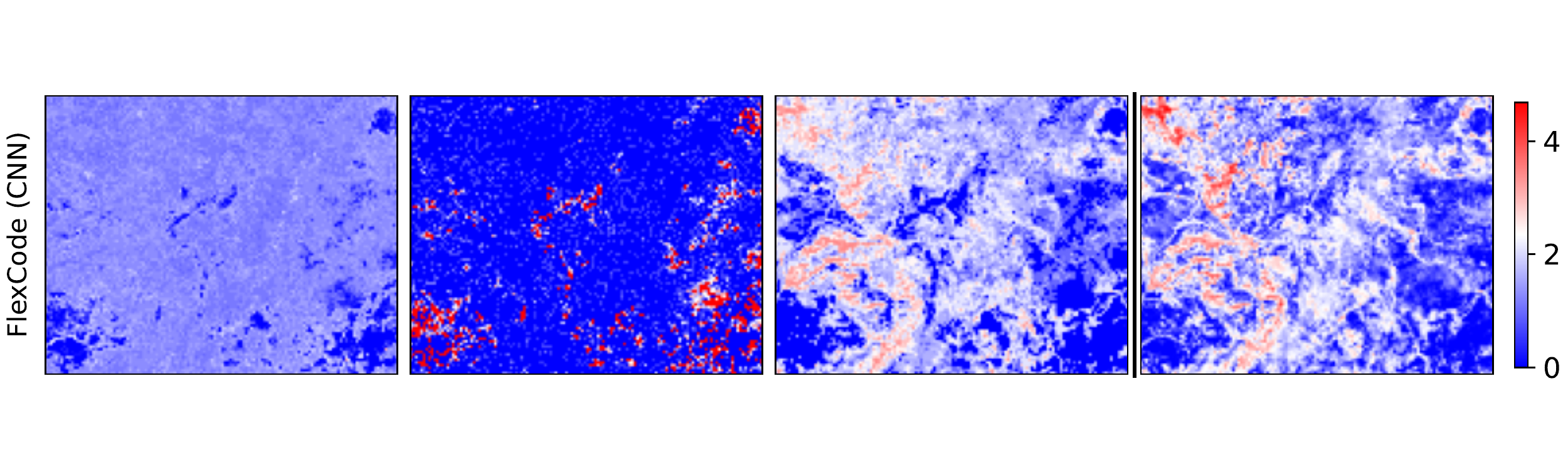}
    \end{subfigure}
    \begin{subfigure}{\linewidth}
        \centering
        \includegraphics[width=\linewidth,trim={0 1.5cm 0 1.5cm},clip]{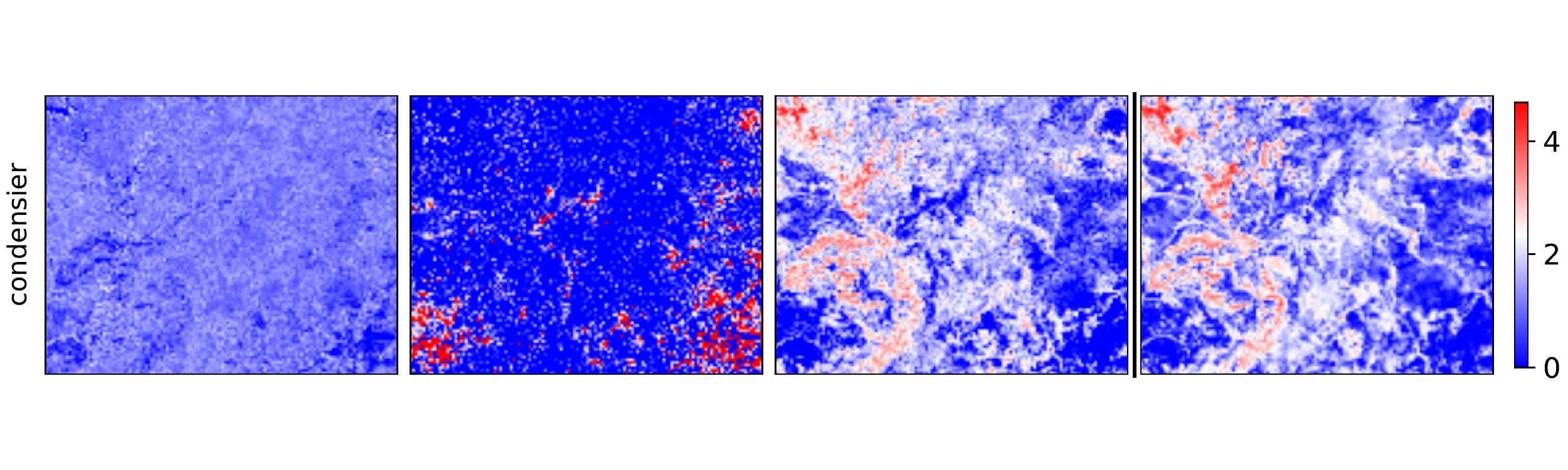}
    \end{subfigure}
    \caption{\textbf{Training region.} Visualization of method estimate summary statistics and labels.}
    \label{fig:panel_train}
\end{figure}

\end{document}